\documentclass{article}

    \usepackage[final]{neurips_2025}

\usepackage[utf8]{inputenc} %
\usepackage[T1]{fontenc}    %
\usepackage{hyperref}       %
\usepackage{url}            %
\usepackage{booktabs}       %
\usepackage{amsfonts}       %
\usepackage{nicefrac}       %
\usepackage{microtype}      %
\usepackage{xcolor}         %

\usepackage{graphicx}
\usepackage{adjustbox}

\usepackage{amsmath}
\usepackage{amsthm}
\usepackage{cleveref}
\usepackage{tikz}
\usetikzlibrary{decorations.pathreplacing}

\usepackage{algorithm}
\usepackage{algorithmicx}
\usepackage{algpseudocode}
\usepackage{wrapfig}
\usepackage{subcaption}

\usepackage{bbm}

 \DeclareMathOperator{\diag}{diag}

\newtheorem{definition}{\textbf{Definition}}%

\newtheorem{lemma}{\textbf{Lemma}}%
\newtheorem{theorem}{\textbf{Theorem}}%

\newtheorem{proposition}{\textbf{Proposition}}[section]

\newtheorem{remark}{\textbf{Remark}}

\newcommand{\vx}{\boldsymbol{x}}

\newcommand{\ve}{\boldsymbol{e}}

\newcommand{\graph}{\mathcal{G}}
\newcommand{\vertexSet}{\mathcal{V}}
\newcommand{\edgeSet}{\mathcal{E}}

\newcommand{\thetaParam}{\boldsymbol{\theta}}

\newcommand\coconut[0]{\textsc{Coconut}}
\newcommand\rope[0]{\text{RoPE}}

\newcommand{\mQuery}{\mathbf{Q}}
\newcommand{\mKey}{\mathbf{K}}
\newcommand{\mValue}{\mathbf{V}}
\newcommand{\mOutput}{\mathbf{O}}
\newcommand{\mReadOut}{\mathbf{W_O}}
\newcommand{\mTokenEmbd}{\mathbf{U}}
\newcommand{\mI}{\mathbf{I}}
\newcommand{\mZero}{\mathbf{0}}
\newcommand{\mOne}{\mathbf{1}}
\newcommand{\mW}{\mathbf{W}}
\newcommand{\mRotation}{\mathbf{R}}

\newcommand{\attn}{\mathsf{Attn}}

\newcommand{\pe}{\mathsf{PE}}
\newcommand{\te}{\mathsf{TE}}
\newcommand{\mlp}{\mathsf{MLP}}
\newcommand{\layernorm}{\mathsf{LayerNorm}}
\newcommand{\transformer}{\mathsf{TF}}
\newcommand{\softmax}{\mathsf{SoftMax}}

\newcommand{\vq}{\mathbf{q}}
\newcommand{\vk}{\mathbf{k}}
\newcommand{\vv}{\mathbf{v}}
\newcommand{\vh}{\mathbf{h}}
\newcommand{\vu}{\mathbf{u}}
\newcommand{\vp}{\mathbf{p}}
\newcommand{\onehot}{\mathbf{e}}
\newcommand{\vlambda}{\boldsymbol{\lambda}}
\newcommand{\veps}{\boldsymbol{\varepsilon}}
\newcommand{\veta}{\boldsymbol{\eta}}

\newcommand{\content}{\mathsf{content}}
\newcommand{\buffer}{\mathsf{buffer}}
\newcommand{\pos}{\mathsf{pos}}

\newcommand{\tokenBOS}{\texttt{<s>}}

\newcommand{\tokenQUERY}{\texttt{<Q>}}
\newcommand{\tokenSource}{\texttt{s}}
\newcommand{\tokenTarget}{\texttt{t}}

\newcommand{\tokenEnd}{\texttt{c}}
\newcommand{\tokenStart}{\texttt{r}}
\newcommand{\tokenEdge}{\texttt{<e>}}
\newcommand{\tokenReasoning}{\texttt{<R>}}
\newcommand{\tokenAnswer}{\texttt{<A>}}
\newcommand{\tokenx}{\texttt{<x>}}
\newcommand{\thought}[1]{\texttt{[t\textsubscript{#1}]}}

\newcommand{\sI}{\mathcal{I}}
\newcommand{\sX}{\mathcal{X}}

\newcommand{\vocab}{\mathsf{Voc}}

\newcommand{\PosIdx}{\mathsf{Idx}}

\title{Reasoning by Superposition: A Theoretical Perspective on Chain of Continuous Thought}

\author{%
  Hanlin Zhu$^*$ \\
  UC Berkeley \\
  \texttt{hanlinzhu@berkeley.edu} 
  \And
  Shibo Hao$^*$ \\
  UCSD \\
  \texttt{s5hao@ucsd.edu}
  \And
  Zhiting Hu \\
  UCSD \\
  \texttt{zhh019@ucsd.edu}
  \And
  Jiantao Jiao \\
  UC Berkeley \\
  \texttt{jiantao@berkeley.edu}
  \And
  Stuart Russell \\
  UC Berkeley \\
  \texttt{russell@cs.berkeley.edu}
  \And
  Yuandong Tian \\ 
  Meta AI \\
  \texttt{yuandong@meta.com}
}

\begin{document}

\maketitle

\begin{abstract}
Large Language Models (LLMs) have demonstrated remarkable performance in many applications, including challenging reasoning problems via chain-of-thought (CoT) techniques that generate ``thinking tokens'' before answering the questions. While existing theoretical works demonstrate that CoT with discrete tokens boosts the capability of LLMs, recent work on continuous CoT lacks a theoretical understanding of why it outperforms discrete counterparts in various reasoning tasks, such as directed graph reachability, a fundamental graph reasoning problem that includes many practical domain applications as special cases. In this paper, we prove that a two-layer transformer with $D$ steps of continuous CoT can solve the directed graph reachability problem, where $D$ is the diameter of the graph, while the best known result of constant-depth transformers with discrete CoT requires $O(n^2)$ decoding steps where $n$ is the number of vertices ($D<n$). 
In our construction, each continuous thought vector is a superposition state that encodes multiple search frontiers simultaneously (i.e., \emph{parallel breadth-first search (BFS)}), while discrete CoT must choose a single path sampled from the superposition state, which leads to a sequential search that requires many more steps and may be trapped in local solutions.
We also performed extensive experiments to verify that our theoretical construction aligns well with the empirical solution obtained via training dynamics. Notably, encoding of multiple search frontiers as a superposition state automatically \emph{emerges} in training continuous CoT, without explicit supervision to guide the model to explore multiple paths simultaneously. Our code is available at \url{https://github.com/Ber666/reasoning-by-superposition}.
\end{abstract}

\def\thefootnote{*}\footnotetext{Equal contributions.}\def\thefootnote{\arabic{footnote}}

\section{Introductions}
\label{sec:intro}

Large language models (LLMs) have shown strong performance in many reasoning tasks, especially when empowered with chain-of-thought (CoT)~\citep{wei2022chain} (e.g., hard problems like AIME and math proving). However, they also struggle with tasks that require more sophisticated reasoning capability~\citep{kambhampati2024can}, e.g., reasoning and planning problems of increasing scales~\citep{zheng2024natural, xie2024travelplanner}, even with CoT~\citep{valmeekam2024llms,zhou2025gsm}. 

It remains an open problem how to expand existing discrete CoT to solve more complex reasoning problems. Recently, \citet{hao2024training} proposes \coconut{} (chain-of-continuous-thought) that uses continuous latent thoughts for reasoning, showing empirical performance boost on synthetic tasks such as directed graph reachability (i.e., given a specification of a directed graph and one starting node, determine which candidate destination node is reachable), as well as strong performance on real-world math reasoning benchmarks such as GSM8K~\citep{cobbe2021training}. Interestingly, \coconut{} shows preliminary results that continuous latent thought may store multiple candidate search frontiers simultaneously, before the final answer is reached. This is in sharp contrast with discrete CoT, in which each discrete thought token has to be sampled (or ``realized'') before feeding into LLMs in an autoregressive manner. However, the expressive power and the mechanism of continuous thought still remain elusive and lack a deep understanding.

In this work, we explore the mechanism of \coconut{} for the problem of \emph{graph reachability}, i.e., whether there exists a path from given start and end nodes in a directed graph. The problem setting is general~\citep{ye2024physics,hao2024training,zhou2025gsm} and includes many important theoretical problems (e.g., Turing machine halting problem) and practical use cases (e.g., knowledge graph). Given this setting, we proved that a two-layer transformer with $D$ steps of continuous thought can solve graph reachability for graphs of $n$ vertices, where $D < n$ is the graph's diameter (longest path length between two nodes). In contrast, for graph reachability, the best existing result on constant-depth transformers with discrete CoT requires $O(n^2)$ steps~\citep{merrill2023expressive}. 

Intuitively, in our construction, each latent thought vector is a superposition of multiple valid search traces, and thus can perform \emph{implicit parallel search} on the graph in each autoregressive step. The continuous thoughts can be regarded as ``superposition states'' in quantum mechanics~\citep{bohm2013quantum}, storing multiple search frontiers simultaneously, and thus enabling efficient breadth-first search (BFS). In contrast, discrete thought tokens can be viewed as ``collapsed states'' from superpositions. This forces the model to choose a branch deterministically, yielding either an incorrect greedy search or a depth-first style search with backtracking, which requires more computation. 
Unlike previous theoretical work that constructs positional encodings specifically for a given problem or even for a given input length, our construction works for widely-used positional encodings in practice, such as sinusoidal positional encoding~\citep{vaswani2017attention} and rotary position embedding~\citep{su2024roformer}.

Moreover, we show that our theoretical construction can be achieved in gradient-based training. Specifically, a two-layer transformer with continuous CoT outperforms a 12-layer one with discrete CoT on graph reachability. An inspection of attention patterns and their underlying representation demonstrates that the continuous thought indeed encodes multiple plausible search frontiers in parallel in superposition states. Notably, such a superpositional representation automatically \emph{emerges} from training only with the optimal path of graph reachability, without strong supervision that aligns the latent thought vectors with other plausible search traces.

\subsection{Related works}
\label{subsec:related_work}

\paragraph{LLM reasoning in text and latent spaces.} LLM's reasoning capability can be significantly boosted by chain-of-thought (CoT)~\citep{wei2022chain}, which allows LLMs to explicitly output intermediate thoughts in text space before predicting the final answer. CoT includes prompt-only methods~\citep{khot2022decomposed,zhou2022least} and training with samples containing intermediate thoughts~\citep{yue2023mammoth,yu2023metamath,wang2023math,shao2024deepseekmath}. Besides text-based CoT, many previous works also study LLM reasoning in the latent space~\citep{goyal2023think,wang2023guiding,pfau2024let,su2025token} where the intermediate thoughts do not necessarily correspond to textual tokens. In particular, \citet{hao2024training} proposed to train LLMs to reason in a continuous latent space, which outperforms discrete CoT on graph reasoning tasks, especially for graphs with high branching factors. Based on empirical case studies in \citet{hao2024training}, continuous thoughts are hypothesized to encode multiple plausible search frontiers simultaneously. In this work, we formally study the mechanism and theoretically show that transformers equipped with continuous thoughts benefit from superposition states during reasoning.

\paragraph{Expressivity of transformers.} There is a long line of work studying the expressivity of transformers~\citep{yun2019transformers,bhattamishra2020ability,bhattamishra2020computational,perez2021attention,likhosherstov2021expressive,yao2021self,edelman2022inductive,akyurek2022learning,merrill2023parallelism}. A more recent line of work shows CoT can improve the expressivity of transformers~\citep{liu2022transformers,feng2023towards,merrill2023expressive,li2024chain}. For example, \citet{liu2022transformers} studies low-depth
transformer expressivity for semi-automata, of which the setting corresponds to one CoT step. \citet{feng2023towards} shows that constant-depth transformers with CoT can solve certain $\mathsf{P}$-complete problems.  \citet{li2024chain} further provides constructions of constant-depth transformer for each problem in $\mathsf{P}/\mathsf{poly}$ with CoT. \citet{merrill2023expressive} studies the expressivity with different lengths of CoT, showing that logarithmic steps of CoT in input length can expand the upper bound of constant-depth transformer expressivity from $\mathsf{TC}^0$ to $\mathsf{L}$, while a linear number of steps can further expand the upper bound to $\mathsf{NC}^1$-complete. While these expressivity results mainly focus on discrete CoT, theoretical studies on continuous CoT \citep{hao2024training} are rare, which our work is focused on. \citet{gozeten2025continuous} studies the expressivity of one-layer transformers with continuous CoT on the minimum non-negative sum problem, demonstrating the superposition mechanism in the arithmetic domain, which complements our results on the graph reachability problem. While their construction requires an exponentially large embedding dimension, our theoretical construction requires only a linear embedding dimension in the graph size.  Moreover, unlike many previous works that construct problem-specific (or even length-specific) positional encodings, our construction applies to practical positional encodings, such as sinusoidal~\citep{vaswani2017attention} and RoPE~\citep{su2024roformer}.

\paragraph{Reasoning as graph problems.} Graph problems are essential to understand LLM reasoning capability since many reasoning problems can be abstracted as computational graphs~\citep{ye2024physics,zhou2025gsm} where relational data fed into transformers can be modeled as edges~\citep{wang2024grokked,wang2024understanding,guo2025llms}. Many previous works have shown that pretrained LLMs can deal with reasoning tasks in graphs, but may still have difficulties with more complicated tasks~\citep{wang2023can,guo2023gpt4graph,fatemi2023talk,sanford2024understanding,luo2024graphinstruct,dai2024large,cohen2025spectral}. Other works study how transformers solve classic and fundamental graph problems, such as graph reachability~\citep{merrill2023expressive,merrill2025little}, shortest path~\citep{cohen2025spectral}, etc. For example, \citet{cohen2025spectral} shows that a two-layer transformer can leverage spectral decomposition of the line graph to predict the shortest path on small-scale undirected graphs. \citet{merrill2025little} shows that a log-depth transformer can solve directed graph reachability, which constant-depth transformers can not solve. For a constant-depth transformer, \citet{merrill2023expressive} shows directed graph reachability can be solved with $O(n^2)$ CoT steps where $n$ is the number of vertices, while it remains unclear whether a smaller number of discrete CoT steps can solve the task. On the contrary, our work shows that a two-layer transformer can solve graph reachability for a $D$-diameter graph with $D$ continuous thought steps. 

\section{Preliminaries}
\label{sec:prelim}

\paragraph{Basic notations.} For any integer $N > 0$, we use $[N]$ to denote the set $\{1, 2, \ldots, N\}$. For any finite set $\sX$, let $|\sX|$ denote the cardinality of $\sX$. Let $\mathbb{R}$ be the set of real numbers. We use $\mathbbm{1}\{ \cdot \}$ to denote the indicator function.  Without further clarification, we use lower-case bold letters (e.g., $\vx$, $\thetaParam$) and upper-case bold letters (e.g., $\mW, \mTokenEmbd$) to denote vectors and matrices, respectively. In particular, we use $\mI_d$ to denote a $d\times d$ identity matrix, use $\mZero_{m\times n}$ (or $\mZero_{m}$) to denote an $m \times n$ zero matrix (or an $m$-dimensional zero vector), and use $\ve_i$ to denote a one-hot vector of which the $i$-th entry is one and other entries are all zero, where the dimension of $\ve_i$ can be inferred from the context. We also use $\| \cdot \|_\infty$ and $\| \cdot \|_2$ to represent $L_\infty$ norm and $L_2$ norm of vectors, respectively. For vectors $\vu \in \mathbb{R}^m$ and $\vv \in \mathbb{R}^n$, let $\vu \otimes \vv = \vu\vv^\top \in \mathbb{R}^{m\times n}$ denote their outer product. Also, for vectors $\vu, \vv \in \mathbb{R}^d$, let $\langle \vu, \vv \rangle = \vu^\top\vv$ denote their inner product.  For any vector $\vx = (x_1, \ldots, x_d) \in \mathbb{R}^d$, we define the softmax function $\softmax: \mathbb{R}^d \to \mathbb{R}^d$ as $\softmax(\vx)_i = \exp(x_i) / (\sum_{j=1}^d \exp(x_j))$, and the layer normalization operator $\layernorm(\vx) = \vx / \|\vx\|_2$. Moreover, for a sequence of vectors $(\vx_1, \vx_2, \ldots, \vx_t) \in \mathbb{R}^{d\times t}$, we abuse the notation $\layernorm(\vx_1, \ldots, \vx_t) = (\layernorm(\vx_1), \ldots, \layernorm(\vx_t)) \in \mathbb{R}^{d\times t}$.

\paragraph{Tokens and embeddings.} For a fixed positive integer $V > 0$, let $\vocab = [V]$ denote a vocabulary of size $V$. For each token $v \in \vocab$, there is an associated token embedding $\vu_v \in \mathbb{R}^d$ where $d > 0$ is the embedding dimension. Assume $d = 3 d_\te + d_\pe$. We refer to the first $d_\te$ entries of a $d$-dimensional vector as its content, the subsequent $d_\te$ entries as its first buffer, the following $d_\te$ entries as its second buffer, and the final $d_\pe$ entries as its effective positional encoding. Formally, for a vector $\vx = (x_1, x_2, \ldots, x_d)^\top \in \mathbb{R}^d$, we define 
$\content(\vx) = (x_1, \ldots, x_{d_\te})^\top, \buffer_1(\vx) = (x_{d_\te+1}, \ldots, x_{2d_\te})^\top, \buffer_2(\vx) = (x_{2d_\te+1}, \ldots, x_{3d_\te})^\top$ and $\pos(\vx) = (x_{3d_\te+1}, \ldots, x_d)^\top$. Let $\Tilde{\vu}_v = \content(\vu_v) \in \mathbb{R}^{d_\te}$ for any $v \in \vocab$. Assume for each $\vu_v$, only the first $d_\te$ entries are non-zero. Furthermore, let $\mTokenEmbd = [\Tilde{\vu}_1, \Tilde{\vu}_2, \ldots, \Tilde{\vu}_V] \in \mathbb{R}^{d_\te \times V}$ and we assume that $\mTokenEmbd^\top\mTokenEmbd = \mI_{V}$, i.e., the token embeddings are orthonormal.

\begin{algorithm}[H]
\caption{Transformer ($\transformer$)}
\label{alg:transformer_forward}
\begin{algorithmic}[1]
\Require Parameters $\theta = (\theta_{\pe}, \{\theta^{(l,h)}_{\attn}\}_{l=0, h=0}^{L-1, H_l-1}, \{ \theta^{(l)}_{\mlp}\}_{l=0}^{L-1})$, input embeddings $\vh = (\vh_1, \dots, \vh_{t})$
    \State $\vh^{(0)}_i \gets \vh_i + \text{PosEncode}_{\theta_\pe}(i), \quad \forall i \in [t]$
    \Comment{adding positional encoding}
\For{$l = 0$ to $L-1$}
    \State $\vh^{(l+0.5)} \gets \vh^{(l)} + \sum_{h=0}^{H_l-1} \attn_{\theta^{(l,h)}_{\attn}}(\vh^{(l)})$
    \Comment{self attention}
    \State $\vh^{(l+1)} \gets \layernorm\left(\mlp_{\theta^{(l)}_{\mlp}}(\vh^{(l+0.5)})\right)$
    \Comment{MLP and layer normalization}
\EndFor
\Ensure Embedding of the last layer at the last position $\vh^{(L)}_{t}$.
\end{algorithmic}
\end{algorithm}

\paragraph{Transformer architectures.}
An $L$-layer autoregressive transformer receives a sequence of input embeddings $\vh = \vh_{[t]}  \overset{\triangle}{=} (\vh_1, \vh_2, \ldots, \vh_t) \in \mathbb{R}^{d\times t}$ and outputs $\transformer_\theta(\vh) \in \mathbb{R}^d$ where $\transformer_\theta(\cdot)$ is defined in \Cref{alg:transformer_forward}. Let $\mReadOut \in \mathbb{R}^{V\times d}$ be the decoding matrix. A traditional decoder will sample the next token $v_{t+1} \sim \softmax(\mReadOut \transformer_\theta(\vh))$ and append its token embedding in position $t+1$, i.e., $\vh_{t+1} = \vu_{v_{t+1}}$, to autoregressively generate subsequent outputs. When using the chain of continuous thought~\citep{hao2024training}, one skips the sampling step and directly appends the output of the transformer as the input embedding of the next position, i.e., $\vh_{t+1} = \transformer_\theta(\vh)$. The parameter $\theta $ contains positional encodings $\theta_\pe$, $L$ attention layers where each layer $l$ contains $H_l$ heads $\{\theta^{(l,h)}_{\attn}\}_{l=0, h=0}^{L-1, H_l-1}$ and $L$ MLP layers $\{ \theta^{(l)}_{\mlp}\}_{l=0}^{L-1}$. The definitions of attention heads and MLPs are in \Cref{alg:attn_and_mlp}.

\begin{algorithm}[H]
\caption{Causal Self-Attention ($\attn$) and (position-wise) Multilayer Perceptron ($\mlp$)}
\label{alg:attn_and_mlp}
\begin{algorithmic}[1]
\Require $\theta_{\attn} = (\mQuery, \mKey, \mValue, \mOutput), \theta_{\mlp} = ( \{\mW_i \}_{i=1}^{L_\mlp}, \{\sigma_i(\cdot) \}_{i=1}^{L_\mlp-1} )$, input $\vh = (\vh_1, \dots, \vh_t)$ 
    \State $\vq_i \gets \mQuery \vh_i,\quad \vk_i \gets \mKey \vh_i,\quad \vv_i \gets \mValue \vh_i, \quad \forall i \in [t]$ \Comment{compute queries, keys, values}
\For{$i = 1$ to $t$}
    \State $s_i \gets \softmax(\langle \vq_i, \vk_1 \rangle, \dots, \langle \vq_i, \vk_i \rangle), \quad \vh^{\attn}_i \gets \mOutput \sum_{j=1}^i s_{i,j} \vv_j$
    \State $\vh^\mlp_i \gets \mW_{L_\mlp} \sigma_{L_\mlp-1}(\cdots \mW_2 \sigma_1(\mW_1 \vh_i)\cdots)$
\EndFor
\Ensure $\attn_{\theta_\attn}(\vh) = (\vh_1^\attn, \dots, \vh_t^\attn)$ or $\mlp_{\theta_\mlp}(\vh) = (\vh_1^\mlp, \dots, \vh_t^\mlp)$
\end{algorithmic}
\end{algorithm}

\paragraph{Positional encodings.} Given an input sequence $(\vh_1, \ldots, \vh_T)$, for each position $i \in [T]$, there is a corresponding positional encoding $\vp_i = \text{PosEncode}_{\theta_\pe}(i) \in \mathbb{R}^d$. For each $\vp_i$, we assume that only the last $d_\pe$ entries are non-zero and thus call $d_\pe$ the effective positional encoding dimension. For notation convenience, we denote $\Tilde{\vh}_i = \content(\vh_i) \in \mathbb{R}^{d_\te}, \bar{\vp}_i = \pos(\vp_i) \in \mathbb{R}^{d_\pe}$ for any $i \in [T]$. We use the widely used sinusoidal positional encoding for $\bar \vp_i = (\bar p_{i, 1}, \ldots, \bar p_{i, d_\pe})$ as defined below.

\begin{definition}[Positional Encoding]
\label{defn:pos_encoding}
Let $d_\pe$ be even. We use positional encoding generated by sinusoidal functions \citet{vaswani2017attention}. Specifically, for any position $i \geq 1$ and any index $j \in [d_\pe/2]$, we have
\begin{align*}
    \bar{p}_{i,2j-1} = \cos(i \cdot M^{-2j/d_\pe} ),\quad  \bar{p}_{i,2j} = \sin(i\cdot M^{-2j /d_\pe} ),
\end{align*}
where $M > 0$ is a large constant integer, e.g., $M = 10^4$ as chosen in \citet{vaswani2017attention}.
\end{definition}

\begin{remark}
     We also discuss theoretical construction with RoPE\citep{su2024roformer} (\Cref{app:sec:rope}).
\end{remark}

\section{Problem Formulations}
\label{sec:prob_formulation}

\paragraph{Graph reachability.}  Let $\graph = (\vertexSet, \edgeSet)$ be a directed graph, where $\vertexSet = \{ v_1, v_2, \ldots, v_n\}$ is the set of vertices and $ \edgeSet = \{e_1, e_2, \ldots, e_m\}$ is the set of edges. Each vertex $v_i \in \vertexSet$ corresponds to a token in the vocabulary, and thus we use $v_i$ to represent both a vertex and its corresponding token. Let $n_{\max} > 0$ denote the maximum possible number of vertices of a graph. Note that $n_{\max} < |\vocab|$. Each edge $e_i \in  \edgeSet$ is a tuple, where $e_i = (\tokenSource_i, \tokenTarget_i) \in \vertexSet\times\vertexSet$ denotes there is a directed edge from the source node $\tokenSource_i$ to the target node $\tokenTarget_i$. Given a graph (i.e., all the edges of the graph), two candidate destination nodes $\tokenEnd_1$ and $\tokenEnd_2$, and a root node $\tokenStart$, the task is to identify which of the two nodes can be reached by $\tokenStart$.  Note that we guarantee one and only one of $\tokenEnd_1$ and $\tokenEnd_2$ is reachable by $\tokenStart$, which we denote by $\tokenEnd_{i^*}$.

\begin{figure}[htbp]
\centering
 \includegraphics[width=0.9\textwidth]{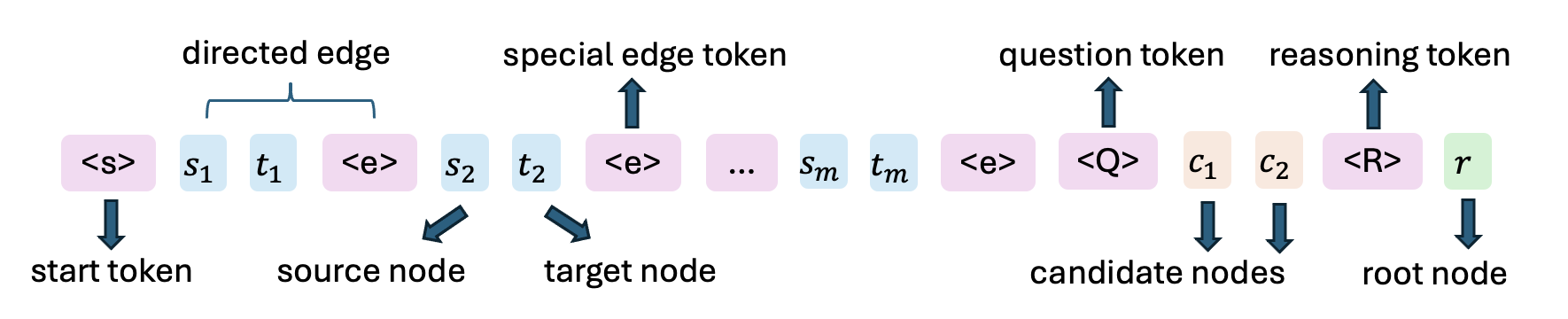}
    \caption{\small Prompt format of the graph reachability problem.}
    \label{fig:input_format}
\end{figure}

\paragraph{Input structures.} The prompt structure is illustrated in \Cref{fig:input_format}. The prompt starts with the BOS (beginning of sentence) token $\tokenBOS$. The subsequent $3m$ tokens represent $m$ edges, where each edge is represented by the source node $\tokenSource_i$, target node $\tokenTarget_i$, and a special edge token $\tokenEdge$ that marks the end of an edge. Then there is a special question token $\tokenQUERY$ followed by two candidate destination nodes $\tokenEnd_1$ and $\tokenEnd_2$. Finally, there is a special reasoning token $\tokenReasoning$ followed by a root node $\tokenStart$. See \Cref{tab:token_notation} for the full list of token notations. Let $t_0 = 3m+6$ be the length of the prompt, and let $(\vh_1, \vh_2, \ldots \vh_{t_0})$ be the input embedding sequence where $\vh_i$ is the token embedding of the $i$-th token in the prompt.

\paragraph{Chain of continuous thought.} We allow transformers to utilize continuous thoughts. Concretely, for $c = 1, 2, \ldots,$ the transformer autoregressively generates $\vh_{t_0+c} = \transformer_\theta(\vh_1, \ldots, \vh_{t_0+c-1})$. For notation convenience, we use $\thought{c} = \vh_{t_0+c}$ to represent the continuous thought of step $c$ for $c \geq 0$, and thus $\thought{0} = \vu_\tokenStart$. To request the transformer to make the final prediction after $C$ steps of thoughts, one simply appends a special answer token $\tokenAnswer$ at the end of the sequence, i.e., sets $\vh_T = \vu_\tokenAnswer$ where $T = t_0+C+1$, and gets the prediction sampled from $\softmax(\mReadOut \transformer_\theta(\vh_{[T]}))$ or using greedy decoding $\arg\max_{v\in \vocab} \mReadOut \transformer_\theta(\vh_{[T]})$. We denote $\widetilde{\transformer}_{\theta, C, \mReadOut}(\vh_{[t_0]}) = \arg\max_{v\in \vocab} \mReadOut \transformer_\theta(\vh_{[T]})$ as the output token of greedy decoding after generating $C$ steps of continuous thoughts.

\paragraph{Position index.} To present the position of each token or continuous thought in the sequence in a clear way, we use $\PosIdx(v)$ to represent the position of a token in the input sequence (e.g., $\PosIdx(\tokenBOS) = 1, \PosIdx(\tokenSource_i) = 3i-1, \PosIdx(\tokenQUERY) = 3m+2$), use $\PosIdx(\tokenEdge, i) = 3i+1$ to represent the position of the $i$-th $\tokenEdge$ token in the prompt, and use $\PosIdx(\thought{i}) = t_0+i$ to represent the position of the continuous thought of step $i$. See \Cref{tab:pos_idx} for the complete list of position indices.

In the following sections, we demonstrate that the chain of continuous thought can efficiently solve the graph reachability problem both theoretically (\Cref{sec:theory}) and empirically (\Cref{sec:exp}).

\section{Theoretical Results}
\label{sec:theory}

In this section, we theoretically prove that a two-layer transformer with continuous thought can efficiently solve the graph reachability problem. We first introduce a basic building block, the \emph{attention chooser}, in our transformer constructions in \Cref{subsec:theory_attn_chooser}. Then we present the key result that continuous thought maintains a superposition state of multiple search traces simultaneously in \Cref{subsec:theory_continous_thought}. We show our main theorem in \Cref{subsec:theory_prediction} and make further discussion in \Cref{subsec:discussion}.

\subsection{Attention chooser}
\label{subsec:theory_attn_chooser}

We use the attention chooser as a building block in our construction, which will choose the appropriate positions to attend conditioned on the token in the current position. This allows us to use the same parameter constructions for various input lengths. The proof is deferred to \Cref{app:subsec:proof_attn_chooser}.

\begin{lemma}[Attention chooser, informal version of \Cref{lemma:gated_attn_head}]
\label{lemma:attn_chooser_informal}
Under sinusoidal positional encoding as defined in \Cref{defn:pos_encoding}, for any token $\tokenx \in \vocab$ and relative position $\ell \geq 0$, there exists a construction of $\mKey, \mQuery \in \mathbb{R}^{(2d_\pe) \times d}$, such that for any input sequence $\vh_{[T]}$ that satisfying mild assumptions (see \Cref{lemma:gated_attn_head} in \Cref{app:subsec:proof_key_lemma}), it holds that for any position $i \in [T]$, it will pay almost all attention to position $(i-\ell)$ if $\vh_i = \vu_\tokenx$, and pay most attention to position one otherwise.
\end{lemma}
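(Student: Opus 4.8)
The plan is to realize the "attention chooser" as a single attention head whose query and key matrices extract only the positional-encoding block of each embedding and compute, via a suitable linear map, a score that is maximized exactly at the target relative offset. Concretely, I would first note that by the structure assumptions on embeddings, $\content(\vu_\tokenx)$ is an orthonormal vector $\tilde\vu_\tokenx$, and the positional block of $\vh_i$ is the sinusoidal vector $\bar\vp_i$. The key observation to exploit is the standard trigonometric identity: for the sinusoidal encoding of \Cref{defn:pos_encoding}, the inner product $\langle \bar\vp_i, \bar\vp_j\rangle = \sum_{k=1}^{d_\pe/2}\cos\big((i-j)M^{-2k/d_\pe}\big)$ depends only on the difference $i-j$, and as a function of $i-j$ it is (up to scaling) sharply peaked at $i-j=0$. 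Therefore, to make position $i$ attend to position $i-\ell$, I would build into $\mQuery$ a "rotation" of the positional block by the fixed offset $\ell$ (i.e.\ a block-diagonal $2\times 2$ rotation by angles $\ell M^{-2k/d_\pe}$ on each sinusoidal pair) so that $\langle \mQuery\vh_i, \mKey\vh_j\rangle$ becomes (a large multiple of) $\langle \bar\vp_{i-\ell}, \bar\vp_j\rangle$, which peaks at $j = i-\ell$.

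Next, I would handle the gating on the current token. The query should produce the "shifted" positional vector only when $\content(\vh_i)=\tilde\vu_\tokenx$; otherwise it should produce something that forces attention onto position $1$. The trick is to add to the query a term of the form $\kappa\big(1-\langle \tilde\vu_\tokenx,\content(\vh_i)\rangle\big)$ times a vector aligned with $\bar\vp_1$ (which, because the token embeddings are orthonormal, is $0$ exactly when $\vh_i=\vu_\tokenx$ and is $\ge\kappa\,(1-\text{const})$ otherwise), where $\kappa$ is chosen large. Scaling everything by a large temperature constant $\lambda$ then makes the softmax concentrate: when $\vh_i=\vu_\tokenx$, the dominant score is at $j=i-\ell$ and the gap to the runner-up is a fixed positive constant times $\lambda$, so attention mass there is $1-o(1)$; when $\vh_i\ne\vu_\tokenx$, the injected bias toward position $1$ dominates by $\Omega(\kappa\lambda)$ and attention collapses onto position $1$. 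The "mild assumptions" referenced in the lemma are exactly what I would use to guarantee: (a) the content parts of all prior embeddings lie in the span of the orthonormal token dictionary (so the gating inner product is well-behaved), and (b) the positional blocks are exactly the sinusoidal vectors (so the peakedness estimate applies), plus a lower bound on how much the peak at offset $0$ exceeds nearby offsets, which follows from a direct estimate on $\sum_k \cos(\delta M^{-2k/d_\pe})$ for integer $\delta\ne 0$.

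The main obstacle I anticipate is the quantitative peakedness bound for the sinusoidal kernel: one must show $\langle\bar\vp_{i-\ell},\bar\vp_j\rangle$ at $j=i-\ell$ (value $d_\pe/2$) exceeds its value at every other integer $j$ in $[T]$ by at least a constant independent of $T$ (or at least by something that, after multiplying by $\lambda$, dominates). Since the frequencies $M^{-2k/d_\pe}$ are incommensurate-ish and the number of terms is only $d_\pe/2$, the sum at $\delta\ne 0$ need not be bounded away from $d_\pe/2$ for all $\delta$ simultaneously unless $M$ and $d_\pe$ are chosen appropriately — so part of the work is pinning down the regime of $M, d_\pe$ (and possibly restricting the maximum sequence length $T$ relative to $M$) in which the separation holds, and then tracking how large $\lambda$ must be as a function of that separation, $\ell$, $\kappa$, and $T$. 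Once those constants are fixed, the rest is a routine softmax-concentration computation, and I would present the formal statement (\Cref{lemma:gated_attn_head}) with the explicit assumptions and the explicit $\mQuery,\mKey$ built from $2\times 2$ rotation blocks, the token-gating correction, and the temperature scaling, then verify the two cases.
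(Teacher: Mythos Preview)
Your plan is essentially the paper's construction: build the score from a rotated positional kernel (you rotate the query by $-\ell$ to get $\langle\bar\vp_{i-\ell},\bar\vp_j\rangle$; the paper equivalently rotates the key by $+\ell$ to get $\langle\bar\vp_i,\bar\vp_{j+\ell}\rangle$), add a content-gated bias toward position $1$, and scale by a large temperature. There is one small but genuine gap: your gate $\kappa\big(1-\langle\tilde\vu_\tokenx,\content(\vh_i)\rangle\big)$ is \emph{affine} in $\vh_i$, whereas $\mQuery$ must be linear (\Cref{alg:attn_and_mlp} has no bias term), and there is no constant coordinate in $\vh_i+\vp_i$ from which to manufacture the ``$1$''. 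The paper's fix is to use the complementary superposition $\tilde\vu_{\bar\tokenx}:=\sum_{v\neq\tokenx}\tilde\vu_v$ and take the linear gate $\xi\,\langle\tilde\vu_{\bar\tokenx},\tilde\vh_i\rangle$: this vanishes when $\tilde\vh_i=\tilde\vu_\tokenx$, and for any unit-norm nonnegative superposition with $\langle\tilde\vu_\tokenx,\tilde\vh_i\rangle=0$ one has $\sum_{v\neq\tokenx}\lambda_{i,v}\ge\sum_{v\neq\tokenx}\lambda_{i,v}^2=1$, so the gate is at least $1$.

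On your ``main obstacle'', you are over-worrying: the paper does not seek a $T$-independent gap or tune $M,d_\pe$. It only needs that for each fixed $T$ there exists some $\varepsilon_T>0$ with $\langle\bar\vp_i,\bar\vp_j\rangle\le d_\pe/2-\varepsilon_T$ for all $i\neq j$ in $[T]$, which is immediate since each $(i-j)\omega^k$ is not a multiple of $2\pi$; the overall scale $\eta$ (your $\lambda$) is then chosen large as a function of $T$ and $\varepsilon_T$. The construction is allowed to depend on $T_{\max}$, so this suffices.
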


\begin{proof}[Proof sketch.]
  We define vector $\Tilde \vu_{\bar \tokenx} =  \sum_{v\in\vocab\backslash\{\tokenx\}} \Tilde{\vu}_v \in \mathbb{R}^{d_\te}$ as the superposition of all token embeddings in the vocabulary except for $\tokenx$.  By the property of sinusoidal positional encoding, there exists a rotation matrix $\mRotation^{(\ell)}$ as in \Cref{lem:pos_encoding_rotation} in \Cref{app:subsec:positional_encoding}, s.t. $ \bar \vp_{i+\ell} = \mRotation^{(\ell)} \bar \vp_i, \ \forall i \geq 1$. Then we can construct the query and key matrices as 
\begin{align*}
   \mQuery = \begin{bmatrix}
      \mZero_{d_\pe \times d_\te} & \mZero_{d_\pe \times 2d_\te} & \mI_{d_\pe} \\
      \xi\bar \vp_1 \otimes \tilde \vu_{\bar \tokenx} & \mZero_{d_\pe \times 2d_\te}  & \mZero_{d_\pe \times d_\pe}
   \end{bmatrix}, \quad  
   \mKey = \begin{bmatrix}
      \mZero_{d_\pe \times 3d_\te} & \eta \mRotation^{(\ell)} \\
        \mZero_{d_\pe \times 3d_\te} & \eta \mI_{d_\pe}
   \end{bmatrix},
\end{align*}
where $\xi, \eta > 0$ and thus the query and key vectors can be calculated as  
\begin{align*}
    \vq_i = \mQuery(\vh_i+\vp_i) = \begin{bmatrix}
        \bar \vp_i \\ \xi \langle \tilde \vu_{\bar \tokenx}, \tilde \vh_i \rangle \bar \vp_1 \end{bmatrix},  \quad \vk_i = \mKey(\vh_i+\vp_i)  = \begin{bmatrix}
        \eta \mRotation^{(\ell)} \bar \vp_i \\ \eta \bar \vp_i 
        \end{bmatrix} = 
        \begin{bmatrix}
        \eta \bar \vp_{i+\ell} \\ \eta \bar \vp_i 
        \end{bmatrix}.
\end{align*}
Now for any $1 \leq j \leq i \leq T$, we have 
   $\langle \vq_i, \vk_j \rangle = \eta \left( \langle \bar \vp_i, \bar \vp_{j+\ell}  \rangle + \xi  \langle \tilde \vu_{\bar \tokenx}, \tilde \vh_i \rangle \langle \bar \vp_1, \bar \vp_j \rangle\right)$.
Fix any $i \in [T]$. By the property of sinusoidal positional encoding, $\langle \bar \vp_1, \bar \vp_{i'} \rangle$ is maximized when $i' = i$ as in \Cref{lem:pos_encoding_distinct} in \Cref{app:subsec:positional_encoding}. If $\vh_i = \vu_\tokenx$, it holds that  $\langle \tilde \vu_{\bar \tokenx}, \tilde \vh_i \rangle = 0$ and thus $\langle \vq_i, \vk_j \rangle$ is determined by $\langle \bar \vp_i, \bar \vp_{j+\ell}  \rangle$, which is maximized at $j = i - \ell$. If $\vh_i \neq \vu_\tokenx$, one can show that $\langle \tilde \vu_{\bar \tokenx}, \tilde \vh_i \rangle \geq 1$ and thus $\langle \vq_i, \vk_j \rangle$ is largely determined by $\langle \bar \vp_1, \bar \vp_{j}  \rangle$ when $\xi$ is large, and thus maximized at $j = 1$. 
\end{proof}

\subsection{Continuous thought maintains superposition states}
\label{subsec:theory_continous_thought}

Recall that $\vh = \vh_{[t_0]}$ denotes the input sequence as defined in \Cref{sec:prob_formulation}. We define $\vertexSet_c$ as the set of vertices that are reachable from $\tokenStart$ within $c$ steps. Below, we present our key lemma.

\begin{lemma}[Continuous thought maintains superposition states]
\label{lemma:current_thought}
 We autoregressively generate $\thought{c+1} = \transformer_\theta(\vh_{[t_0]}, \thought{1} \ldots, \thought{c})$ for any $c \geq 0$. There exists a construction of $\theta$ such that
\begin{align}
\label{eq:superposition}
    \thought{c} = \vh_{t_0+c} = \frac{1}{\sqrt{|\vertexSet_c|}} \sum_{v \in \vertexSet_c } \vu_v, 
\end{align}
i.e., the $c$-th continuous thought is the normalized superposition of all vertices that can be reached from $\tokenStart$ within $c$ steps.
\end{lemma}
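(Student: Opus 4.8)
The plan is to prove \eqref{eq:superposition} by induction on $c$, realizing one breadth-first-search expansion per inductive step across the two layers. The base case $c=0$ is immediate, since $\thought{0}=\vu_\tokenStart$ by definition and $\vertexSet_0=\{\tokenStart\}$. For the inductive step, assume $\thought{c'}=\frac{1}{\sqrt{|\vertexSet_{c'}|}}\sum_{v\in\vertexSet_{c'}}\vu_v$ for all $c'\le c$; then the last token fed to the transformer, at position $t_0+c$, carries $\frac{1}{\sqrt{|\vertexSet_c|}}\sum_{v\in\vertexSet_c}\Tilde\vu_v$ in its content block and zeros elsewhere. Writing $\vertexSet_{c+1}=\vertexSet_c\cup\{w:(v,w)\in\edgeSet\text{ for some }v\in\vertexSet_c\}$, the goal is to show the transformer's output at position $t_0+c$ equals $\frac{1}{\sqrt{|\vertexSet_{c+1}|}}\sum_{w\in\vertexSet_{c+1}}\vu_w$.

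\emph{Layer $0$ --- routing edge endpoints.} I would instantiate the attention chooser of \Cref{lemma:attn_chooser_informal} twice, both keyed on the token $\tokenEdge$: one head with relative offset $\ell=1$ whose value and output matrices copy the content of the attended position (the target node $\tokenTarget_i$) into the first buffer, and one head with offset $\ell=2$ copying the source node $\tokenSource_i$ into the second buffer. Choosing $\mlp_0$ to implement the identity, after layer $0$ the token at position $\PosIdx(\tokenEdge,i)$ has content $\Tilde\vu_{\tokenEdge}$, first buffer $\Tilde\vu_{\tokenTarget_i}$, and second buffer $\Tilde\vu_{\tokenSource_i}$, all scaled by the same $\layernorm$ constant; every non-$\tokenEdge$ token (including each continuous thought) instead attends to position $1$ and acquires buffer entries proportional to $\Tilde\vu_{\tokenBOS}$, which are orthogonal to every vertex superposition and therefore harmless, while its content block is preserved up to the $\layernorm$ scalar.

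\emph{Layer $1$ --- parallel frontier expansion.} In layer $1$ I would make the query at position $t_0+c$ a large multiple $\lambda$ of its content block and the key at every position its second buffer, so that an edge token $i$ gets score $\propto\lambda\,\langle\sum_{v\in\vertexSet_c}\Tilde\vu_v,\Tilde\vu_{\tokenSource_i}\rangle$, which by orthonormality of $\mTokenEmbd$ is strictly positive (and bounded below in terms of $n_{\max}$) when $\tokenSource_i\in\vertexSet_c$ and is $0$ for every non-matching edge and every non-edge position. Taking $\lambda$ large enough --- in terms of $n_{\max}$, which bounds both the sequence length and $|\vertexSet_c|$ --- concentrates the attention almost uniformly on $E_c:=\{i:\tokenSource_i\in\vertexSet_c\}$. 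Reading the first buffer as the value, the attention output is then, up to a small leak proportional to $\Tilde\vu_{\tokenBOS}$, equal to $\tfrac{1}{|E_c|}\sum_{i\in E_c}\Tilde\vu_{\tokenTarget_i}=\sum_w\tfrac{\deg_c(w)}{|E_c|}\Tilde\vu_w$, the sum running over the out-neighbours $w$ of $\vertexSet_c$ with $\deg_c(w)$ the number of edges from $\vertexSet_c$ into $w$. Adding the layer-$1$ residual, which still carries a positive multiple of $\sum_{v\in\vertexSet_c}\Tilde\vu_v$ and thus re-injects the already-reached vertices (including any sinks of $\vertexSet_c$), yields a content block $\sum_w\beta_w\Tilde\vu_w$ whose support is exactly $\vertexSet_{c+1}$. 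Finally I would let $\mlp_1$ project onto token coordinates by $\mTokenEmbd^\top$, apply a coordinate-wise piecewise-linear activation (a shallow ReLU network) sending the leak to $0$ and every $\beta_w$ with $w\in\vertexSet_{c+1}$ to $1$, and map back by $\mTokenEmbd$ while zeroing the buffers and positional block; the closing $\layernorm$ then outputs exactly $\frac{1}{\sqrt{|\vertexSet_{c+1}|}}\sum_{w\in\vertexSet_{c+1}}\Tilde\vu_w=\frac{1}{\sqrt{|\vertexSet_{c+1}|}}\sum_{w\in\vertexSet_{c+1}}\vu_w$, closing the induction.

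The step I expect to be the main obstacle is this last cleanup. Attention necessarily returns a convex combination weighted by edge multiplicities, so the raw layer-$1$ output is a non-uniform superposition; one must show that the genuine coefficients $\beta_w$ ($w\in\vertexSet_{c+1}$) stay above a positive threshold while the spurious mass --- from the $O(m+c)$ imperfectly suppressed positions and on the $\Tilde\vu_{\tokenBOS}$ direction --- stays below it, with a gap uniform over all admissible graphs and robust to the scalar rescalings of both $\layernorm$ operations, so that a single fixed activation in $\mlp_1$ thresholds correctly on every input. Pinning down honest, instance-independent constants linking $\lambda$, $n_{\max}$, the $\layernorm$ scalars, and the activation breakpoints is where the real bookkeeping lies; the remainder is assembling the attention-chooser block and checking the orthogonality and causal-masking details.
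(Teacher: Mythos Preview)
Your approach is essentially the same as the paper's: induction on $c$, layer-$0$ attention choosers copying edge endpoints into the buffers at each $\tokenEdge$ position, layer-$1$ attention matching the thought's content against the source buffer and pooling the target buffer back into the content block, and an MLP that re-projects via $\mTokenEmbd^\top$ and thresholds coordinatewise to equalize weights before the final $\layernorm$. The minor differences---you swap which buffer holds source vs.\ target, use two layer-$0$ heads instead of five (the extra three serve only the $\tokenReasoning$/$\tokenAnswer$ positions needed for the main theorem), and leave the first MLP as the identity whereas the paper also thresholds there---do not change the argument, and the instance-independent gap you flag as the main obstacle is exactly the bookkeeping the paper's appendix propositions carry out.
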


\Cref{lemma:current_thought} precisely characterizes that each continuous thought is a superposition of all reachable vertices.
We provide a proof sketch below and defer the complete proof to \Cref{app:subsec:proof_key_lemma}.

\begin{proof}[Proof sketch.] We prove by induction. For $c = 0$, by definition, $\vertexSet_0 = \{\tokenStart \}$ and $\thought{0} = \vu_\tokenStart = \frac{1}{\sqrt{|\vertexSet_0|}} \sum_{v \in \vertexSet_0 } \vu_v$. Now we briefly show how to construct the two-layer transformer such that under the induction assumption that \eqref{eq:superposition} holds for $0, \ldots, c$, we can obtain that \eqref{eq:superposition} also holds for $c+1$.

\paragraph{First layer attention.} The first attention layer contains five attention heads, and each head is an attention chooser as constructed in \Cref{lemma:attn_chooser_informal}. Let $h_k = (\tokenx, \ell)$ denote the $k$-th attention head that attends to position $(i-\ell)$ when the $i$-th token is $\tokenx$ and attends to the first token otherwise. We construct $h_0 = (\tokenEdge, 2), h_1 = (\tokenEdge, 1), h_2 = (\tokenReasoning, 2), h_3 = (\tokenReasoning, 1), h_4 = (\tokenAnswer, 1)$, which is illustrated in \Cref{fig:first_layer_attn}. 
For each head, the value matrix will read the value in the content space, and the output matrix will copy the value state to a designated space.

\begin{figure}[htbp]
\centering
 \includegraphics[width=0.9\textwidth]{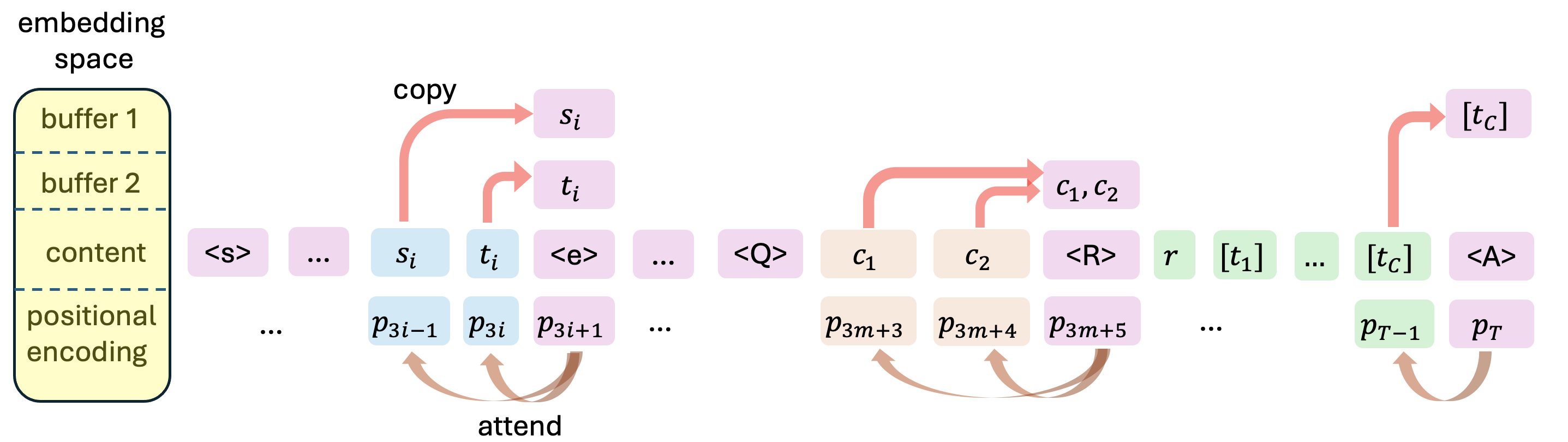}
    \caption{Illustration of the embedding space and first layer attention mechanism.}
    \label{fig:first_layer_attn}
\end{figure}

\paragraph{Second layer attention.} In the second layer, we only need one attention head. Note that after the first layer, for the $i$-th edge token $\tokenEdge$, we have $\buffer_1(\vh_{\PosIdx(\tokenEdge,i)}) = \tilde \vu_{\tokenSource_i}$ and $\buffer_2(\vh_{\PosIdx(\tokenEdge,i)}) = \tilde \vu_{\tokenTarget_i}$. By the induction assumption, the current thought $\thought{c}$ is a superposition of all vertices in $\vertexSet_c$. We construct the query and key matrices such that $\thought{c}$ pays large attention to the $i$-th edge token $\tokenEdge$ if $\tokenSource_i \in \vertexSet_c$ (roughly speaking, we can view $\vq_{\PosIdx(\thought{c})} 
= \thought{c}, \vk_{\PosIdx(\tokenEdge, i)} 
= \vu_{\tokenSource_i}$, and their inner product is positive iff $\tokenSource_i \in \vertexSet_c$), and add the target node $\tokenTarget_i$ stored in buffer 2 back to the current thought (see \Cref{fig:second_layer_attn_thought}). This is exactly a one-step expansion of the currently explored vertices $\vertexSet_c$ and thus the continuous thought at the next step $(c+1)$ will correspond to $\vertexSet_{c+1}$. 

\begin{figure}[htbp]
\centering
 \includegraphics[width=0.9\textwidth]{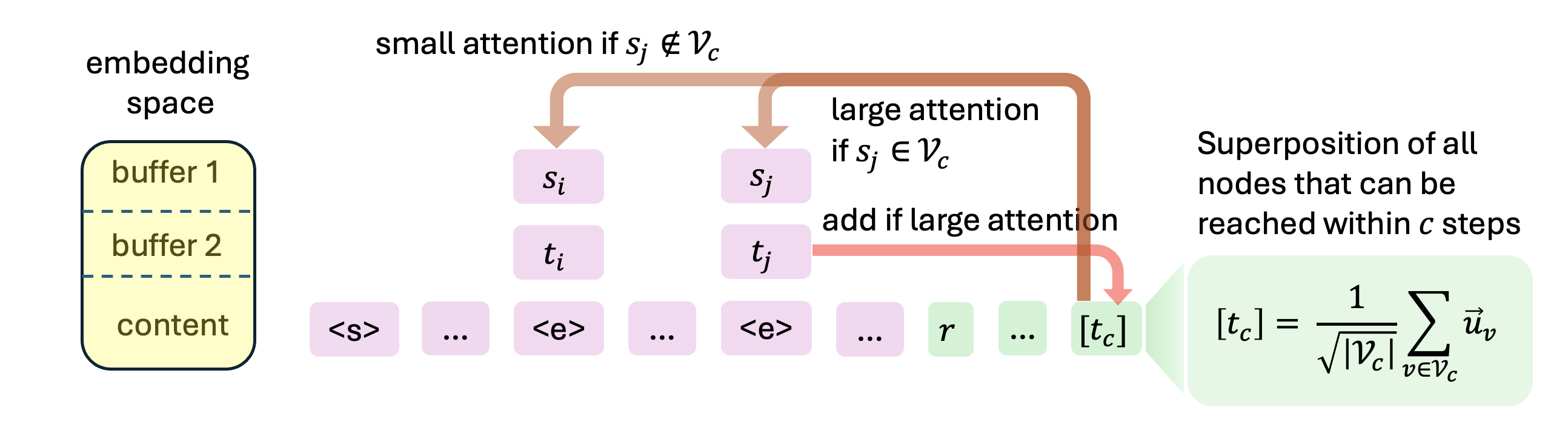}
    \caption{\small Illustration of the second layer attention mechanism for thought generation. We omit the positional encoding space since they are not used in the second layer.}
    \label{fig:second_layer_attn_thought}
\end{figure}

\paragraph{MLP as filter for signals.} Note that after the attention layer, the weight of each node in the current thought is not uniform, and the current thought might contain noise tokens since the normalized attention score to each position is non-zero.  We use the MLP layer to filter out the noise token and adjust the weight of each node in $\vertexSet_{c+1}$. Informally, for a superposition state $\vh = \sum_{v\in \vocab} \lambda_v \vu_v$, we want to eliminate noise tokens $v$ where $\lambda_v < \varepsilon$, and want to equalize the weights of other tokens. By setting the first layer parameter as $\mW_1 = [\vu_1, \ldots, \vu_V]^\top$, nonlinearity as $\sigma(x) = \mathbbm{1}\{x \geq \varepsilon \}$ and second layer as $\mW_2 = \mW_1^\top$, we have $\mW_2(\sigma(\mW_1\vh)) = \sum_{v\in\vocab} \mathbbm{1}\{\lambda_v \geq \varepsilon\} \vu_v$, where $\mW_1$ rotates the basis $\{\vu_v\}$ to the standard basis $\{\onehot_v\}$, $\sigma(\cdot)$ serves as a coordinate-wise filter, and $\mW_2$ rotates the basis back. After layer normalization, we can obtain that \eqref{eq:superposition} also holds for $c+1$.
\end{proof}

\subsection{Measuring the superposition state as prediction}
\label{subsec:theory_prediction}

Since the continuous thought $\thought{c}$ is a superposition of all vertices in $\vertexSet_c$, as long as $\tokenEnd_{i^*}$ can be reached by $\tokenStart$ within $C$ steps, the superposition state $\thought{C}$ will contain $\tokenEnd_{i^*}$. At the final prediction step, the answer token $\tokenAnswer$ will ``measure'' the superposition state $\thought{C}$ using $\tokenEnd_1$ and $\tokenEnd_2$ and predict the token with the larger signal in $\thought{C}$ as the output (see \Cref{fig:second_layer_attn_pred} in \Cref{app:subsec:proof_key_lemma} for a pictorial illustration). We formalize our result in the following theorem, and defer the proof to \Cref{app:subsec:proof_main_theorem}.

\begin{theorem}[Chain of continuous thought solves graph reachability]
\label{thm:main_theorem}
Fix $T_{\max} > 0$. Assume the length of the input sequence (including the continuous thoughts and the special answer token $\tokenAnswer$) does not exceed $T_{\max}$. There exists a construction of a two-layer transformer parameters $\theta$ and the readout matrix $\mReadOut \in \mathbb{R}^{|\vocab| \times d}$ (where $d = O(|\vocab|)$) that are independent of graphs, such that for any directed graph $\mathcal{G} = (\vertexSet, \edgeSet)$ with at most $n_{\max}$ nodes ($n_{\max} < |\vocab|$), a root node $\tokenStart$, two candidate destination nodes $\tokenEnd_1, \tokenEnd_2$, for any $C$ exceeds the diameter of the graph, it holds that
\begin{align*}
    \widetilde{\transformer}_{\theta, C, \mReadOut}(\vh_{[t_0]}) = c_{i^*}.
\end{align*}
\end{theorem}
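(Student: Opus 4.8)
The plan is to reduce the theorem to \Cref{lemma:current_thought} and then analyze the single extra forward pass that produces the prediction at the answer position $T=t_0+C+1$. I would take $\theta$ to be the construction of \Cref{lemma:current_thought} (augmented with a couple of extra attention choosers that write into fresh coordinates, so they do not interfere with thought generation), and take $\mReadOut\in\mathbb{R}^{|\vocab|\times d}$ to be the map that extracts the content coordinates and left-multiplies by $\mTokenEmbd^\top$; this is manifestly graph-independent, and the sequence-length bound $T\le T_{\max}$ together with $n_{\max}$ and $M$ is all the construction is allowed to depend on. By \Cref{lemma:current_thought} we already have $\thought{C}=|\vertexSet_C|^{-1/2}\sum_{v\in\vertexSet_C}\vu_v$. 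This is where the diameter enters: $\tokenEnd_{i^*}$ is reachable from $\tokenStart$, hence at shortest-path distance at most the diameter $D$, and $D\le C$ by assumption, so $\tokenEnd_{i^*}\in\vertexSet_C$; whereas $\tokenEnd_{3-i^*}$ is never reachable, so $\tokenEnd_{3-i^*}\notin\vertexSet_C$. By orthonormality of the token embeddings ($\mTokenEmbd^\top\mTokenEmbd=\mI_V$),
\[
 \langle \thought{C},\Tilde{\vu}_{\tokenEnd_j}\rangle=|\vertexSet_C|^{-1/2}\,\mathbbm{1}\{\tokenEnd_j\in\vertexSet_C\}\qquad(j\in\{1,2\}),
\]
so the ``measurement'' of $\thought{C}$ against the two candidates is a fixed positive number for $j=i^*$ and exactly $0$ for $j=3-i^*$. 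The rest of the proof is a construction of the last forward pass that converts this gap into a logit gap at coordinate $\tokenEnd_{i^*}$.

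For that last pass I would route the candidate embeddings to position $T$ and then do a temperature-sharpened comparison. The positions of $\tokenEnd_1,\tokenEnd_2$ are at a $C$-dependent offset from $T$, so they cannot be fetched directly by a fixed attention chooser from the answer token; instead I would have an attention chooser keyed on $\tokenStart$ (which sits at a fixed offset from $\tokenEnd_1,\tokenEnd_2$) copy $\Tilde{\vu}_{\tokenEnd_1},\Tilde{\vu}_{\tokenEnd_2}$ into dedicated buffer coordinates during the first thought step, and arrange the MLP/layer-norm of \Cref{lemma:current_thought} to leave those coordinates untouched, so that the residual stream carries $(\Tilde{\vu}_{\tokenEnd_1},\Tilde{\vu}_{\tokenEnd_2})$ along every thought $\thought{c}$. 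At position $T$, the chooser $h_4=(\tokenAnswer,1)$ copies the full $\thought{C}$ from position $T-1$ into buffers of position $T$. Then one second-layer head (or, equivalently, an MLP of the ``filter'' type used in \Cref{lemma:current_thought}) is set so that its content-space output equals $\approx\Tilde{\vu}_{\tokenEnd_{i^*}}$: the two comparison logits are $\beta\langle\thought{C},\Tilde{\vu}_{\tokenEnd_j}\rangle$ for a large inverse temperature $\beta$, so the softmax concentrates on $j=i^*$ because that logit is strictly larger, and the attended value is $\Tilde{\vu}_{\tokenEnd_j}$. Finally $\mReadOut$ turns $\approx\Tilde{\vu}_{\tokenEnd_{i^*}}$ into $\approx\onehot_{\tokenEnd_{i^*}}$, whose unique argmax over $\vocab$ is $\tokenEnd_{i^*}$, i.e.\ $\widetilde{\transformer}_{\theta,C,\mReadOut}(\vh_{[t_0]})=\tokenEnd_{i^*}$.

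The step I expect to be the real obstacle is the quantitative control, not the combinatorial idea. The attention choosers of \Cref{lemma:attn_chooser_informal} are only approximately hard: each leaks an $\varepsilon$ fraction of mass onto unintended positions, this error is incurred at up to $T_{\max}$ positions and compounds across the $C$ thought steps, so I would have to carry through \Cref{lemma:current_thought} a quantitative version, e.g.\ $\|\thought{c}-|\vertexSet_c|^{-1/2}\sum_{v\in\vertexSet_c}\vu_v\|_\infty\le\varepsilon'$, propagate it through the last pass to bound $\|\mReadOut\transformer_\theta(\vh_{[T]})-\onehot_{\tokenEnd_{i^*}}\|_\infty$ strictly below $\tfrac12$, and then choose the MLP thresholds and the large parameters $\xi,\eta,\beta$ as explicit functions of $(T_{\max},n_{\max},M)$ so that a single $\theta$ works simultaneously for all admissible graphs. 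One also has to check that the extra $\tokenStart$-keyed heads do not perturb the edge-processing positions, which holds because they write only into coordinates disjoint from those the rest of the construction uses. Modulo this bookkeeping, the theorem follows by chaining \Cref{lemma:current_thought} with the attention-chooser primitive and a final softmax selection.
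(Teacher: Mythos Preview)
Your high-level reduction to \Cref{lemma:current_thought} is correct and matches the paper. However, your prediction mechanism differs substantively from the paper's and contains one concrete bug.

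\textbf{The bug.} You propose an attention chooser ``keyed on $\tokenStart$'' to copy $\tokenEnd_1,\tokenEnd_2$ into fresh coordinates. But $\tokenStart$ is a \emph{vertex} token that varies from instance to instance, not a fixed special token; you cannot key an attention chooser in the sense of \Cref{lemma:attn_chooser_informal} on it and still have $\theta$ graph-independent. The fix is immediate: key on the special token $\tokenReasoning$, which sits at offsets $2$ and $1$ from $\tokenEnd_1,\tokenEnd_2$ --- and this is exactly what the paper does (heads $h_2=(\tokenReasoning,2)$ and $h_3=(\tokenReasoning,1)$ in \Cref{prop:first_attn_copy}).

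\textbf{The different route.} Once the candidates are parked at the $\tokenReasoning$ position, the paper does not thread them through every thought nor perform a softmax selection. Instead, the second-layer key matrix contains a rank-one term $\tau\,\tilde\vu_\tokenAnswer\otimes\tilde\vu_\tokenReasoning$, so that $\tokenAnswer$ attends (by content, not position) to $\tokenReasoning$ and copies $\tilde\vu_{\tokenEnd_1}+\tilde\vu_{\tokenEnd_2}$ into its content space; meanwhile $h_4=(\tokenAnswer,1)$ has already placed $\sum_{v\in\vertexSet_C}\tilde\vu_v$ in a buffer. The second-layer MLP thresholds and \emph{adds} these two pieces, yielding after layer normalization
\[
\vh^{(2)}_T=\frac{1}{\sqrt{|\vertexSet_C|+5}}\Bigl(2\vu_{\tokenEnd_{i^*}}+\vu_{\tokenEnd_{3-i^*}}+\vu_\tokenAnswer+\sum_{v\in\vertexSet_C\setminus\{\tokenEnd_{i^*}\}}\vu_v\Bigr),
\]
so with $\mReadOut=[\vu_1,\dots,\vu_V]^\top$ the coordinate $\tokenEnd_{i^*}$ receives coefficient $2$ and every other coordinate receives $0$ or $1$; the argmax is immediate. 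This additive double-counting is simpler than your softmax-selection head and avoids enlarging the embedding with extra buffers that must survive all $C$ thought steps untouched by the MLP/layer-norm.

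\textbf{The quantitative worry is a non-issue.} \Cref{lemma:current_thought} is stated and proved as an \emph{exact} equality, not an approximation: the MLP uses the hard threshold $\sigma(x)=\mathbbm{1}\{x\ge\varepsilon\}$, which snaps every coefficient to $0$ or $1$ and kills the $\varepsilon$-leakage from the attention choosers at each step. There is no error accumulation across thoughts, and you do not need to track $\|\cdot\|_\infty$ perturbations or tune a temperature $\beta$ against them; a single choice of $\varepsilon$ depending only on $n_{\max}$ and $T_{\max}$ suffices.
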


\subsection{Discussions}
\label{subsec:discussion}

\paragraph{Role of buffer spaces.} The buffer spaces are subspaces of the embedding to store useful information. For clarity, our construction separates the ``content'' and the two ``buffer'' spaces into different dimensions. In practice, they can be jointly projected into a more compact and lower-dimensional space. For example, we can construct $\vu = \sum_{i=1}^k \mRotation^{(i)}\vu^{(i)} \in \mathbb{R}^d$ where the column space of each $\mRotation^{(i)} \in \mathbb{R}^{d\times d}$ forms a subspace. Different subspaces can be (almost) orthogonal, and for each subspace, the column vectors of $\mRotation^{(i)}$ can also be almost orthonormal.

\paragraph{Weights of different nodes in the superposition.} In our construction, each superposition state maintains nodes with uniform weights. In practice, the weights of different nodes can vary due to factors such as training signals or the model's internal heuristic on which nodes are more likely to reach the final answer~\citep{cohen2025spectral}. In \Cref{sec:exp}, we show that in practice, the training signal could bias the superposition states towards the \emph{frontier nodes} that can be reached with exactly $i$ steps and the \emph{optimal nodes} that can lead to the destination node.
\section{Experiments}
\label{sec:exp}

In this section, we conduct extensive experiments to validate our theoretical results that \coconut{} outperforms discrete CoT even with many fewer layers (\Cref{sec:overall}), which is indeed due to superposition states encoded in continuous thoughts during reasoning (\Cref{sec:vis_latent}).

\begin{wrapfigure}{r}{0.3\linewidth}  %
    \centering
    \includegraphics[width=\linewidth]{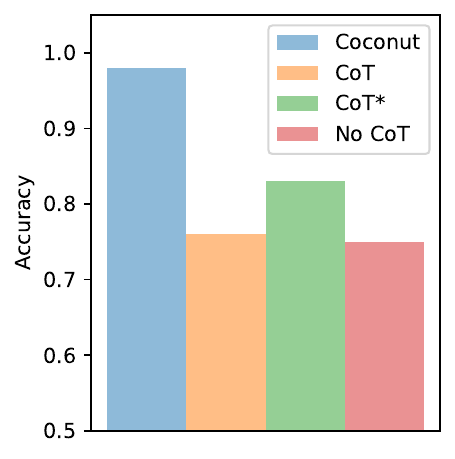}
    \vspace{-1.5em}
    \caption{\small The overall accuracy of \coconut{}, CoT, CoT$^*$(12 layers, $n_{\text{heads}}=12$) and No CoT.}
    \label{fig:main-results}
    \vspace{-2em}  %
\end{wrapfigure}

\subsection{Training Setup}

\paragraph{Model.} We adopt a GPT‑2 style decoder with two transformer layers, $d_\text{model}{=}768$, $n_\text{heads}{=}8$. We train from scratch using AdamW ($\beta_1{=}0.9$, $\beta_2{=}0.95$, weight‑decay~$10^{-2}$) and a constant learning rate of $1\times10^{-4}$.

\paragraph{Dataset.} We construct a subset of ProsQA~\citep{hao2024training}, with questions whose solutions require 3–4 reasoning hops. Each node in the graph is injected as a dedicated token into the vocabulary. The split statistics are summarised in 
\autoref{tab:data‑stats}.

\paragraph{Method.}
Following \citet{hao2024training}, we adopt a multi-stage training strategy with the supervision from chain-of-thoughts data. Stage~$i$ teaches the model to use $i$ continuous thoughts before predicting the $i$‑th node in the given chain of thought as the next token. If the index of the training stage is larger than the CoT solution length $l$, the model is trained to output the final prediction after $l$ continuous thoughts and $\tokenAnswer$. We train the model for 25 epochs at each stage, and the whole training lasts 300 epochs in total. At each stage, we mix the data from the previous stage randomly with 0.1 probability, which helps improve performance in our early experiments.

\subsection{Overall Results}

\label{sec:overall}

Extending the findings of \citet{hao2024training}, we further show that a 2-layer transformer trained from scratch can effectively solve ProsQA when using \coconut{}. As shown in \autoref{fig:main-results}, \coconut{} achieves near-perfect accuracy, while both CoT and No CoT baselines solve only about 75\% of the tasks (random guessing = 50\%). Even with a larger model of 12 layers and $n_\text{heads}=12$, marked with $*$ in the figure, CoT improves to 83\% accuracy but still cannot solve the tasks reliably.

\subsection{Visualising Latent Reasoning}
\label{sec:vis_latent}

We inspect both the \textit{attention pattern} and the \textit{representation} of the continuous thought learned by the model to validate our theoretical construction.

\paragraph{Layer 1 attention.} According to our theoretical construction, the most important function of \textsc{Layer 1} attention heads is to \emph{copy} the source and target node tokens of an edge onto the corresponding edge token $\langle e\rangle$. \Cref{fig:attn_l1_example} shows a representative attention map, confirming that the model has instantiated this copying mechanism in practice.

\paragraph{Layer 2 attention.} \textsc{Layer 2} is responsible for \emph{node expansion}: each continuous thought attends to all outgoing edges from nodes that are currently reachable.  
To quantify this behavior, we compute, when generating $i$‑th continuous thought, the aggregated attention score received by each edge token triplet ($\tokenSource$, $\tokenTarget$, \tokenEdge) across all heads. 
4 kinds of edges exist: (1) \textit{Reachable}: their source node is in the reachable set at step $i$; (2) \textit{Not Reachable}: source node \emph{not} in the reachable set; (3) \textit{Frontier}: a subset of reachable edges whose source node is on the current search frontier, i.e., exactly $i$ steps away from the root node; (4) \textit{Optimal}: a subset of frontier edges that lead to the optimal reasoning chain.

\begin{wrapfigure}{r}{0.5\linewidth}
    \centering
    \includegraphics[width=\linewidth]{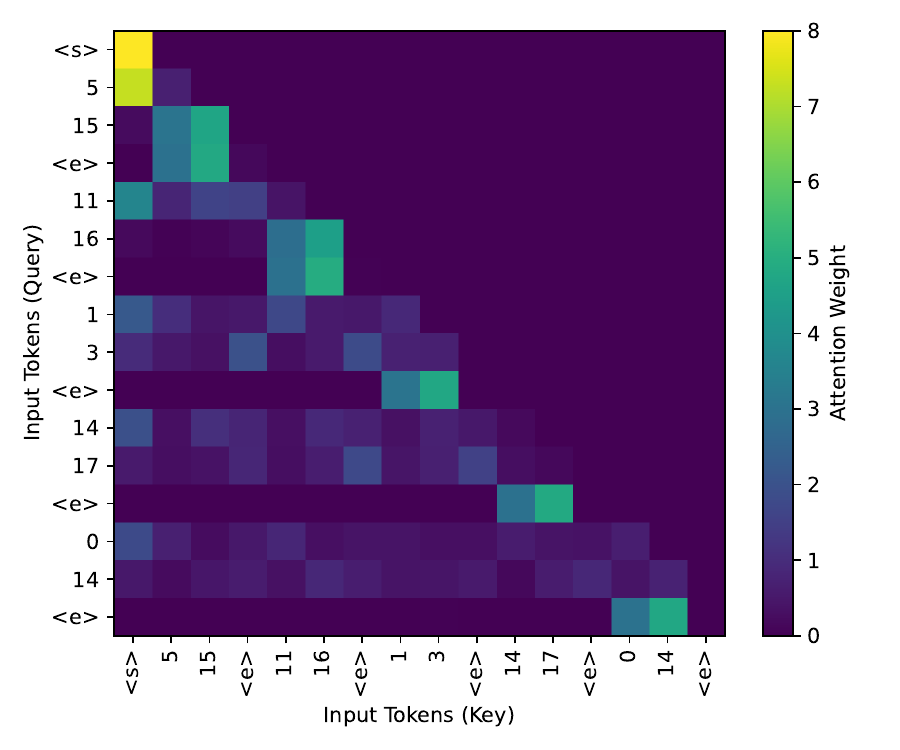}
    \caption{\small A representative example of Layer 1 attention map: the edge token $\tokenEdge$ ($y$-axis) places nearly all its attention mass on its source and target nodes ($x$-axis), in line with the theoretical construction.}
    \label{fig:attn_l1_example}
\end{wrapfigure}

Table~\ref{tab:mean_attention_2} reports group‑wise means and standard deviations averaged on the test set. The model sharply concentrates its attention on \textit{Reachable} edges, as predicted by our theoretical construction. Interestingly, there is an additional bias toward the \textit{Frontier} subset. One possibility is that the training signal encourages the model to predict a frontier node at each step, coupled with the decaying effects for previously explored nodes. We also find that the optimal edges receive more attention scores, likely due to the supervision of multi-stage training from the CoT solution.

\paragraph{Representation of continuous thoughts.} To verify that the continuous thought serves as superposition states for the search, we compute the inner product between the continuous thought at step $i$, $\thought{i}$, and each node embedding $\vu_v$. Similar to edge classification above, we classify nodes into \textit{Reachable}, \textit{Not Reachable}, \textit{Frontier}, and \textit{Optimal}. Figure~\ref{fig:inner_product} (top) plots the distribution segregated by the reasoning step $i$. As predicted, nodes within $i$ hops exhibit markedly higher similarity scores than distant nodes.  Moreover, \textit{Frontier} nodes are noticeably closer to $\thought{i}$ than other reachable nodes, illustrating that the superposition emphasizes the candidate expansion fronts. Besides, the \textit{Optimal} nodes are even closer to $\thought{i}$, also likely due to the training data always presenting the optimal path.

Collectively, these analyses confirm that the intended \emph{superpositional search} exists in trained models: Layer 1 establishes the query context, Layer 2 expands the frontier, and the latent vectors encode a soft, parallel representation of reachable state sets, realizing the theoretical construction in Section~\ref{sec:theory}. We also show in Appendix~\ref{sec:stability} that this search pattern is consistent across multiple experiments with random seeds.

\begin{figure}[h]
    \centering
    \includegraphics[width=\linewidth]{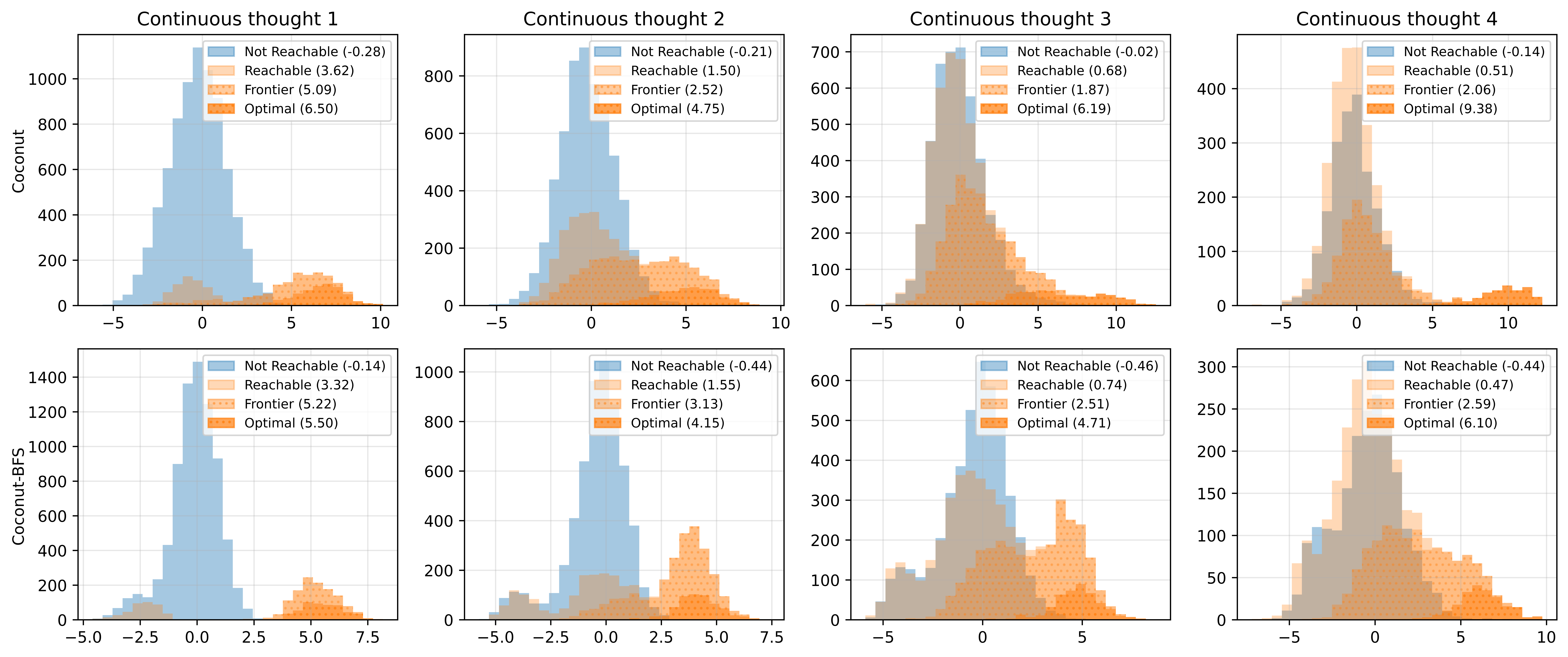}
    \caption{\small The histogram of inner product between the $i$-th continuous thoughts and the node embeddings. The mean value for each group is shown in the legend. Note that \textit{Frontier} is a subset of \textit{Reachable} nodes, and \textit{Optimal} is a subset of \textit{Frontier} nodes.}
    \label{fig:inner_product}
\end{figure}

\newcommand{\pms}{\!\scriptstyle\ \pm \!}
\begin{wraptable}{r}{0.62\linewidth}  %
  \centering
  \small                              %
  \caption{\small Layer 2 attention scores to different edge groups at step $i$ (mean $\pm$ standard deviation).}
  \label{tab:mean_attention_2}
  \begin{tabular}{@{}lcccc@{}}
    \toprule

                     & Step 1 & Step 2 & Step 3 & Step 4 \\ \midrule
    Not Reachable    & $0.04 \pms 0.07$ & $0.03 \pms 0.09$ & $0.08 \pms 0.17$ & $0.12 \pms 0.20$ \\
    Reachable        & $2.12 \pms 1.07$ & $0.71 \pms 0.92$ & $0.38 \pms 0.72$ & $0.29 \pms 0.66$ \\
    \quad–Frontier   & $2.12 \pms 1.07$ & $1.00 \pms 0.96$ & $0.67 \pms 0.87$ & $0.61 \pms 0.95$ \\
    \quad–Optimal    & $2.54 \pms 1.03$ & $1.72 \pms 1.13$ & $1.67 \pms 1.20$ & $2.23 \pms 1.35$ \\ \bottomrule  \end{tabular}
\end{wraptable}

\subsection{Exploration Priority}

\label{sec:exploration}

An interesting fact revealed by the visualizations in Section~\ref{sec:vis_latent} is that the model allocates disproportionate attention to \textit{optimal} edges and nodes, reminiscent of a prioritized search strategy. One hypothesis is that this behavior is an artifact of our multi‑stage curriculum, which explicitly guides the model on the CoT solution at every step. To understand the effect of this multi-stage guidance, we introduce an alternative supervision method \coconut{}‑BFS: at stage~$i$, the supervision token is drawn uniformly at random from the frontier nodes exactly $i$ hops from the root, rather than the $i$‑th node in the CoT solution. All other hyperparameters are unchanged.

Our experiment results indicate that \coconut{}‑BFS also achieves near‑perfect accuracy on ProsQA, matching the original \coconut{}. Figure~\ref{fig:inner_product} compares the inner product distributions of the latent thoughts for both models. Remarkably, the two supervision methods converge to similar exploration strategies, even though there is no explicit guidance towards optimal nodes for \coconut{}-BFS. 
Conversely, the original \coconut{}, although trained exclusively on optimal nodes during intermediate stages, still assigns elevated weight to non‑optimal frontier nodes compared to other non-frontier nodes, implicitly performing a breadth‑first expansion before honing in on the solution. We leave explaining this behavior from the perspective of training dynamics as future work.

\vspace{-0.1in}
\section{Conclusions}
\vspace{-0.1in}
\label{sec:conclusion}

In this paper, we study how the chain-of-continuous-thought boosts LLM's reasoning capability by focusing on the graph reachability problem. We provided a theoretical construction of a two-layer transformer that can efficiently solve graph reachability for an $ n$-vertex $D$-diameter directed graph by $D$ steps of continuous thoughts, while the best existing result on constant-depth transformers with discrete CoT requires $O(n^2)$ steps. Our construction reveals that the superposition states that encode multiple search traces simultaneously are the key to the strong reasoning capability of \coconut{}, and we conducted thorough experiments to validate that our theoretical construction matches the solutions obtained by training dynamics. Several interesting future directions include: (1) Deriving a lower bound on the number of discrete CoT steps in the graph reachability problem to show a strict separation of expressivity between CoT and \coconut{}; (2) Theoretical understanding of the emergence of exploration behavior with only deterministic search trace demonstration via training dynamics; (3) Advantages of reasoning in a continuous space in more general settings.

\section*{Acknowledgements}

This work was partially supported by a gift from Open Philanthropy to the Center for Human-Compatible AI (CHAI) at UC Berkeley and by NSF Grants IIS-1901252 and CCF-2211209.

\bibliography{references}
\bibliographystyle{plainnat}

\newpage
\appendix
\section{Notation Details}
\label{app:sec:notation}

We provide detailed descriptions of different tokens in \Cref{tab:token_notation}, and the position index of different tokens or continuous thoughts in \Cref{tab:pos_idx}.

\begin{table}[ht]
    \centering
    \begin{tabular}{cc}
        \toprule
           \textbf{Tokens} & \textbf{Meanings} \\
        \midrule
          \tokenBOS & a special token denoting the beginning of the sentence \\
          $\tokenSource_i$ & the source node of edge $i$ \\ 
          $\tokenTarget_i$ & the target node of edge $i$ \\ 
          \tokenEdge & a special token marking the end of an edge \\ 
          \tokenQUERY & a special token followed by two candidate nodes \\
          $\tokenEnd_1, \tokenEnd_2$ & two candidate destination nodes  \\
          $\tokenReasoning$ & a special token marking the start of reasoning \\
          \tokenStart & the root node \\
          $\thought{i}$ & the $i$-th continuous thought (represented by a $d$-dimensional vector) \\ 
          $\tokenAnswer$ &  a special token driving the model to make the final prediction  \\
        \bottomrule
    \end{tabular}\\[0.5em]
    \caption{Meaning of each token.}
    \label{tab:token_notation}
\end{table}

\begin{table}[ht]
    \centering
    \begin{tabular}{cc}
        \toprule
           \textbf{Notations} & \textbf{Position indices} \\
        \midrule
          $\PosIdx(\tokenBOS)$ & $1$ \\
          $\PosIdx(\tokenSource_i)$ & $3i-1$ \\
          $\PosIdx(\tokenTarget_i)$ & $3i$ \\ 
          $\PosIdx(\tokenEdge, i)$ & $3i+1$ \\
          $\PosIdx(\tokenQUERY)$ & $3m+2$ \\
          $\PosIdx(\tokenEnd_1)$ & $3m+3$ \\
          $\PosIdx(\tokenEnd_2)$ & $3m+4$ \\         $\PosIdx(\tokenReasoning)$ & $3m+5$ \\
          $\PosIdx(\tokenStart)$ & $3m+6 = t_0$ \\
          $\PosIdx(\thought{i})$ & $t_0+i$ \\
          $\PosIdx(\tokenAnswer)$ & $t_0+C+1 = T$ \\
        \bottomrule
    \end{tabular}\\[0.5em]
    \caption{Position indices of different tokens or continuous thoughts in the input sequence.}
    \label{tab:pos_idx}
\end{table}

\section{Missing Proofs}
\label{app:sec:missing_proofs}

\subsection{Formal version and proof of \Cref{lemma:attn_chooser_informal}}
\label{app:subsec:proof_attn_chooser}

\begin{lemma}[Attention chooser, formal version of \Cref{lemma:attn_chooser_informal}]
\label{lemma:gated_attn_head}
Fix any token $\tokenx \in \vocab$, integer $\ell \geq 0$, and $\varepsilon \in (0, 1)$. Under sinusoidal positional encoding as defined in \Cref{defn:pos_encoding}, there exists a construction of $\mKey, \mQuery \in \mathbb{R}^{(2d_\pe) \times d}$, such that for any input sequence $(\vh_1, \ldots, \vh_T)$ that satisfies 
\begin{equation}
\label{eq:input_embedding_decomposition}
    \vh_i = \sum_{v\in \vocab} \lambda_{i,v} \vu_v , \text{ where } \lambda_{i,v} \geq 0 \quad \forall v \in \vocab, \sum_{v \in \vocab} \lambda_{i,v}^2 = 1, \quad \forall i \in [T],
\end{equation}
and satisfies $\langle \vu_\tokenx, \vh_i \rangle \in \{0, 1\}$ (i.e., each input embedding is either equal to the embedding of token $\tokenx$ or orthogonal to it) and $\langle \vu_\tokenx, \vh_i \rangle = 0$ for $i \leq \ell$ (i.e., the first $\ell$ tokens are not $\tokenx$), it holds that for any $i \in [T]$,
\begin{align*}
\text{if $\langle \vh_i, \vu_\tokenx \rangle = 1$, then $s_{i,i-l} > 1 - \varepsilon$, otherwise $s_{i,1} > 1 -\varepsilon$},    
\end{align*} 
where $s_{i,j}$ is the attention score from the $i$-th token to the $j$-th token as defined in \Cref{alg:attn_and_mlp} with the input sequence $(\vh_1+\vp_1, \ldots, \vh_T+\vp_T)$.
\end{lemma}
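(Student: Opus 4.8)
The construction is already pinned down in the proof sketch of \Cref{lemma:attn_chooser_informal}; the task is to verify it quantitatively and to fix the order in which the free scalars $\xi,\eta>0$ are chosen so that the resulting $\mQuery,\mKey$ depend only on $\tokenx,\ell,\varepsilon$ and the global bound $T_{\max}$, not on the particular input. The plan is to (i) import two structural facts about the sinusoidal encoding, (ii) compute the query–key products in closed form, (iii) identify the scalar $a_i := \langle\tilde\vu_{\bar\tokenx},\tilde\vh_i\rangle$ in terms of the mixture weights, and (iv) treat the two cases $\langle\vh_i,\vu_\tokenx\rangle\in\{0,1\}$ separately, closing each with a standard softmax-gap estimate.

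First I would invoke the positional-encoding lemmas: the rotation lemma (\Cref{lem:pos_encoding_rotation}) yielding $\mRotation^{(\ell)}$ with $\bar\vp_{i+\ell}=\mRotation^{(\ell)}\bar\vp_i$ for all $i$, and the distinctness lemma (\Cref{lem:pos_encoding_distinct}), which gives a uniform gap $\delta>0$ — depending only on $d_\pe,M,T_{\max}$ — such that for fixed $a$ the map $b\mapsto\langle\bar\vp_a,\bar\vp_b\rangle$ is uniquely maximized at $b=a$ with margin at least $\delta$ over all other $b$ whose difference from $a$ is at most $T_{\max}$, together with the crude bound $|\langle\bar\vp_a,\bar\vp_b\rangle|\le B:=d_\pe/2$. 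I would then set $\mQuery,\mKey$ exactly as in the sketch. Since every token embedding $\vu_v$ lives in the content block and every $\vp_i$ in the positional block, $\mQuery(\vh_i+\vp_i)=(\bar\vp_i;\ \xi a_i\,\bar\vp_1)$ and $\mKey(\vh_j+\vp_j)=(\eta\bar\vp_{j+\ell};\ \eta\bar\vp_j)$, hence for $1\le j\le i\le T$,
\[
\langle\vq_i,\vk_j\rangle=\eta\bigl(\langle\bar\vp_i,\bar\vp_{j+\ell}\rangle+\xi\,a_i\,\langle\bar\vp_1,\bar\vp_j\rangle\bigr).
\]
Next I would pin down $a_i$ using only \eqref{eq:input_embedding_decomposition} and orthonormality of $\{\vu_v\}$: writing $\vh_i=\sum_v\lambda_{i,v}\vu_v$ with $\lambda_{i,v}\ge0$, $\sum_v\lambda_{i,v}^2=1$, we get $\langle\vu_\tokenx,\vh_i\rangle=\lambda_{i,\tokenx}$ and $a_i=\sum_{v\ne\tokenx}\lambda_{i,v}$. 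If $\langle\vh_i,\vu_\tokenx\rangle=1$ then $\lambda_{i,\tokenx}=1$ forces all remaining weights to vanish, so $a_i=0$; if $\langle\vh_i,\vu_\tokenx\rangle=0$ then $\lambda_{i,\tokenx}=0$ and $a_i=\|(\lambda_{i,v})_{v\ne\tokenx}\|_1\ge\|(\lambda_{i,v})_{v\ne\tokenx}\|_2=1$ (in fact $a_i\in[1,\sqrt{V}]$).

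Now the two cases. In Case 1 ($\langle\vh_i,\vu_\tokenx\rangle=1$, so $i>\ell$ by the hypothesis that none of the first $\ell$ tokens equals $\tokenx$), the second term drops and $\langle\vq_i,\vk_j\rangle=\eta\langle\bar\vp_i,\bar\vp_{j+\ell}\rangle$ over $j\in[i]$; this is maximized at $j=i-\ell$ (where $j+\ell=i$), with margin at least $\eta\delta$ over every other $j$ (all relevant index differences being bounded by $T_{\max}$), so the softmax estimate $s_{i,i-\ell}\ge\bigl(1+T_{\max}\,e^{-\eta\delta}\bigr)^{-1}$ gives $s_{i,i-\ell}>1-\varepsilon$ once $\eta$ is large. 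In Case 2 ($a_i\ge1$), compare $j=1$ with any $j\ne1$: the anchor term contributes $\xi a_i\bigl(\langle\bar\vp_1,\bar\vp_1\rangle-\langle\bar\vp_1,\bar\vp_j\rangle\bigr)\ge\xi\delta$, while the rotation term shifts by at most $2B$ in magnitude, so $\langle\vq_i,\vk_1\rangle-\langle\vq_i,\vk_j\rangle\ge\eta(\xi\delta-2B)$. I would therefore fix $\xi$ so that $\xi\delta-2B\ge1$ and then $\eta$ so that $T_{\max}\,e^{-\eta}<\varepsilon/(1-\varepsilon)$ (and likewise $T_{\max}\,e^{-\eta\delta}<\varepsilon/(1-\varepsilon)$), which yields $s_{i,1}>1-\varepsilon$ by the same estimate. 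The quantifier order — $\delta,B$ from the encoding, then $\xi$ from $\delta,B$, then $\eta$ from $\xi,\delta,\varepsilon,T_{\max}$ — makes $\mQuery,\mKey$ independent of the input and of its length $T\le T_{\max}$.

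\textbf{Main obstacle.} The delicate point is Case 2: one must guarantee that the fixed ``anchor'' pull toward position $1$ dominates the sign-indefinite $\ell$-shifted positional inner product \emph{uniformly} over all $i$, all $j\ne1$, and all admissible inputs. This is exactly what forces the two-stage choice ($\xi\gg1$ first, then $\eta\gg1$) and what makes the argument depend on the uniform spectral gap $\delta$ of the sinusoidal encoding over the bounded window — rather than on any length-specific or asymptotic property — so that a single parameter set works for every input length up to $T_{\max}$. A secondary bookkeeping point is the exclusion of $i\le\ell$ in Case 1, handled by the assumption that the first $\ell$ tokens are not $\tokenx$.
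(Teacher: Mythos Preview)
Your proposal is correct and follows essentially the same approach as the paper's proof: the identical construction of $\mQuery,\mKey$, the same closed-form score $\langle\vq_i,\vk_j\rangle=\eta(\langle\bar\vp_i,\bar\vp_{j+\ell}\rangle+\xi a_i\langle\bar\vp_1,\bar\vp_j\rangle)$, the same dichotomy $a_i\in\{0\}\cup[1,\infty)$ derived from orthonormality and $\|\cdot\|_1\ge\|\cdot\|_2$, and the same two-stage choice of $\xi$ (to swamp the $2B$ swing of the rotation term) followed by $\eta$ (to sharpen the softmax). Your treatment is in fact slightly cleaner in that you make the dependence on the global length bound $T_{\max}$ explicit when fixing the gap $\delta$ and the scalar $\xi$, whereas the paper writes $\xi=\max_{j=2,\ldots,T}2d_\pe/(\langle\bar\vp_1,\bar\vp_1\rangle-\langle\bar\vp_1,\bar\vp_j\rangle)$ with the running $T$; this is only a cosmetic difference since the paper's main theorem also fixes $T_{\max}$ up front.
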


\begin{proof}
Note that \eqref{eq:input_embedding_decomposition} implies that each input embedding is a normalized superposition of token embeddings in the vocabulary.
We aim to construct an attention head such that when the $i$-th token is $\tokenx$, it will pay almost all attention to the position $i-\ell$, otherwise it will pay almost all attention to the BOS token $\tokenBOS$ (known as the attention sink). Define vector $\Tilde \vu_{\bar \tokenx} =  \sum_{v\in\vocab\backslash\{\tokenx\}} \Tilde{\vu}_v \in \mathbb{R}^{d_\te}$, which is the superposition of all token embeddings in the vocabulary except for $\tokenx$.  We define
\begin{align*}
   \mQuery =& \begin{bmatrix}
      \mZero_{d_\pe \times d_\te} & \mZero_{d_\pe \times d_\te} & \mZero_{d_\pe \times d_\te} & \mI_{d_\pe} \\
      \xi\bar \vp_1 \otimes \tilde \vu_{\bar \tokenx} & \mZero_{d_\pe \times d_\te} & \mZero_{d_\pe \times d_\te} & \mZero_{d_\pe \times d_\pe}
   \end{bmatrix} \in \mathbb{R}^{(2d_\pe)\times d}, 
   \\
   \mKey =& \begin{bmatrix}
      \mZero_{d_\pe \times d_\te} & \mZero_{d_\pe \times d_\te} & \mZero_{d_\pe \times d_\te} & \eta \mRotation^{(\ell)} \\
      \mZero_{d_\pe \times d_\te} &  \mZero_{d_\pe \times d_\te} &  \mZero_{d_\pe \times d_\te} & \eta \mI_{d_\pe}
   \end{bmatrix} \in \mathbb{R}^{(2d_\pe)\times d},
\end{align*}
where $\xi, \eta > 0$ will be specified later and $\mRotation^{(\ell)}$ is defined as in \Cref{lem:pos_encoding_rotation}.
Therefore,
\begin{align*}
    \vq_i = \mQuery(\vh_i+\vp_i) = \begin{bmatrix}
        \bar \vp_i \\ \xi \langle \tilde \vu_{\bar \tokenx}, \tilde \vh_i \rangle \bar \vp_1 \end{bmatrix},  \quad \vk_i = \mKey(\vh_i+\vp_i)  = \begin{bmatrix}
        \eta \mRotation^{(\ell)} \bar \vp_i \\ \eta \bar \vp_i 
        \end{bmatrix} = 
        \begin{bmatrix}
        \eta \bar \vp_{i+\ell} \\ \eta \bar \vp_i 
        \end{bmatrix}.
\end{align*}
Now for any $1 \leq j \leq i \leq T$, we have 
\begin{align*}
    \langle \vq_i, \vk_j \rangle = \eta \left( \langle \bar \vp_i, \bar \vp_{j+\ell}  \rangle + \xi  \langle \tilde \vu_{\bar \tokenx}, \tilde \vh_i \rangle \langle \bar \vp_1, \bar \vp_j \rangle\right) . 
\end{align*}

Now we fix $i \in [T]$. We first consider the case where $\langle \vh_i, \vu_\tokenx \rangle = 1$ (which also implies $i > \ell$). By \eqref{eq:input_embedding_decomposition} and the assumption that token embeddings are orthonormal, we have
\begin{align*}
    \langle \tilde \vh_i, \tilde \vu_v \rangle =  \langle \vh_i, \vu_v \rangle = 0, \ \forall v \in \vocab\backslash\{\tokenx\} \implies   \langle \tilde \vh_i, \Tilde \vu_{\bar \tokenx}  \rangle = 0.
\end{align*}
Therefore, we have $\langle \vq_i, \vk_j \rangle = \eta \langle \bar \vp_i, \bar \vp_{j+\ell}  \rangle$. By \Cref{lem:pos_encoding_distinct}, we have 
\begin{align*}
    \langle \vq_i, \vk_{i-\ell} \rangle =  \eta \langle \bar \vp_i, \bar \vp_{(i-\ell)+\ell}  \rangle =  \eta \langle \bar \vp_i, \bar \vp_{i}  \rangle = \eta d_\pe/2
\end{align*}
and 
\begin{align*}
    \langle \vq_i, \vk_{j} \rangle = \eta \langle \bar \vp_i, \bar \vp_{j+\ell} \rangle \leq \eta d_\pe/2 - \eta \varepsilon_T , \ \forall j \neq i-\ell. 
\end{align*}
where $\varepsilon_T > 0$ is defined in \Cref{lem:pos_encoding_distinct}. This implies $\langle \vq_i, \vk_j \rangle$ is maximized when $j = i-\ell$ with a non-zero gap $\eta \varepsilon_T$, and therefore, we have
\begin{equation}
\label{eq:attn_first_layer_pos}
    s_{i,i-\ell} = \frac{ \exp(\langle \vq_i, \vk_{i-\ell} \rangle)}{\sum_{j \in [i]} \exp(\langle \vq_i, \vk_{j} \rangle)} \geq \frac{\exp(\eta\varepsilon_T)}{\exp(\eta\varepsilon_T) + (i-1)}.
\end{equation}

Now we consider the case where $\langle \vh_i, \vu_\tokenx \rangle = 0$. Again, by \eqref{eq:input_embedding_decomposition} and the orthonormal assumption of token embeddings, 
we have 
\begin{align*}
    \langle \Tilde \vh_i, \Tilde \vu_{\bar \tokenx} \rangle = \sum_{v \in \vocab\backslash \{ \tokenx \}} \langle \vh_i, \vu_v \rangle =  \sum_{v \in \vocab\backslash \{ \tokenx \}} \lambda_{i,v} \geq \sum_{v \in \vocab\backslash \{ \tokenx \}} \lambda_{i,v}^2 = 1.
\end{align*}

By \Cref{lem:pos_encoding_distinct}, we can define 
$\xi = \max_{j=2, \ldots, T} \frac{2 d_\pe}{\langle \bar \vp_1, \bar \vp_1 \rangle - \langle \bar \vp_1, \bar \vp_j \rangle }$, and thus
\begin{align*}
    \xi (  {\langle \bar \vp_1, \bar \vp_1 \rangle } 
 - {\langle \bar \vp_1, \bar \vp_j \rangle } )\geq   2d_\pe, \quad \forall j \in \{2, \ldots, T\}.
\end{align*}
Note that for any $j \in \{2, \ldots, T\}$, we have
\begin{align*}
     &\langle \vq_i, \vk_{1} \rangle - \langle \vq_i, \vk_{j} \rangle \\ =& \eta \left(  \xi  \langle \tilde \vu_{\bar \tokenx}, \tilde \vh_i \rangle \left( \langle \bar \vp_1, \bar \vp_1 \rangle  - \langle \bar \vp_1, \bar \vp_j \rangle \right) - \left( \langle \bar \vp_i, \bar \vp_{j+\ell}  \rangle - \langle \bar \vp_i, \bar \vp_{1+\ell}  \rangle \right) \right) 
     \\ \geq&  \eta \left(  2d_\pe  - d_\pe  \right) \\ =& \eta d_\pe.
\end{align*}
Therefore, 
\begin{equation}
\label{eq:attn_first_layer_sink}
    s_{i,1} = \frac{ \exp(\langle \vq_i, \vk_{1} \rangle)}{\sum_{j \in [i]} \exp(\langle \vq_i, \vk_{j} \rangle)} \geq \frac{\exp(\eta d_\pe)}{\exp(\eta d_\pe) + (i-1)}.
\end{equation}
By choosing a sufficiently large $\eta$, the lower bound of \eqref{eq:attn_first_layer_pos} and \eqref{eq:attn_first_layer_sink} will both exceed $1-\varepsilon$ and thus the proof is complete.
\end{proof}

\subsection{Proof of \Cref{lemma:current_thought}}
\label{app:subsec:proof_key_lemma}

Note that in the proof sketch of \Cref{lemma:current_thought}, we showed how to prove the lemma by induction. Here, we use an alternative way that only requires a single forward pass of the transformer. Note that the continuous thoughts are generated in an autoregressive manner, i.e., for an input sequence $\vh_{[T]} = (\vh_1, \ldots, \vh_T)$, we have $\vh_t^{(L)} = \transformer_\theta(\vh_{[t]})$, where $\vh_t^{(L)}$ is defined in \Cref{alg:transformer_forward} with the input sequence $\vh_{[T]}$. This means for a sequence of length $t$, appending additional vectors at the end of the sequence does not affect the computation of the first $t$ positions. This means, instead of proving the following property inductively for each $c = 0, 1, \ldots, C-1$:
\begin{align*}
    \thought{c+1} = \transformer_\theta(\vh_{[t_0]}, \thought{1}, \ldots, \thought{c}) 
\end{align*}
where for each $c$,  it holds $\thought{c} = \frac{1}{\sqrt{|\vertexSet_c|}} \sum_{v \in \vertexSet_c } \vu_v$ and $\vh_{[t_0]}$ is defined as in \Cref{sec:prob_formulation}, we can instead prove by a single forward pass by setting the input embedding 
\begin{align}
\label{eq:app_proof_input_condition}
\vh_{t_0+c} = \frac{1}{\sqrt{|\vertexSet_c|}} \sum_{v \in \vertexSet_c } \vu_v, \quad \forall c \in [C],    
\end{align}
 and additionally setting 
$\vh_T = \vh_{t_0+C+1} = \vu_\tokenAnswer$, and prove the hidden embedding in the last layer with input $\vh_{[T]}$ satisfies
\begin{align*}
    \vh_{t_0+c}^{(L)} = \frac{1}{\sqrt{|\vertexSet_{c+1}|}} \sum_{v \in \vertexSet_{c+1}} \vu_v, \quad \forall 0 \leq c \leq C.
\end{align*}
In \Cref{app:subsec:construction_transformer_param}, we construct the parameter for each component of the two-layer transformer. Finally, \Cref{prop:second_layer_mlp} precisely provides the result we desire.

\subsection{Construction of transformer parameters}
\label{app:subsec:construction_transformer_param}

\begin{proposition}[First layer attention] 
\label{prop:first_attn_copy} For any fixed $\varepsilon \in (0, 1/2)$, there exists a construction of key and query matrices for first layer attention heads $\{ \mQuery^{(0,h)}, \mKey^{(0,h)} \}_{h=0, \ldots,  4}$ s.t. for any input embeddings $\vh = (\vh_1, \dots, \vh_{t})$ satisfying the format specified in \Cref{app:subsec:proof_key_lemma}, the value of following terms exceed $1-\varepsilon$ for all $i \in [t]$:
\begin{itemize}
    \item $s_{i, i-2}^{(0, 0)}$ if $\vh_i = \vu_{\tokenEdge}$, and $s_{i, 1}^{(0, 0)}$ otherwise;  $s_{i, i-1}^{(0, 1)}$ if $\vh_i = \vu_{\tokenEdge}$, and $s_{i, 1}^{(0, 1)}$ otherwise;
    \item $s_{i, i-2}^{(0, 2)}$ if $\vh_i = \vu_{\tokenReasoning}$, and $s_{i, 1}^{(0, 2)}$ otherwise;  $s_{i, i-1}^{(0, 3)}$ if $\vh_i = \vu_{\tokenReasoning}$, and $s_{i, 1}^{(3)}$ otherwise;
    \item $s_{i, i-1}^{(0, 4)}$ if $\vh_i = \vu_{\tokenAnswer}$, and $s_{i, 1}^{(0, 4)}$ otherwise;
\end{itemize}
where $s_i^{(0, h)} \gets \softmax (\langle \vq_i^{(h)}, \vk_1^{(h)} \rangle, \dots, \langle \vq_i^{(h)}, \vk_i^{(h)} \rangle) \in \mathbb{R}^i$ and $\vk_i^{(h)} = \mKey^{(0,h)}\vh_i^{(0)}, \vq_i^{(h)} = \mQuery^{(0,h)}\vh_i^{(0)}$, $\vh_i^{(0)} = \vh_i + \vp_i$.
\end{proposition}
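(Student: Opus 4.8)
The plan is to derive \Cref{prop:first_attn_copy} as an immediate corollary of the attention chooser lemma (\Cref{lemma:gated_attn_head}), invoked five times in parallel. For head $h\in\{0,1,2,3,4\}$ I would take the pair $(\tokenx_h,\ell_h)$ to be $(\tokenEdge,2)$, $(\tokenEdge,1)$, $(\tokenReasoning,2)$, $(\tokenReasoning,1)$, $(\tokenAnswer,1)$ respectively, instantiate \Cref{lemma:gated_attn_head} with that trigger token, that offset, the given $\varepsilon\in(0,1/2)\subset(0,1)$, and $T=T_{\max}$, and let $\mQuery^{(0,h)},\mKey^{(0,h)}\in\mathbb{R}^{(2d_\pe)\times d}$ be the matrices it produces (these fix the internal scalars $\xi,\eta$ in terms of $\varepsilon$ and $T_{\max}$). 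Since the lemma's guarantee is stated for every position $i$ of every admissible sequence, and an input of length $t<T_{\max}$ is just a prefix satisfying the same hypotheses, this one parameter choice handles all input lengths $t\le T_{\max}$ simultaneously, which is what the proposition requires.

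The only actual work is to check that the input embeddings specified in \Cref{app:subsec:proof_key_lemma} — the $t_0$ prompt-token embeddings, the continuous thoughts $\thought{c}=|\vertexSet_c|^{-1/2}\sum_{v\in\vertexSet_c}\vu_v$, and $\vh_T=\vu_\tokenAnswer$ — satisfy the three preconditions of \Cref{lemma:gated_attn_head} for each trigger $\tokenx\in\{\tokenEdge,\tokenReasoning,\tokenAnswer\}$. For \eqref{eq:input_embedding_decomposition}: a token embedding $\vu_v$ has coefficient vector $\lambda_{i,\cdot}=\mathbbm{1}\{\cdot=v\}\ge 0$ of unit $\ell_2$ norm by orthonormality of $\{\vu_v\}$; a continuous thought has coefficient $|\vertexSet_c|^{-1/2}\ge 0$ on each $v\in\vertexSet_c$ and $0$ elsewhere, so $\sum_v\lambda_{i,v}^2=|\vertexSet_c|\cdot|\vertexSet_c|^{-1}=1$. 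For $\langle\vu_\tokenx,\vh_i\rangle\in\{0,1\}$: on a token embedding it equals $\mathbbm{1}\{\text{token}=\tokenx\}$, and on a continuous thought it equals $|\vertexSet_c|^{-1/2}\mathbbm{1}\{\tokenx\in\vertexSet_c\}$, which is $0$ because $\vertexSet_c$ contains only graph vertices while $\tokenEdge,\tokenReasoning,\tokenAnswer$ are special tokens; hence the value is always in $\{0,1\}$. For the ``first $\ell$ tokens are not $\tokenx$'' condition: by \Cref{tab:pos_idx} positions $1$ and $2$ carry $\tokenBOS$ and $\tokenSource_1$, neither of which equals any of $\tokenEdge,\tokenReasoning,\tokenAnswer$, so the condition holds for $\ell\in\{1,2\}$.

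With the preconditions verified, \Cref{lemma:gated_attn_head} applied to head $h$ states exactly that, when the attention scores are computed from the positionally encoded sequence $(\vh_1+\vp_1,\dots,\vh_t+\vp_t)$, we have $s^{(0,h)}_{i,i-\ell_h}>1-\varepsilon$ whenever $\langle\vh_i,\vu_{\tokenx_h}\rangle=1$ — which within this input class is equivalent to $\vh_i=\vu_{\tokenx_h}$ — and $s^{(0,h)}_{i,1}>1-\varepsilon$ otherwise. Reading this off for $h=0,\dots,4$ yields precisely the three bulleted statements of \Cref{prop:first_attn_copy}. I do not expect a genuine obstacle; the single point requiring care — and the reason a graph-independent, length-independent parameterization exists at all — is that each trigger token must be a symbol that never appears as a graph vertex, so that $\langle\vu_\tokenx,\thought{c}\rangle$ vanishes identically and the continuous thoughts stay inside the class of inputs handled by \Cref{lemma:gated_attn_head}.
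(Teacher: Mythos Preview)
Your proposal is correct and follows exactly the paper's approach: the paper's proof consists of a single sentence observing that each of the five heads is an attention chooser and invoking \Cref{lemma:gated_attn_head} with the pairs $(\tokenEdge,2),(\tokenEdge,1),(\tokenReasoning,2),(\tokenReasoning,1),(\tokenAnswer,1)$. Your write-up is in fact more thorough than the paper's, since you explicitly verify the three preconditions of \Cref{lemma:gated_attn_head} on the input class of \Cref{app:subsec:proof_key_lemma} (in particular the observation that $\langle\vu_\tokenx,\thought{c}\rangle=0$ because special tokens never occur as graph vertices), whereas the paper leaves this implicit.
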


\begin{proof}

Note that each of the attention heads is an attention chooser as defined in \Cref{lemma:gated_attn_head}. Therefore, the proposition directly holds by constructing each attention head as described in \Cref{lemma:gated_attn_head}.

\end{proof}

By \Cref{prop:first_attn_copy}, each edge token $\tokenEdge$ will pay attention to its corresponding source node and target node. Also, the reasoning token $\tokenReasoning$ will pay attention to two candidate destination nodes, and the answer token $\tokenAnswer$ will pay attention to the last continuous thought. When no token or position needs to be paid attention to for some attention heads, it will dump attention to the BOS token $\tokenBOS$, a phenomenon known as attention sink. Since the attention after softmax cannot exactly be zero, there will be undesired tokens with noise-level magnitude copied to each position. The subsequent MLP layer will filter out the noise. We formalize the above statements rigorously in the following proposition.

\begin{proposition}[First layer MLP]
\label{prop:first_layer_mlp}
    When the input sequence $\vh_{[T]}$ satisfies conditions in \Cref{app:subsec:proof_key_lemma}, there exists a construction of $\theta^{(0,h)}_{\attn}$ for $h \in \{0, \ldots, 4\}$, $\theta_\mlp^{(0)}$ such that the output of the first layer $\vh^{(1)}$ satisfies:
    \begin{itemize}
        \item $\vh^{(1)}_{\PosIdx(\tokenEdge, i)} = \frac{1}{\sqrt{3}} [\Tilde{\vu}_{\tokenEdge}^\top \ \Tilde{\vu}_{\tokenSource_i}^\top \  \Tilde{\vu}_{\tokenTarget_i}^\top \ \mZero_{d_\pe}^\top]^\top,  \quad \forall i \in [m]$
        \item $\vh^{(1)}_{\PosIdx(\tokenReasoning)} = \frac{1}{\sqrt{3}}[\Tilde{\vu}_{\tokenReasoning}^\top \ \mZero_{d_\te}^\top \ (\Tilde{\vu}_{\tokenEnd_1}+\Tilde{\vu}_{\tokenEnd_2})^\top \ \mZero_{d_\pe}^\top]^\top$
        \item $\vh^{(1)}_{\PosIdx(\tokenAnswer)} = \frac{1}{\sqrt{|\vertexSet_C|+1}} \left[ \Tilde{\vu}_\tokenAnswer^\top \ \sum_{v \in \vertexSet_C } \Tilde{\vu}_v^\top \ \mZero_{d_\te}^\top \ \mZero_{d_\pe}^\top \right]^\top$
        \item $\vh^{(1)}_{\PosIdx(\thought{i})} = \frac{1}{\sqrt{|\vertexSet_i|}} \left[ \sum_{v \in \vertexSet_i } \Tilde{\vu}_v^\top \ \mZero_{d_\te}^\top \ \mZero_{d_\te}^\top \ \mZero_{d_\pe}^\top \right]^\top$, $\forall i \in [C]$
        \item $\vh^{(1)}_i = \vh_i$ for other $i \in [T]$.
    \end{itemize}
\end{proposition}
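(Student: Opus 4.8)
The plan is to verify each of the five bullet points in \Cref{prop:first_layer_mlp} by first tracking what the five attention-chooser heads from \Cref{prop:first_attn_copy} deposit at each position, and then showing that the first-layer MLP (constructed like the ``signal filter'' MLP in the proof sketch of \Cref{lemma:current_thought}) cleans up the noise and renormalizes. The attention heads produce outputs that are, up to $\varepsilon$-small noise, exact copies of token embeddings read from the content subspace and written into designated buffer/content slots by the value and output matrices. Concretely, for an edge token $\tokenEdge$ at position $\PosIdx(\tokenEdge,i) = 3i+1$, head $h_0 = (\tokenEdge, 2)$ attends to position $3i-1 = \PosIdx(\tokenSource_i)$ and head $h_1 = (\tokenEdge, 1)$ attends to position $3i = \PosIdx(\tokenTarget_i)$; by choosing the value matrices to extract $\content(\cdot)$ and the output matrices to write into $\buffer_1$ and $\buffer_2$ respectively, the residual stream at that position becomes $\vh_i + (\text{content }\Tilde\vu_\tokenEdge) + (\text{buffer}_1\ \Tilde\vu_{\tokenSource_i}) + (\text{buffer}_2\ \Tilde\vu_{\tokenTarget_i})$ plus $O(\varepsilon)$ error and minus the positional encoding (which I can arrange either by subtracting it in the MLP or by noting it lies in the $\pos$ subspace which the MLP zeroes out). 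Similarly, at $\PosIdx(\tokenReasoning)=3m+5$, heads $h_2=(\tokenReasoning,2)$ and $h_3=(\tokenReasoning,1)$ attend to $\PosIdx(\tokenEnd_1)=3m+3$ and $\PosIdx(\tokenEnd_2)=3m+4$; I route both targets into $\buffer_2$ so they add up to $\Tilde\vu_{\tokenEnd_1}+\Tilde\vu_{\tokenEnd_2}$ there, leaving $\buffer_1$ empty. At $\PosIdx(\tokenAnswer)=T$, head $h_4=(\tokenAnswer,1)$ attends to $\PosIdx(\thought{C})=t_0+C$, whose content under the input condition \eqref{eq:app_proof_input_condition} is $\frac{1}{\sqrt{|\vertexSet_C|}}\sum_{v\in\vertexSet_C}\Tilde\vu_v$; routing this into $\buffer_1$ gives the stated form before normalization. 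At a continuous-thought position $\PosIdx(\thought{i})$, none of the five trigger tokens is present, so every head dumps attention onto the BOS token (position $1$), producing only noise-level copies of $\Tilde\vu_\tokenBOS$; the MLP discards these, so $\vh^{(1)}_{\PosIdx(\thought{i})}$ retains only its content $\frac{1}{\sqrt{|\vertexSet_i|}}\sum_{v\in\vertexSet_i}\Tilde\vu_v$ from the input embedding \eqref{eq:app_proof_input_condition}.

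Next I would make the MLP argument precise. Set $\mW_1 = [\vu_1,\ldots,\vu_V]^\top$ (extended by zero columns on the buffer and positional coordinates, or rather acting on the full residual stream but only reading content and buffers via orthonormality of $\mTokenEmbd$), $\sigma(x) = \mathbbm{1}\{x \geq \varepsilon'\}$ for a threshold $\varepsilon'$ chosen between the noise floor (which is $O(\varepsilon)$ times a token coefficient, hence $< \varepsilon'$) and the genuine signal coefficients (each of which is a fixed fraction like $1/\sqrt{3}$ or $1/\sqrt{|\vertexSet_C|+1}$, hence $> \varepsilon'$ once $\varepsilon$ is small enough relative to $1/\sqrt{n_{\max}}$), and $\mW_2 = \mW_1^\top$. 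Because $\mQuery,\mKey$ only use the $\pos$ subspace (per \Cref{lemma:gated_attn_head}) and the value/output matrices only touch content/buffers, the positional encoding that was added at line 1 of \Cref{alg:transformer_forward} never leaks into content or buffers; I handle the residual $\vp_i$ sitting in the $\pos$ block by having $\mW_1$ ignore those coordinates, so it is simply dropped after the MLP. After $\mW_2\sigma(\mW_1(\cdot))$ each position holds an (unnormalized) sum of the surviving token embeddings in the correct content/buffer slots, and the final $\layernorm$ in \Cref{alg:transformer_forward} divides by the $L_2$ norm — which is $\sqrt{3}$ for edge tokens and $\tokenReasoning$ (three unit-norm orthonormal pieces), $\sqrt{|\vertexSet_C|+1}$ for $\tokenAnswer$ (one piece $\Tilde\vu_\tokenAnswer$ plus $|\vertexSet_C|$ orthonormal pieces), and $\sqrt{|\vertexSet_i|}$ for $\thought{i}$ — giving exactly the normalizations in the five bullets. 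For the ``other $i$'' bullet I note that for ordinary prompt tokens ($\tokenBOS$, source/target nodes, $\tokenQUERY$, candidates, $\tokenStart$) none of the trigger conditions fire, every head goes to the sink, the MLP keeps only the position's own single token embedding, and $\layernorm$ of a unit vector is itself, so $\vh^{(1)}_i = \vh_i$.

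The main obstacle is the bookkeeping of error propagation through the $\layernorm$: the attention outputs are not exact copies but $(1-\varepsilon)$-weighted copies plus an $\varepsilon$-spread of spurious tokens, so before the MLP each ``signal'' coefficient is $\Theta(1)$ but perturbed, and each ``noise'' coefficient is $O(\varepsilon)$. I must choose $\varepsilon$ (which \Cref{prop:first_attn_copy} lets me take as small as I like) small enough that, simultaneously, (i) all noise coefficients stay strictly below the threshold $\varepsilon'$ and (ii) all signal coefficients stay strictly above it, uniformly over all graphs with $\le n_{\max}$ nodes and all sequence lengths $\le T_{\max}$; since the smallest signal coefficient is bounded below by $1/\sqrt{n_{\max}+1}$ times a constant and the largest noise is $\le (T_{\max}-1)\varepsilon$ times a constant, picking $\varepsilon' = \tfrac12 n_{\max}^{-1/2}$ and then $\varepsilon$ correspondingly small suffices, and crucially the indicator $\sigma$ then maps every perturbed signal to exactly $1$ and every noise to exactly $0$, so the output after $\mW_2$ is exactly (not approximately) the claimed integer-combination of token embeddings, and the subsequent $\layernorm$ yields the exact stated vectors. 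A secondary subtlety is the edge case in \Cref{lemma:gated_attn_head} requiring the first $\ell$ tokens not to equal $\tokenx$: this holds because $\tokenEdge$ first appears at position $4$, $\tokenReasoning$ at position $3m+5$, and $\tokenAnswer$ at position $T$, all well beyond the relative offsets $\ell \in \{1,2\}$ used, so the lemma applies without modification.
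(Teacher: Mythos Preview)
There is a genuine gap in the handling of the attention sink. By \Cref{prop:first_attn_copy}, whenever the trigger token is absent at position $i$, a head places weight $>1-\varepsilon$ on position $1$ (the \tokenBOS{} token), not weight $<\varepsilon$. If your value matrix merely ``extracts $\content(\cdot)$'', that head therefore writes $(1-\varepsilon)\,\Tilde{\vu}_{\tokenBOS}$ into its buffer, which is a \emph{full-strength} copy, not a noise-level one. Concretely, at any ``other $i$'' position all five heads sink to \tokenBOS{}, so $\buffer_1$ receives $\approx 2\Tilde{\vu}_{\tokenBOS}$ (heads $0,4$) and $\buffer_2$ receives $\approx 3\Tilde{\vu}_{\tokenBOS}$ (heads $1,2,3$). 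These coefficients are $\Theta(1)$, well above any threshold $\varepsilon'$ you could place below the genuine signals, so your indicator MLP keeps them and after $\layernorm$ you obtain $\tfrac{1}{\sqrt{3}}[\Tilde{\vu}_v^\top,\Tilde{\vu}_{\tokenBOS}^\top,\Tilde{\vu}_{\tokenBOS}^\top,\mZero]^\top\neq\vh_i$. The same \tokenBOS{} contamination pollutes $\buffer_1$ at $\PosIdx(\tokenReasoning)$ (heads $0,4$ sink there) and $\buffer_2$ at $\PosIdx(\tokenAnswer)$ (heads $1,2,3$ sink there), contradicting the second and third bullets; it also adds $\Tilde{\vu}_{\tokenBOS}$ to both buffers at every $\tokenEdge$ position.

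The paper closes this gap by projecting out \tokenBOS{} in the value matrix itself: $\mValue^{(0,h)}=[\mI_{d_\te}-\Tilde{\vu}_{\tokenBOS}\otimes\Tilde{\vu}_{\tokenBOS}\ \ \mZero_{d_\te\times(d-d_\te)}]$, which yields $\vv_1^{(h)}=\mZero$ because \tokenBOS{} occurs only at position $1$. With this device the sink truly contributes nothing, and then your MLP-threshold argument (which otherwise matches the paper's choice $\varepsilon=\tfrac{1}{2\sqrt{n}}$ and $\mW_1=\diag(\mTokenEmbd^\top,\mTokenEmbd^\top,\mTokenEmbd^\top)$ padded by zeros, $\mW_2=\mW_1^\top$) goes through exactly as you describe. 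An equivalent fix would be to drop the $\Tilde{\vu}_{\tokenBOS}$ column from the two buffer blocks of $\mW_2$; but some such mechanism is required, and the assertion ``producing only noise-level copies of $\Tilde{\vu}_{\tokenBOS}$'' is false as written.
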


\begin{proof}
    We use the construction in \Cref{prop:first_attn_copy} for $\{ \mQuery^{(0,h)}, \mKey^{(0,h)} \}_{h=0, \ldots,  4}$. For each position, after attending to the desired tokens, each attention head will copy the attended tokens to one of the two buffer spaces. Formally, we provide the construction of value and output matrices for each head below. First, the value matrices are constructed as 
    \begin{align*}
        \mValue^{(0,h)} = [ \mI_{d_\te} - \Tilde{\vu}_{\tokenBOS} \otimes \Tilde{\vu}_{\tokenBOS}  \qquad \mZero_{d_\te \times (d-d_\te)}] \in \mathbb{R}^{d_\te \times d}, \quad  h = 0, \ldots, 4.
    \end{align*}
    By construction, we have $\vv^{(h)}_i = \mValue^{(0,h)} \vh_i^{(0)} = \content(\vh_i)\cdot \mathbbm{1}\{i > 1\} = \Tilde{\vh}_i \cdot \mathbbm{1}\{i > 1\}$ for $i \in [T], h \in \{0, \ldots, 4\}$. This is due to the input format specified in \Cref{app:subsec:proof_key_lemma}, which ensures that only $\Tilde{\vh}_1$ contains $\Tilde{\vu}_\tokenBOS$. 

    Now we construct output matrices such that the $h$-th attention head copies the content of the attended token to buffer 1 for $h = 0, 4$, and to buffer 2 for $h = 1, 2, 3$. Formally, we construct 
    \begin{align*}
    \mOutput^{(0,h)} =& [\mZero_{d_\te \times d_\te} \quad \mI_{d_\te} \quad \mZero_{d_\te \times d_\te} \quad \mZero_{d_\te \times d_\pe}  ]^\top \in \mathbb{R}^{d \times d_\te}, \quad   h = 0, 4, \\
    \mOutput^{(0,h)} =& [\mZero_{d_\te \times d_\te} \quad \mZero_{d_\te \times d_\te} \quad \mI_{d_\te} \quad  \mZero_{d_\te \times d_\pe}  ]^\top \in \mathbb{R}^{d \times d_\te}, \quad  h = 1, 2, 3.
    \end{align*}
    Therefore, it holds that $\mOutput^{(0,h)} \vv_i^{(h)} = [\mZero_{d_\te}^\top \ \Tilde{\vh}_i^\top\cdot \mathbbm{1}\{i > 1\} \ \mZero_{d_\te}^\top \ \mZero_{d_\pe}^\top]^\top$ for $h = 0, 4$ and $\mOutput^{(0,h)} \vv_i^{(h)} = [\mZero_{d_\te}^\top \ \mZero_{d_\te}^\top \ \Tilde{\vh}_i^\top\cdot \mathbbm{1}\{i > 1\} \ \mZero_{d_\pe}^\top]^\top$ for $h = 1, 2, 3$, and thus we have
    \begin{align*}
         \attn_{\theta^{(0,h)}_{\attn}}(\vh^{(0)})_i =& \sum_{j=1}^i s_{i,j}^{(0,h)} \mOutput^{(0,h)} \vv_j^{(h)} = \left[\mZero_{d_\te}^\top \ \ \sum_{j=2}^i s_{i,j}^{(0,h)} \Tilde{\vh}_j^\top \ \ \mZero_{d_\te}^\top \ \ \mZero_{d_\pe}^\top\right]^\top, \quad  h = 0, 4, \\
         \attn_{\theta^{(0,h)}_{\attn}}(\vh^{(0)})_i =& \sum_{j=1}^i s_{i,j}^{(0,h)} \mOutput^{(0,h)} \vv_j^{(h)} = \left[\mZero_{d_\te}^\top \ \ \mZero_{d_\te}^\top \ \ \sum_{j=2}^i s_{i,j}^{(0,h)} \Tilde{\vh}_j^\top \ \ \mZero_{d_\pe}^\top\right]^\top, \quad h = 1, 2, 3,
    \end{align*}
    where $s_{i,j}^{(0,h)}$ is defined as in \Cref{prop:first_attn_copy}. Therefore, the output at position $i$ of the first attention layer is 
    \begin{align*}
        \vh^{(0.5)}_i = \vh^{(0)}_i + \sum_{h=0}^{4} \attn_{\theta^{(0,h)}_{\attn}}(\vh^{(0)})_i = \left[\Tilde{\vh}_i^\top  \quad \sum_{h=0,4}\sum_{j=2}^i s_{i,j}^{(0,h)} \Tilde{\vh}_j^\top \quad \sum_{h=1}^3 \sum_{j=2}^i s_{i,j}^{(0,h)} \Tilde{\vh}_j^\top \quad \pos(\vp_i)^\top\right]^\top.
    \end{align*}
    Next, we construct the parameters of the MLP of the first layer. For notation convenience and by the input format in \Cref{app:subsec:proof_key_lemma}, we can decompose $\Tilde{\vh}_i = \sum_{j=1}^V \lambda^{(0)}_{i,j} \Tilde{\vu}_{j} = \mTokenEmbd \vlambda^{(0)}_i$ where $\vlambda^{(0)}_i = [\lambda^{(0)}_{i,1}, \lambda^{(0)}_{i,2}, \ldots, \lambda^{(0)}_{i,V}]^\top \in \mathbb{R}^V$. Let 
    \begin{align*}
        \mlp_{\theta^{(0)}_{\mlp}}(\vh^{(0.5)})_i = \mW_2^{(0)} \sigma_\varepsilon^{(0)}(\mW_1^{(0)}\vh^{(0.5)}_i) \in \mathbb{R}^d
    \end{align*}
    which is a two-layer neural network. Let $\mW_1^{(0)} = [\diag(\mTokenEmbd^\top, \mTokenEmbd^\top, \mTokenEmbd^\top),  \mZero_{(3V) \times d_\pe}] \in \mathbb{R}^{3V \times d}$.
    Then 
    \begin{align*}
        \mW_1^{(0)}\vh^{(0.5)}_i = \begin{bmatrix}
        \mTokenEmbd^\top\mTokenEmbd \vlambda_i^{(0) } \\
        \mTokenEmbd^\top\mTokenEmbd \sum_{h=0,4} \sum_{j=2}^i s_{i,j}^{(0,h)} \vlambda_j^{(0)}
        \\
        \mTokenEmbd^\top\mTokenEmbd \sum_{h=1}^3 \sum_{j=2}^i s_{i,j}^{(0,h)} \vlambda_j^{(0)}
    \end{bmatrix}
    =
    \begin{bmatrix}
         \vlambda_i^{(0) }  \\
          \sum_{h=0,4} \sum_{j=2}^i s_{i,j}^{(0,h)} \vlambda_j^{(0)}
        \\
         \sum_{h=1}^3 \sum_{j=2}^i s_{i,j}^{(0,h)} \vlambda_j^{(0)}
    \end{bmatrix} \in \mathbb{R}^{3V}.
    \end{align*}
    
    Let $\sigma_\varepsilon^{(0)}(\cdot)$ be a coordinate-wise non-linearity such that $\sigma_\varepsilon^{(0)}(x) = \mathbbm{1}\{x \geq \varepsilon\}$ for $x \in \mathbb{R}$. We choose $\varepsilon = \frac{1}{2\sqrt{n}}$ where $n$ is the (maximum possible) number of vertices in the graph. We denote $i^{(h)}_* = \arg\max_{j \leq i} s^{(0,h)}_{i,j}$, i.e., $i^{(h)}_*$ is the position that position $i$ pays the most attention within head $h$. By \Cref{prop:first_attn_copy}, we can construct key and query matrices such that $s^{(0,h)}_{i,i^{(h)}_*} > 1-\varepsilon/5$ for any $i$ and $h$. Also, by the input format especially by \eqref{eq:app_proof_input_condition}, $\lambda^{(0)}_{i,k} \in \{0, 1\} \cup \{ 1/\sqrt{i} \ | \ i \in [n] \}$, which implies that any non-zero $\lambda_{i,k}^{(0)}$ satisfies $\lambda_{i,k}^{(0)} \in [2\varepsilon, 1]$ and all non-zero entries of $\vlambda^{(0)}_i$ share the same value for any fixed $i$. Then by definition of $\sigma^{(0)}_\varepsilon(\cdot)$, we have $\sigma_\varepsilon^{(0)}( \vlambda_i^{(0) } ) = \frac{\vlambda_i^{(0)}}{\|\vlambda_i^{(0)}\|_\infty}$.

    Now we analyze $\sum_{j=2}^i s_{i,j}^{(0,h)} \vlambda_j^{(0)}$. For any $k \in [V]$, we consider $\sum_{j=2}^i s_{i,j}^{(0,h)} \lambda_{j,k}^{(0)}$. If $i_*^{(h)} = 1$, then we have 
    \begin{align*}
        \sum_{j=2}^i s_{i,j}^{(0,h)} \lambda_{j,k}^{(0)} \leq \left(\sum_{j=2}^i s_{i,j}^{(0,h)}\right) \cdot \max_{2\leq j \leq i} \lambda_{j,k}^{(0)} < \frac{\varepsilon}{5} \cdot 1 = \frac{\varepsilon}{5}.
    \end{align*}
    If $i_*^{(h)} > 1$, we have
    \begin{align*}
        \sum_{j=2}^i s_{i,j}^{(0,h)} \lambda_{j,k}^{(0)} = s_{i,i^{(h)}_*}^{(0,h)} \lambda_{i^{(h)}_*,k}^{(0)} + \sum_{j=2, j\neq i_*^{(h)}}^i s_{i,j}^{(0,h)} \lambda_{j,k}^{(0)}. 
    \end{align*}
    When $\lambda_{i^{(h)}_*,k}^{(0)} = 0$, we can obtain that
    \begin{align*}
        \sum_{j=2}^i s_{i,j}^{(0,h)} \lambda_{j,k}^{(0)} = \sum_{j=2, j\neq i_*^{(h)}}^i s_{i,j}^{(0,h)} \lambda_{j,k}^{(0)} \leq \left(\sum_{j=2,j\neq i_*^{(h)}}^i s_{i,j}^{(0,h)}\right) \cdot \max_{2\leq j \leq i} \lambda_{j,k}^{(0)} < \frac{\varepsilon}{5}.
    \end{align*}
    When $\lambda_{i^{(h)}_*,k}^{(0)} > 0$, we can obtain that 
    \begin{align*}
        \sum_{j=2}^i s_{i,j}^{(0,h)} \lambda_{j,k}^{(0)} \geq s_{i,i^{(h)}_*}^{(0,h)} \lambda_{i^{(h)}_*,k}^{(0)} \geq (1-\varepsilon/5) \cdot 2\varepsilon \geq \varepsilon.
    \end{align*}
    Therefore, $\sigma_\varepsilon^{(0)}(\sum_{h=0,4} 
 \sum_{j=2}^i s_{i,j}^{(0,h)} \lambda_{j,k}^{(0)}) = \mathbbm{1}\left\{ \bigcup_{h=0,4} \left( \left( i_*^{(h)} > 1 \right) \cap \left( \lambda_{i^{(h)}_*,k}^{(0)} > 0 \right) \right) \right \}$, $\forall k \in [V]$. Also, by \Cref{prop:first_attn_copy}, for any $i \in [T]$ and $k \in [V]$, there is at most one $h \in \{0, \ldots, 4\}$ such that $i^{(h)}_* > 1$ and $\lambda^{(0)}_{i^{(h)}_*,k} > 0$ hold simultaneously. Therefore, $\sigma_\varepsilon^{(0)}(\sum_{h=0,4} 
 \sum_{j=2}^i s_{i,j}^{(0,h)} \lambda_{j,k}^{(0)}) = \sum_{h=0,4} \mathbbm{1}\{i_*^{(h)} > 1\} \cdot  \mathbbm{1}\{\lambda_{i^{(h)}_*, k}^{(0)} > 0\}$, $\forall k \in [V]$, and thus we can write this compactly \begin{align*}
     \sigma_\varepsilon^{(0)}( \sum_{h=0,4} \sum_{j=2}^i s_{i,j}^{(0,h)} \vlambda_j^{(0)}) =  \sum_{h=0,4} \mathbbm{1}\{i_*^{(h)} > 1\} \cdot  \mathbbm{1}\{\vlambda_{i^{(h)}_*}^{(0)} > \mZero_V\}.
 \end{align*}
         Similarly, we have  
         \begin{align*}
             \sigma_\varepsilon^{(0)}( \sum_{h=1}^3 \sum_{j=2}^i s_{i,j}^{(0,h)} \vlambda_j^{(0)}) =  \sum_{h=1}^3 \mathbbm{1}\{i_*^{(h)} > 1\} \cdot  \mathbbm{1}\{\vlambda_{i^{(h)}_*}^{(0)} > \mZero_V\}.
         \end{align*}Therefore,
    \begin{align*}
        \sigma_\varepsilon^{(0)}(\mW_1^{(0)}\vh^{(0.5)}_i) =
    \begin{bmatrix}
         \vlambda_i^{(0) } / \| \vlambda_i^{(0) } \|_\infty   \\
         \sum_{h=0,4} \mathbbm{1}\{i_*^{(h)} > 1\} \cdot  \mathbbm{1}\{\vlambda_{i^{(h)}_*}^{(0)} > \mZero_V\}
        \\
         \sum_{h=1}^3 \mathbbm{1}\{i_*^{(h)} > 1\} \cdot  \mathbbm{1}\{\vlambda_{i^{(h)}_*}^{(0)} > \mZero_V\}
    \end{bmatrix} \in \mathbb{R}^{3V}.
    \end{align*}
    Finally, we set $\mW_2^{(0)} = \mW_1^{(0)\top} = [\diag(\mTokenEmbd^\top, \mTokenEmbd^\top, \mTokenEmbd^\top),  \mZero_{(3V) \times d_\pe}]^\top \in \mathbb{R}^{d \times 3V}$, and thus 
    \begin{align*}
        \mlp_{\theta^{(0)}_{\mlp}}(\vh^{(0.5)})_i = \begin{bmatrix}
         \mTokenEmbd\vlambda_i^{(0) } / \| \vlambda_i^{(0) } \|_\infty   \\
         \mTokenEmbd\sum_{h=0,4} \mathbbm{1}\{i_*^{(h)} > 1\} \cdot  \mathbbm{1}\{\vlambda_{i^{(h)}_*}^{(0)} > \mZero_V\}
        \\
         \mTokenEmbd\sum_{h=1}^3 \mathbbm{1}\{i_*^{(h)} > 1\} \cdot  \mathbbm{1}\{\vlambda_{i^{(h)}_*}^{(0)} > \mZero_V\}
         \\
         \mZero_{d_\pe}
    \end{bmatrix} \in \mathbb{R}^{d}.
    \end{align*}
    Now we derive the output of the MLP for different positions $i$. 

    For $i = \PosIdx(\tokenEdge, k) = 3k+1$ where $k \in [m]$, we have $i_*^{(0)} = i-2 = \PosIdx(\tokenSource_k)$ and  $i_*^{(1)} = i-1 = \PosIdx(\tokenTarget_k)$. Note that $i_*^{(h)} = 1$ for $h = 2, 3, 4$. Also, $\vlambda_{\PosIdx(\tokenSource_k)}^{(0)} = \ve_{\tokenSource_k}$, $\vlambda_{\PosIdx(\tokenTarget_k)}^{(0)} = \ve_{\tokenTarget_k}$ and $\vlambda_{i}^{(0)} = \ve_{\tokenEdge}$ are all $V$-dimensional one-hot vectors. Therefore, we have
    \begin{align*}
        \mlp_{\theta^{(0)}_{\mlp}}(\vh^{(0.5)})_{\PosIdx(\tokenEdge, k)} = [\Tilde{\vu}_\tokenEdge^\top \ \Tilde{\vu}_{\tokenSource_k}^\top \ \Tilde{\vu}_{\tokenTarget_k}^\top \ \mZero_{d_\pe}^\top]^\top.
    \end{align*}

    For $i = \PosIdx(\tokenReasoning) = 3m+5$, we have $i_*^{(2)} = i-2 = \PosIdx(\tokenEnd_1)$ and  $i_*^{(3)} = i-1 =\PosIdx(\tokenEnd_2)$. Note that $i_*^{(h)} = 1$ for $h = 0, 1, 4$.  Also, $\vlambda_{i-2}^{(0)} = \ve_{\tokenEnd_1}$ and $\vlambda_{i-1}^{(0)} = \ve_{\tokenEnd_2}, \vlambda_{i}^{(0)} = \ve_{\tokenReasoning}$. Therefore, 
    \begin{align*}
        \mlp_{\theta^{(0)}_{\mlp}}(\vh^{(0.5)})_{\PosIdx(\tokenReasoning)} = [\Tilde{\vu}_{\tokenReasoning}^\top \ \mZero_{d_\te}^\top \ (\Tilde{\vu}_{\tokenEnd_1} + \Tilde{\vu}_{\tokenEnd_2})^\top \ \mZero_{d_\pe}^\top]^\top. 
    \end{align*}

    For $i =  \PosIdx(\tokenAnswer) = t_0+C+1$, we have  $i_*^{(4)} = i-1$ and  $i_*^{(h)} = 1$ for $0 \leq h \leq 3$. Note that $\vlambda^{(0)}_{i} = \ve_{\tokenAnswer}$, $\vlambda^{(0)}_{i-1} = \vlambda^{(0)}_{\PosIdx(\thought{C})} = \frac{1}{\sqrt{|\vertexSet_C|}} \sum_{v \in \vertexSet_C } \ve_v$ where $\vertexSet_C$ is the set of vertices reachable from $\tokenStart$ within $C$ steps. Then we have
    \begin{align*}
        \mlp_{\theta^{(0)}_{\mlp}}(\vh^{(0.5)})_{\PosIdx(\tokenAnswer)} = \left[ \Tilde{\vu}_\tokenAnswer^\top \ \sum_{v \in \vertexSet_C } \Tilde{\vu}_v^\top \ \mZero_{d_\te}^\top \ \mZero_{d_\pe}^\top \right]^\top.
    \end{align*}
    
    For $i = \PosIdx(\thought{c}) = t_0+c$ for $c \in [C]$, we have $i_*^{(h)} = 1$ for all $h$ and $\vlambda^{(0)}_{i} = \frac{1}{\sqrt{|\vertexSet_c|}} \sum_{v \in \vertexSet_c } \ve_v$, and thus \[\mlp_{\theta^{(0)}_{\mlp}}(\vh^{(0.5)})_{ \PosIdx(\thought{c})} = \left[ \sum_{v \in \vertexSet_c } \Tilde{\vu}_v^\top \ \mZero_{d_\te}^\top \ \mZero_{d_\te}^\top \ \mZero_{d_\pe}^\top \right]^\top.\]

    For remaining $i$, we have $i_*^{(h)} = 1$ for all $h$ and $\vlambda_{i}^{(0)}$ is one-hot. So \[\mlp_{\theta^{(0)}_{\mlp}}(\vh^{(0.5)})_i = \left[ \Tilde{\vh}_i^\top \ \mZero_{d_\te}^\top \ \mZero_{d_\te}^\top \ \mZero_{d_\pe}^\top \right]^\top = \vh_i.\]
    
    By applying layer normalization, we can obtain the final result.
\end{proof}

Note that \Cref{prop:first_layer_mlp} shows that after the first layer, each $\tokenEdge$ token will copy its corresponding source and target token embeddings into its two buffer spaces, respectively. For the second layer, since it is the last layer, we only need to focus on positions for current thoughts $\PosIdx(\thought{c}) = t_0+c$ and the position for the final prediction $\PosIdx(\tokenAnswer) = T$. Since we only need one attention head for the second attention layer, we will omit the index for heads and only keep the index for layers.

\begin{proposition}[Second layer attention]
\label{prop:second_layer_attn}
    Under the construction of \Cref{prop:first_layer_mlp} and input format specified in \Cref{app:subsec:proof_key_lemma}, for any fixed $\varepsilon \in (0, 1/2)$, there exists a construction of the second-layer attention parameters $\theta^{(1)}_{\attn} = (\mQuery^{(1)}, \mKey^{(1)}, \mValue^{(1)}, \mOutput^{(1)})$ s.t.
    \begin{align*}
    \vh_{\PosIdx(\thought{c})}^{(1.5)} = 
    \begin{bmatrix}
        \frac{1}{\sqrt{|\vertexSet_c|}} \sum_{v \in \vertexSet_c } \Tilde{\vu}_v + \frac{\sum_{j=1,\tokenSource_j \in \vertexSet_c}^m \Tilde{\vu}_{\tokenTarget_j}}{\sqrt{3}| \{ j | \tokenSource_j \in \vertexSet_c, j \in [m]  \}|} \\
        \mZero_{d-d_\te}
    \end{bmatrix} +
    \begin{bmatrix}
        \mTokenEmbd \veps^{(c)}
        \\
        \mZero_{d-d_\te}
    \end{bmatrix}, \ \forall c \in \{0\}\cup [C],
\end{align*}
 and 
\begin{align*}
    \vh_{\PosIdx(\tokenAnswer)}^{(1.5)} = 
    \begin{bmatrix}
         \frac{1}{\sqrt{|\vertexSet_C|+1}} \Tilde{\vu}_\tokenAnswer + \frac{1}{\sqrt{3}}(\Tilde{\vu}_{\tokenEnd_1} + \Tilde{\vu}_{\tokenEnd_2})
         \\ 
          \frac{1}{\sqrt{|\vertexSet_C|+1}} \sum_{v \in \vertexSet_C } \Tilde{\vu}_v
          \\
          \mZero_{d_\te+d_\pe}
    \end{bmatrix}
    +
    \begin{bmatrix}
        \mTokenEmbd \veps^{(C+1)}
        \\
        \mZero_{d_\te}
        \\
        \mZero_{d_\te+d_\pe}
    \end{bmatrix},
\end{align*}
where $\veps^{(c)} \in \mathbb{R}^V$ and $\| \veps^{(c)} \|_\infty < \varepsilon$ for all $c \in \{0\} \cup [C+1]$.
\end{proposition}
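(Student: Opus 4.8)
The plan is to realise a single second-layer attention head that does two jobs simultaneously: at every continuous-thought position $\PosIdx(\thought{c})$ it should attend, with essentially uniform weight, exactly to those edge tokens $\tokenEdge$ whose stored source node lies in $\vertexSet_c$, and at the answer position $\PosIdx(\tokenAnswer)$ it should attend essentially only to the reasoning token $\tokenReasoning$; in either case, by \Cref{prop:first_layer_mlp}, the attended positions carry in their \emph{second} buffer precisely the $d_\te$-vector that must be copied into the content slot — namely $\tfrac1{\sqrt3}\Tilde{\vu}_{\tokenTarget_j}$ for edge token $j$ and $\tfrac1{\sqrt3}(\Tilde{\vu}_{\tokenEnd_1}+\Tilde{\vu}_{\tokenEnd_2})$ for the reasoning token. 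I would first record, from \Cref{prop:first_layer_mlp} and the input format of \Cref{app:subsec:proof_key_lemma}, the content/buffer decomposition after layer $1$: content $\tfrac1{\sqrt3}\Tilde{\vu}_{\tokenEdge}$, first buffer $\tfrac1{\sqrt3}\Tilde{\vu}_{\tokenSource_j}$, second buffer $\tfrac1{\sqrt3}\Tilde{\vu}_{\tokenTarget_j}$ at $\PosIdx(\tokenEdge,j)$; content $\tfrac1{\sqrt3}\Tilde{\vu}_{\tokenReasoning}$ and second buffer $\tfrac1{\sqrt3}(\Tilde{\vu}_{\tokenEnd_1}+\Tilde{\vu}_{\tokenEnd_2})$ at $\PosIdx(\tokenReasoning)$; content $\tfrac1{\sqrt{|\vertexSet_c|}}\sum_{v\in\vertexSet_c}\Tilde{\vu}_v$ with zero buffers at $\PosIdx(\thought{c})$ (including $c=0$, where this is $\Tilde{\vu}_{\tokenStart}$); content $\tfrac1{\sqrt{|\vertexSet_C|+1}}\Tilde{\vu}_{\tokenAnswer}$ and first buffer $\tfrac1{\sqrt{|\vertexSet_C|+1}}\sum_{v\in\vertexSet_C}\Tilde{\vu}_v$ at $\PosIdx(\tokenAnswer)$; and a single token embedding $\Tilde{\vu}_v$ in the content slot, zero buffers, at every other position.

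Next I would fix large constants $\beta,\gamma>0$ (pinned down at the end) and take, for any embedding $\vh$,
\begin{align*}
\mQuery^{(1)}\vh = \gamma\bigl(\mI_{d_\te} + \beta\,\Tilde{\vu}_{\tokenReasoning}\Tilde{\vu}_{\tokenAnswer}^\top\bigr)\content(\vh),\qquad
\mKey^{(1)}\vh = \Tilde{\vu}_{\tokenReasoning}\Tilde{\vu}_{\tokenReasoning}^\top\content(\vh) + \buffer_1(\vh),
\end{align*}
with $\mValue^{(1)}\vh = \buffer_2(\vh)$ and $\mOutput^{(1)}$ the injection of $\mathbb{R}^{d_\te}$ into the content block (so the head writes only into content). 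Using orthonormality of $\{\Tilde{\vu}_v\}$ and the fact that $\tokenEdge,\tokenReasoning,\tokenAnswer$ are never graph vertices, the keys collapse to $\tfrac1{\sqrt3}\Tilde{\vu}_{\tokenSource_j}$ at edge token $j$, to $\tfrac1{\sqrt3}\Tilde{\vu}_{\tokenReasoning}$ at the reasoning token, and to $\mZero$ at every thought position, at $\tokenAnswer$, and at all remaining positions; the queries become $\tfrac{\gamma}{\sqrt{|\vertexSet_c|}}\sum_{v\in\vertexSet_c}\Tilde{\vu}_v$ at $\PosIdx(\thought{c})$ and $\tfrac{\gamma}{\sqrt{|\vertexSet_C|+1}}(\Tilde{\vu}_{\tokenAnswer}+\beta\Tilde{\vu}_{\tokenReasoning})$ at $\PosIdx(\tokenAnswer)$. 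Hence $\langle\vq_{\PosIdx(\thought{c})},\vk_j\rangle$ equals $\tfrac{\gamma}{\sqrt{3|\vertexSet_c|}}$ when $j$ is an edge token with $\tokenSource_j\in\vertexSet_c$ and $0$ for every other $j\le\PosIdx(\thought{c})$, and $\langle\vq_{\PosIdx(\tokenAnswer)},\vk_j\rangle$ equals $\tfrac{\gamma\beta}{\sqrt{3(|\vertexSet_C|+1)}}$ for $j=\PosIdx(\tokenReasoning)$ and $0$ otherwise. The rank-one term $\Tilde{\vu}_{\tokenReasoning}\Tilde{\vu}_{\tokenAnswer}^\top$ is what reroutes \emph{only} the answer token toward $\tokenReasoning$ while leaving thought queries untouched (since $\tokenAnswer\notin\vertexSet_c$), and the rank-one content filter $\Tilde{\vu}_{\tokenReasoning}\Tilde{\vu}_{\tokenReasoning}^\top$ in the key is what zeroes out every thought-position key so that thoughts never attend to one another; designing one head that routes both families of positions correctly without any spurious cross-attention is the main obstacle, and these two tricks (plus orthonormality) are what make it work.

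The remaining steps are softmax bookkeeping. Writing $N_c=|\{j\in[m]:\tokenSource_j\in\vertexSet_c\}|$, which is $\ge1$ here because $\tokenStart$ reaches a candidate so $\vertexSet_c$ always has an outgoing edge, the $N_c$ active logits at $\PosIdx(\thought{c})$ are all equal, so their common softmax weight is $\tfrac1{N_c}(1-\delta_c)$ with $\delta_c = O\!\bigl(T_{\max}e^{-\gamma/\sqrt{3 n_{\max}}}\bigr)$, and the total leftover mass on the zero-logit positions (BOS, other thoughts, the reasoning token, inactive edges) is of the same order; among those only inactive edge tokens and the reasoning token have a nonzero second buffer, each of content-coefficient $\ell_\infty$-norm $\le\tfrac1{\sqrt3}$. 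Pushing this through $\mValue^{(1)},\mOutput^{(1)}$ and adding the residual, the content at $\PosIdx(\thought{c})$ equals $\tfrac1{\sqrt{|\vertexSet_c|}}\sum_{v\in\vertexSet_c}\Tilde{\vu}_v + \tfrac1{\sqrt3 N_c}\sum_{j:\tokenSource_j\in\vertexSet_c}\Tilde{\vu}_{\tokenTarget_j} + \mTokenEmbd\veps^{(c)}$ with $\|\veps^{(c)}\|_\infty = O\!\bigl(T_{\max}e^{-\gamma/\sqrt{3 n_{\max}}}\bigr)$, and its two buffers and positional block are unchanged (all zero), which is the first claimed formula; the same computation at $\PosIdx(\tokenAnswer)$ puts weight $1-O\!\bigl(T_{\max}e^{-\gamma\beta/\sqrt{3 n_{\max}}}\bigr)$ on the reasoning token, so its content becomes $\tfrac1{\sqrt{|\vertexSet_C|+1}}\Tilde{\vu}_{\tokenAnswer} + \tfrac1{\sqrt3}(\Tilde{\vu}_{\tokenEnd_1}+\Tilde{\vu}_{\tokenEnd_2}) + \mTokenEmbd\veps^{(C+1)}$, first buffer stays $\tfrac1{\sqrt{|\vertexSet_C|+1}}\sum_{v\in\vertexSet_C}\Tilde{\vu}_v$, rest zero — the second claimed formula. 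Since every noise coefficient is at worst linear in $T_{\max}$ times $e^{-\Theta(\gamma/\sqrt{n_{\max}})}$, choosing $\gamma$ (and $\beta\ge1$) large enough as a function of $n_{\max},T_{\max},\varepsilon$ forces $\|\veps^{(c)}\|_\infty<\varepsilon$ for all $c\in\{0\}\cup[C+1]$, which finishes the construction.
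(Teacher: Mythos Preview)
Your construction is correct and is essentially the same as the paper's, with only cosmetic differences. The paper takes $\mQuery^{(1)}\vh=\content(\vh)$ and $\mKey^{(1)}\vh=\tau\bigl(\Tilde{\vu}_{\tokenAnswer}\Tilde{\vu}_{\tokenReasoning}^\top\content(\vh)+\buffer_1(\vh)\bigr)$, i.e.\ it places the rank-one ``$\tokenAnswer$-to-$\tokenReasoning$'' coupling in the key; you place the transposed coupling $\Tilde{\vu}_{\tokenReasoning}\Tilde{\vu}_{\tokenAnswer}^\top$ in the query and add a $\Tilde{\vu}_{\tokenReasoning}\Tilde{\vu}_{\tokenReasoning}^\top$ content filter on the key side. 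Both yield the same bilinear form $\langle\vq_i,\vk_j\rangle$ at the positions that matter (thought and answer), and the value/output maps $\mValue^{(1)}=\buffer_2$, $\mOutput^{(1)}=$ injection into content, together with the softmax/noise bookkeeping, are identical to the paper's. Your extra parameter $\beta$ is harmless but unnecessary: the paper absorbs everything into a single scale $\tau$.

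Two small remarks. First, your assertion that the key at $\PosIdx(\tokenAnswer)$ is $\mZero$ is not quite right, since $\buffer_1$ there equals $\tfrac{1}{\sqrt{|\vertexSet_C|+1}}\sum_{v\in\vertexSet_C}\Tilde{\vu}_v$; but as neither $\tokenAnswer$ nor $\tokenReasoning$ is a vertex, its inner product with $\vq_{\PosIdx(\tokenAnswer)}$ still vanishes, so nothing breaks. Second, the justification ``$N_c\ge1$ because $\tokenStart$ reaches a candidate'' is imprecise (once $\tokenEnd_{i^*}\in\vertexSet_c$ one could have $N_c=0$), but the proposition's statement itself presupposes $N_c\ge1$ and the paper does not address this edge case either.
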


\begin{proof}
In the second attention layer, we aim to construct key and query matrices such that (1) the current thought $\thought{c}$ will pay attention to all edges if the source node of the edge is contained in the superposition (see \Cref{fig:second_layer_attn_thought}); (2) the answer token $\tokenAnswer$ pays large attention to the reasoning token $\tokenReasoning$ which stores the two candidate destination tokens in its buffer space (see \Cref{fig:second_layer_attn_pred}).

\begin{figure}[htbp]
\centering
 \includegraphics[width=0.95\textwidth]{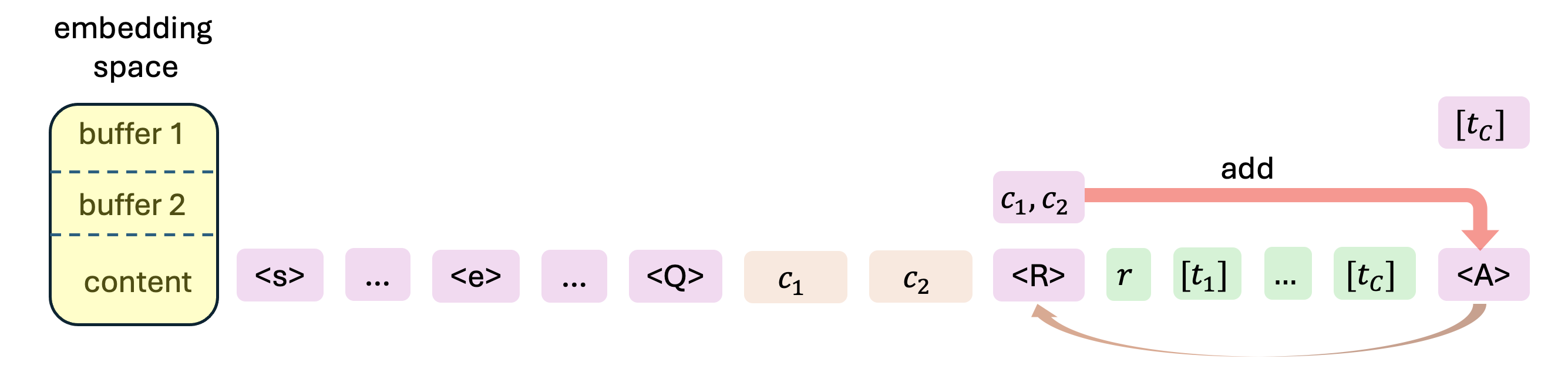}
    \caption{Illustration of the second layer attention mechanism for final prediction.}
    \label{fig:second_layer_attn_pred}
\end{figure}

First, we construct
\begin{align*}
        \mQuery^{(1)} =& [\mI_{d_\te}  \quad \mZero_{d_\te \times d_\te} \quad \mZero_{d_\te \times d_\te}  \quad \mZero_{d_\te \times d_\pe} ] \in \mathbb{R}^{d_\te \times d}, \\
    \mKey^{(1)} =& [ \tau \tilde \vu_\tokenAnswer \otimes \tilde \vu_\tokenReasoning \quad \tau \mI_{d_\te} \quad \mZero_{d_\te \times d_\te}  \quad \mZero_{d_\te \times d_\pe} ] \in \mathbb{R}^{d_\te \times d},
\end{align*} 
where the value of $\tau > 0$ will be decided later. 

We define $\vq_i = \mQuery^{(1)}\vh^{(1)}_i$,  $\vk_i = \mKey^{(1)}\vh^{(1)}_i$ and let
\begin{align*}
    s_{i,j} = \frac{ \exp(\langle \vq_i, \vk_j \rangle)}{\sum_{j' \leq i} \exp(  \langle \vq_i, \vk_{j'} \rangle)}.
\end{align*}
Note that we have $T = \PosIdx(\tokenAnswer) = t_0+C+1$. By the construction of the query and key matrices, we have $\vq_i = \content(\vh_i^{(1)}), \forall i \in [T]$,    $\vk_{\PosIdx(\tokenReasoning)} = \tau \cdot\buffer_1(\vh^{(1)}_{\PosIdx(\tokenReasoning)}) + \tau \tilde \vu_\tokenAnswer$ and  $\vk_{i} = \tau \cdot\buffer_1(\vh^{(1)}_i)$ for other $i$.  Now we consider attention weights for $i = \PosIdx(\thought{c}) = t_0+c, \forall c \in \{0\} \cup [C]$ and $i = \PosIdx(\tokenAnswer) = T$. 

For $i = \PosIdx(\thought{c})$ for any fixed $c \in \{0\} \cup [C]$, we have $\vq_i = \frac{1}{\sqrt{|\vertexSet_c|}} \sum_{v \in \vertexSet_c } \Tilde{\vu}_v$. By \Cref{prop:first_layer_mlp}, we have $\vk_{\PosIdx(\tokenEdge, j)} = \frac{\tau}{\sqrt{3}} \Tilde{\vu}_{\tokenSource_j} $ for $j \in [m]$, $\vk_{\PosIdx(\tokenReasoning)} = \tau \tilde \vu_\tokenAnswer$ and $\vk_j = 0$ for other $j \leq i$. Define $\sI_c = \{\PosIdx(\tokenEdge, j) \ | \ \tokenSource_j \in \vertexSet_c \text{ for } j \in [m] \} $. Therefore, 
\begin{align*}
    \langle\vq_i, \vk_j \rangle = \frac{\tau}{\sqrt{3|\vertexSet_c|}} \mathbbm{1}\{ j \in \sI_c \}.
\end{align*}
 Then we have
\begin{align*}
    s_{i,j} = \frac{\exp\left(\tau/\sqrt{3|\vertexSet_c|}\right)\cdot \mathbbm{1}\{ j \in \sI_c \}}{|\sI_c|\exp\left(\tau/\sqrt{3|\vertexSet_c|}\right) + (i - |\sI_c|)} + \frac{\mathbbm{1}\{ j \notin \sI_c \}}{|\sI_c|\exp\left(\tau/\sqrt{3|\vertexSet_c|}\right) + (i - |\sI_c|)}.
\end{align*}

For $i = \PosIdx(\tokenAnswer) = T$, note that $\vq_T = \frac{1}{\sqrt{|\vertexSet_C|+1}} {\Tilde{\vu}}_{\tokenAnswer}$. Then $\langle \vq_T, \vk_j \rangle$ is non-zero only when $j = \PosIdx(\tokenReasoning)$, and the inner product is $\frac{\tau}{\sqrt{|\vertexSet_C|+1}}$. Then we have
\begin{align*}
    s_{T,j} = \frac{\exp\left(\tau/\sqrt{|\vertexSet_C|+1}\right)\cdot \mathbbm{1}\{ j = \PosIdx(\tokenReasoning)  \}}{\exp\left(\tau/\sqrt{|\vertexSet_C|+1}\right) + (T-1)} + \frac{\mathbbm{1}\{ j \neq \PosIdx(\tokenReasoning)  \}}{\exp\left(\tau/\sqrt{|\vertexSet_C|+1}\right) + (T-1)}.
\end{align*}

By choosing a large enough $\tau$, we have that 
\begin{align*}
    s_{\PosIdx(\thought{c}),j} =& (\frac{1}{|\sI_c|}-\varepsilon_{c,1}) \cdot \mathbbm{1}\{ j \in \sI_c \} + \varepsilon_{c,2} \cdot \mathbbm{1}\{ j \notin \sI_c \}, \quad \forall c \in \{0\} \cup [C],
    \\
    s_{T,j} =& (1-\varepsilon_{C+1,1}) \cdot \mathbbm{1}\{j = \PosIdx(\tokenReasoning)  \} \} + \varepsilon_{C+1,2} \cdot \mathbbm{1}\{ j \neq \PosIdx(\tokenReasoning)  \},
\end{align*}
where $\varepsilon_{c, 1}, \varepsilon_{c, 2} \in (0, \varepsilon/T), \forall c \in \{0\} \cup [C+1]$.

Now we construct the value and output matrix where 
\begin{align*}
     \mValue^{(1)} =&  [ \mZero_{d_\te \times d_\te} \quad \mZero_{d_\te \times d_\te} \quad \mI_{d_\te} \quad \mZero_{d_\te \times d_\pe}] \in \mathbb{R}^{d_\te \times d}, \\
    \mOutput^{(1)} =& [\mI_{d_\te} \quad  \mZero_{d_\te \times (d-d_\te)}  ]^\top \in \mathbb{R}^{d \times d_\te}.
\end{align*}
Then $\vv_i \overset{\Delta}{=} \mValue^{(1)}\vh^{(1)}_i = \buffer_2(\vh^{(1)}_i) \in \mathbb{R}^{d_\te}$ reads from buffer space 2, and $\mOutput^{(1)}\vv_i = [\buffer_2(\vh^{(1)}_i)^\top \quad \mZero_{d-d_\te}^\top ]^\top \in \mathbb{R}^{d}$ writes to the content space. Note that 
\begin{align*}
    \attn_{\theta^{(1)}_{\attn}}(\vh^{(1)})_i =& \sum_{j=1}^i s_{i,j} \mOutput^{(1)} \vv_j = 
    \begin{bmatrix}
       \sum_{j=1}^i s_{i,j}\buffer_2(\vh^{(1)}_j) \\ \mZero_{d-d_\te}
    \end{bmatrix}.
\end{align*}
Since, $\vh_{i}^{(1.5)} = \vh_{i}^{(1)} + \attn_{\theta^{(1)}_{\attn}}(\vh^{(1)})_i$, we have 
\begin{align*}
    \vh_{\PosIdx(\thought{c})}^{(1.5)} = 
    \begin{bmatrix}
        \frac{1}{\sqrt{|\vertexSet_c|}} \sum_{v \in \vertexSet_c } \Tilde{\vu}_v + (\frac{1}{|\sI_c|}-\varepsilon_{c,1}) \sum_{j=1,\tokenSource_j \in \vertexSet_c}^m \frac{\Tilde{\vu}_{\tokenTarget_j}}{\sqrt{3}} + \varepsilon_{c,2}\sum_{j\in [t_0+c]\backslash\sI_c } \buffer_2(\vh^{(1)}_j) \\
        \mZero_{d-d_\te}
    \end{bmatrix}
\end{align*}
and 
\begin{align*}
    \vh_{T}^{(1.5)} = 
    \begin{bmatrix}
         \frac{1}{\sqrt{|\vertexSet_C|+1}} \Tilde{\vu}_\tokenAnswer + (1-\varepsilon_{C+1,1})\frac{\Tilde{\vu}_{\tokenEnd_1} + \Tilde{\vu}_{\tokenEnd_2}}{\sqrt{3}} + \varepsilon_{C+1,2} \sum_{j\in [T]\backslash\{\PosIdx(\tokenReasoning)\} } \buffer_2(\vh^{(1)}_j)
         \\ 
          \frac{1}{\sqrt{|\vertexSet_C|+1}} \sum_{v \in \vertexSet_C } \Tilde{\vu}_v
          \\
          \mZero_{d_\te+d_\pe} 
    \end{bmatrix}.
\end{align*}
Combining the fact that $\buffer_2(\vh_j^{(1)})$ is either zero or equal to the superposition of $\Tilde{\vu}_v$ for some $v \in \vocab$ with norm less than 1, we can obtain the final result.
\end{proof}

After the second-layer attention, the current thought $\vh^{(1.5)}_{t_0+c}$ now contains all vertices in $\vertexSet_c$ and all vertices that can be reached within one step from $\vertexSet_c$, which is $\vertexSet_{c+1}$ by definition. The subsequent MLP layer will then equalize the weights of each vertex and eliminate the noise in the continuous thought. 

\begin{proposition}[Second layer MLP]
Under the construction of \Cref{prop:second_layer_attn} and the input format specified by \Cref{app:subsec:proof_key_lemma}, 
there exists a construction of $\theta_\mlp^{(1)}$ such that the output of the second-layer MLP satisfies:
\begin{itemize}
\item    $\vh^{(2)}_{\PosIdx(\thought{c})} = \frac{1}{\sqrt{|\vertexSet_{c+1}|}} 
            \sum_{v\in\vertexSet_{c+1}} \vu_v , \quad \forall c \in \{ 0 \} \cup [C]$,
            \item
       $\vh^{(2)}_{\PosIdx(\tokenAnswer)} = \frac{1}{\sqrt{|\vertexSet_C|+5}} ( \vu_{\tokenEnd_1} + \vu_{\tokenEnd_2} + \vu_{\tokenAnswer} + \sum_{v\in\vertexSet_{C}} \vu_v )$.
\end{itemize}
\label{prop:second_layer_mlp}
\end{proposition}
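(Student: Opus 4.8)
The plan is to let the second-layer MLP implement the ``rotate, threshold, rotate back'' denoising map sketched in the proof of \Cref{lemma:current_thought}, and then rely on the final $\layernorm$ to equalize the surviving coordinates. Because the second layer is the last layer, in the single-forward-pass formulation of \Cref{app:subsec:proof_key_lemma} I only need the output at the thought positions $\PosIdx(\thought{c})=t_0+c$ ($c\in\{0\}\cup[C]$) and at the answer position $\PosIdx(\tokenAnswer)=T$. By \Cref{prop:second_layer_attn}, at a thought position the content is $\tfrac{1}{\sqrt{|\vertexSet_c|}}\sum_{v\in\vertexSet_c}\Tilde{\vu}_v+\tfrac{1}{\sqrt3\,|\sI_c|}\sum_{j:\tokenSource_j\in\vertexSet_c}\Tilde{\vu}_{\tokenTarget_j}+\mTokenEmbd\veps^{(c)}$ with zero buffers, while at the answer position the content is $\tfrac{1}{\sqrt{|\vertexSet_C|+1}}\Tilde{\vu}_{\tokenAnswer}+\tfrac{1}{\sqrt3}(\Tilde{\vu}_{\tokenEnd_1}+\Tilde{\vu}_{\tokenEnd_2})+\mTokenEmbd\veps^{(C+1)}$ and $\vertexSet_C$ sits noiselessly in the first buffer; all noise satisfies $\|\veps^{(\cdot)}\|_\infty<\varepsilon$.

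Concretely, I would take a two-layer MLP with $\mW_1^{(1)}\in\mathbb{R}^{2V\times d}$ whose top block is $[\mTokenEmbd^\top\ \mZero\ \mZero\ \mZero]$ and whose bottom block is $[\mZero\ \mTokenEmbd^\top\ \mZero\ \mZero]$, so that the first $V$ pre-activations are $\mTokenEmbd^\top\content(\cdot)$ and the next $V$ are $\mTokenEmbd^\top\buffer_1(\cdot)$, both in the standard basis $\{\ve_v\}$ (using $\mTokenEmbd^\top\mTokenEmbd=\mI_V$). The nonlinearity is the coordinate-wise $\sigma_1(x)=\mathbbm{1}\{x\ge\varepsilon'\}$, and $\mW_2^{(1)}\in\mathbb{R}^{d\times 2V}$ has top $d_\te$ rows $[\mTokenEmbd\ \mTokenEmbd]$ and all other rows zero, so the two rotated-back indicator vectors are summed into the content block and the buffer/positional block stays zero. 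I would pick the threshold $\varepsilon'$ in the interval $(\varepsilon,\gamma-\varepsilon)$, where $\gamma:=\min\{1/\sqrt{n_{\max}+1},\,1/(\sqrt3\,T_{\max})\}$ lower-bounds every ``signal'' coefficient above; since $\varepsilon$ is a free parameter of the constructions of \Cref{prop:first_layer_mlp} and \Cref{prop:second_layer_attn}, I would finally fix $\varepsilon<\gamma/2$ so this interval is nonempty.

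Then I would check the two bullets. At a thought position the bottom half of $\sigma_1(\mW_1^{(1)}\vh^{(1.5)})$ vanishes (empty buffer), and the $v$-th entry of the top half is $\ge1/\sqrt{|\vertexSet_c|}-\varepsilon\ge\gamma-\varepsilon>\varepsilon'$ for $v\in\vertexSet_c$, is $\ge1/(\sqrt3\,|\sI_c|)-\varepsilon\ge\gamma-\varepsilon>\varepsilon'$ for $v\in\vertexSet_{c+1}\setminus\vertexSet_c$ (each such $v$ is the target of at least one edge out of $\vertexSet_c$, and $|\sI_c|\le m<T_{\max}$), and is $<\varepsilon<\varepsilon'$ for $v\notin\vertexSet_{c+1}=\vertexSet_c\cup\{\tokenTarget_j:\PosIdx(\tokenEdge,j)\in\sI_c\}$; hence $\sigma_1$ outputs $\sum_{v\in\vertexSet_{c+1}}\ve_v$, $\mW_2^{(1)}$ turns it into $\sum_{v\in\vertexSet_{c+1}}\vu_v$ in the content block, orthonormality of $\{\vu_v\}$ gives $\ell_2$-norm $\sqrt{|\vertexSet_{c+1}|}$, and $\layernorm$ produces the first bullet. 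At the answer position the same thresholding selects $\{\tokenEnd_1,\tokenEnd_2,\tokenAnswer\}$ from the content (coefficients $\ge\min\{1/\sqrt{|\vertexSet_C|+1},\,1/\sqrt3\}-\varepsilon>\varepsilon'$, all else $O(\varepsilon)$) and $\vertexSet_C$ from the first buffer, and $\mW_2^{(1)}$ sums them to $\Tilde{\vu}_{\tokenEnd_1}+\Tilde{\vu}_{\tokenEnd_2}+\Tilde{\vu}_{\tokenAnswer}+\sum_{v\in\vertexSet_C}\Tilde{\vu}_v$ in the content block, with $\Tilde{\vu}_{\tokenEnd_{i^*}}$ carried with multiplicity $2$ exactly when $\tokenEnd_{i^*}\in\vertexSet_C$; in the regime relevant to \Cref{thm:main_theorem} ($\tokenEnd_{i^*}\in\vertexSet_C$ and $\tokenEnd_{3-i^*}\notin\vertexSet_C$) orthonormality gives squared norm $4+1+1+(|\vertexSet_C|-1)=|\vertexSet_C|+5$, so $\layernorm$ yields the second bullet.

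The step I expect to be the main obstacle is pinning down the threshold: I must exhibit one positive constant $\gamma$ that lower-bounds every signal coefficient in \Cref{prop:second_layer_attn} uniformly over all graphs, positions, and steps, and then make sure the $\varepsilon$ fixed in \emph{every} earlier construction (first-layer attention chooser, first-layer MLP, second-layer attention) is chosen below $\gamma$ — that is, the global noise scale must be selected last. The only genuinely delicate case is the freshly reached vertices, whose coefficient $\approx1/(\sqrt3\,|\sI_c|)$ could a priori be tiny but is controlled because $|\sI_c|\le m<T_{\max}$. A smaller bookkeeping point is the multiplicity count at the answer position (the ``$+5$''), which relies on exactly one candidate destination being reachable within $C$ steps.
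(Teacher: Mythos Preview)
Your proposal is correct and follows essentially the same approach as the paper: the same two-block ``rotate, threshold, rotate back'' MLP reading $\content$ and $\buffer_1$, the same hard-threshold nonlinearity, the same case analysis at thought and answer positions, and the same $\layernorm$-based normalization (including the $|\vertexSet_C|+5$ bookkeeping). Your choice to bound the freshly-reached signal by $1/(\sqrt3\,|\sI_c|)\ge 1/(\sqrt3\,T_{\max})$ is in fact slightly cleaner than the paper's $1/(\sqrt3\,|\Delta_c|)\ge 1/(2n)$ step, since $|\sI_c|$ (number of outgoing edges) is the quantity that actually appears in \Cref{prop:second_layer_attn} and need not be bounded by $n$.
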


\begin{proof}
    Similar to \Cref{prop:first_layer_mlp}, we let
    \begin{align*}
        \mlp_{\theta^{(1)}_{\mlp}}(\vh^{(1.5)})_i = \mW_2^{(1)} \sigma_\varepsilon^{(1)}(\mW_1^{(1)}\vh^{(1.5)}_i) \in \mathbb{R}^d
    \end{align*}
    where $\varepsilon > 0$ and the (elementwise) non-linearity $\sigma^{(1)}_\varepsilon(\cdot)$ will be specified later. We only consider $\vh^{(1.5)}_i$ where $i = t_0+c$ for $0 \leq c \leq C+1$. By \Cref{prop:second_layer_attn}, we can decompose
    \begin{align*}
        \vh_{t_0+c}^{(1.5)} = 
        \begin{bmatrix}
            \mTokenEmbd \vlambda^{(c)} \\
            \mZero_{d_\te} \\
            \mZero_{d_\te+d_\pe}
        \end{bmatrix}
        \ \forall c \in \{0\} \cup [C], \quad
        \vh_{T}^{(1.5)} = 
        \begin{bmatrix}
            \mTokenEmbd \veta^{(1)} \\
            \mTokenEmbd \veta^{(2)}
            \\
            \mZero_{d_\te + d_\pe}
        \end{bmatrix},
    \end{align*}
    where $\vlambda^{(c)} = [\lambda^{(c)}_1, \ldots, \lambda^{(c)}_V]^\top, \veta^{(1)} = [\eta^{(1)}_1, \ldots \eta^{(1)}_V]^\top, \veta^{(2)} = [\eta^{(2)}_1, \ldots \eta^{(2)}_V]^\top \in \mathbb{R}^V$. Let
    \begin{align*}
        \mW^{(1)}_1 = 
        \begin{bmatrix}
            \mTokenEmbd^\top & & \mZero_{V \times (d_\te + d_\pe)} \\
            & \mTokenEmbd^\top & \mZero_{V \times (d_\te + d_\pe)} 
        \end{bmatrix} \in \mathbb{R}^{(2V) \times d}, \ 
        \mW^{(1)}_2 =
        \begin{bmatrix}
            \mTokenEmbd & \mTokenEmbd  \\  
            \mZero_{(d-d_\te)\times V} & \mZero_{(d-d_\te)\times V} 
        \end{bmatrix} \in \mathbb{R}^{d \times (2V)},
    \end{align*} and $\sigma^{(1)}_\varepsilon(x) = \mathbbm{1}\{ x \geq \varepsilon\}$. Then
    \begin{align*}
        \mlp_{\theta^{(1)}_{\mlp}}(\vh^{(1.5)})_{t_0+c} =& \mW_2^{(1)} \sigma_\varepsilon^{(1)}(\mW_1^{(1)}\vh^{(1.5)}_{t_0+c}) \\ =&
        \mW^{(1)}_2 
        \sigma^{(1)}_\varepsilon(
        \begin{bmatrix}
            \mTokenEmbd^\top & & \mZero_{V \times (d_\te + d_\pe)} \\
            & \mTokenEmbd^\top & \mZero_{V \times (d_\te + d_\pe)} 
        \end{bmatrix}
        \begin{bmatrix}
            \mTokenEmbd \vlambda^{(c)} \\
            \mZero_{d_\te} \\
            \mZero_{d_\te+d_\pe}
        \end{bmatrix}) \\ =&
        \begin{bmatrix}
            \mTokenEmbd & \mTokenEmbd  \\  
            \mZero_{(d-d_\te)\times V} & \mZero_{(d-d_\te)\times V} 
        \end{bmatrix}
        \sigma^{(1)}_\varepsilon(
        \begin{bmatrix}
            \mTokenEmbd^\top\mTokenEmbd\vlambda^{(c)} \\ \mZero_V
        \end{bmatrix})
        \\ =&
        \begin{bmatrix}
            \mTokenEmbd\sigma^{(1)}_\varepsilon(\vlambda^{(c)}) \\
            \mZero_{d-d_\te} 
        \end{bmatrix},
    \end{align*}
    and similarly,
    \begin{align*}
        \mlp_{\theta^{(1)}_{\mlp}}(\vh^{(1.5)})_{T} =& \mW_2^{(1)} \sigma_\varepsilon^{(1)}(\mW_1^{(1)}\vh^{(1.5)}_T) \\ =&
        \mW^{(1)}_2 
        \sigma^{(1)}_\varepsilon(
        \begin{bmatrix}
            \mTokenEmbd^\top & & \mZero_{V \times (d_\te + d_\pe)} \\
            & \mTokenEmbd^\top & \mZero_{V \times (d_\te + d_\pe)} 
        \end{bmatrix}
        \begin{bmatrix}
            \mTokenEmbd \veta^{(1)} \\
            \mTokenEmbd \veta^{(2)}
            \\
            \mZero_{d_\te + d_\pe}
        \end{bmatrix}) \\ =&
        \begin{bmatrix}
            \mTokenEmbd & \mTokenEmbd  \\  
            \mZero_{(d-d_\te)\times V} & \mZero_{(d-d_\te)\times V} 
        \end{bmatrix}
        \sigma^{(1)}_\varepsilon(
        \begin{bmatrix}
            \mTokenEmbd^\top\mTokenEmbd\veta^{(1)} \\ \mTokenEmbd^\top\mTokenEmbd\veta^{(2)}
        \end{bmatrix})
        \\ =&
        \begin{bmatrix}
            \mTokenEmbd\left(\sigma^{(1)}_\varepsilon(\veta^{(1)}) + \sigma^{(1)}_\varepsilon(\veta^{(2)}) \right) \\
            \mZero_{d-d_\te} 
        \end{bmatrix}.
    \end{align*}    

    Now we choose $\varepsilon = \frac{1}{4n}$ where $n = |\vertexSet|$ is the number of vertices in the graph. By \Cref{prop:second_layer_attn}, we can make sure that $\|\veps^{(c)}\|_\infty < \frac{1}{16n}$ for all $c \in \{0\} \cup [C+1]$ where $\veps^{(c)}$ is defined in \Cref{prop:second_layer_attn}. We define $\Delta_c = \{ \tokenTarget_j \ | \ \tokenSource_j \in \vertexSet_c, \ j \in [m]\}$ which is the set of vertices that can be reached within one step from the currently explored vertex set $\vertexSet_c$. By definition, we have $\vertexSet_{c+1} = \vertexSet_c \cup \Delta_c$. We consider any fixed $c \in \{0 \} \cup [C]$. For $v \in \vertexSet_c \cup \Delta_c$, it holds that 
    \begin{align*}
        \lambda^{(c)}_v \geq \min \{ 1/\sqrt{|\vertexSet_c|}, 1/(\sqrt{3}|\Delta_c|) \} + \varepsilon^{(c)}_v \geq \frac{1}{2n} - \frac{1}{16n} > \frac{1}{4n} = \varepsilon.
    \end{align*}
    For other $v$, 
    \begin{align*}
        \lambda^{(c)}_v \leq \varepsilon^{(c)}_v < \frac{1}{16n} < \varepsilon.
    \end{align*}
    Therefore, $\sigma^{(1)}_\varepsilon(\lambda^{(c)}_v) = \mathbbm{1}\{ v \in \vertexSet_c \cup \Delta_c \} = \mathbbm{1}\{ v \in \vertexSet_{c+1} \}$, and thus we have
    \begin{align*}
        \mlp_{\theta^{(1)}_{\mlp}}(\vh^{(1.5)})_{t_0+c} = 
        \begin{bmatrix}
            \sum_{v\in\vertexSet_{c+1}} \Tilde{\vu}_v \\
            \mZero_{d-d_\te} 
        \end{bmatrix}, \quad \forall c \in \{0\} \cup [C].
    \end{align*}

    Also, we have $\eta^{(1)}_{\tokenEnd_1} = \frac{1}{\sqrt{3}} + \varepsilon^{(C+1)}_{\tokenEnd_1}$,  $\eta^{(1)}_{\tokenEnd_2} = \frac{1}{\sqrt{3}} + \varepsilon^{(C+1)}_{\tokenEnd_2}$,  $\eta^{(1)}_{\tokenAnswer} = \frac{1}{\sqrt{|\vertexSet_C|+1}} + \varepsilon^{(C+1)}_{\tokenAnswer}$, and $\eta^{(1)}_v < \frac{1}{4n}$ for other $v \in \vocab$. Moreover, $\eta^{(2)}_v = \frac{1}{\sqrt{|\vertexSet_C|+1}} \cdot \mathbbm{1}\{v \in \vertexSet_C \}$, which implies $\sigma^{(1)}_\varepsilon(\eta^{(2)}_v) = \mathbbm{1}\{v \in \vertexSet_C \}$. Then
    \begin{align*}
        \mlp_{\theta^{(1)}_{\mlp}}(\vh^{(1.5)})_{T} = 
        \begin{bmatrix}
            \Tilde{\vu}_{\tokenEnd_1} + \Tilde{\vu}_{\tokenEnd_2} + \Tilde{\vu}_{\tokenAnswer} + \sum_{v\in\vertexSet_{C}} \Tilde{\vu}_v \\
            \mZero_{d-d_\te} 
        \end{bmatrix}.
    \end{align*}
    Note that by our construction of the input sequence, one and only one of $\tokenEnd_i$ is in $\vertexSet_C$, and thus $\| \mlp_{\theta^{(1)}_{\mlp}}(\vh^{(1.5)})_{T} \|_2 = \sqrt{|\vertexSet_C|+5}$.
    
    By applying layer normalization, we can obtain the final result.
\end{proof}

\subsection{Proof of the main theorem}
\label{app:subsec:proof_main_theorem}

\begin{proof}[Proof of \Cref{thm:main_theorem}]
    We use the same construction of $\theta$ as in \Cref{lemma:current_thought} where the maximum input length is $T_{\max}$. By \Cref{lemma:current_thought}, we have $\vh_{t_0+c} = \frac{1}{\sqrt{|\vertexSet_c|}} \sum_{v \in \vertexSet_c } \vu_v$, for $0 \leq c \leq C$. Then $(\vh_1, \ldots, \vh_T)$ satisfies the input format specified in \Cref{app:subsec:proof_key_lemma}, and by \Cref{prop:second_layer_mlp}, we have
    \begin{align*}
        \transformer_\theta(\vh_1, \ldots, \vh_T) = \vh^{(2)}_{T} =& \frac{1}{\sqrt{|\vertexSet_C|+5}} ( \vu_{\tokenEnd_1} + \vu_{\tokenEnd_2} + \vu_{\tokenAnswer} + \sum_{v\in\vertexSet_{C}} \vu_v ) \\ =& \frac{1}{\sqrt{|\vertexSet_C|+5}} ( 2\vu_{\tokenEnd_{i^*}} + \vu_{\tokenEnd_{3-i^*}} + \vu_{\tokenAnswer} + \sum_{v\in\vertexSet_{C}\backslash\{\tokenEnd_{i^*}\}} \vu_v )
    \end{align*}
    since $\tokenEnd_{i^*} \in \vertexSet_C$ and $\tokenEnd_{3-i^*} \notin \vertexSet_C$ by the property the graph reachability problem.
    We set $\mReadOut = [\vu_1, \vu_2, \ldots, \vu_V]^\top$ and then
    \begin{align*}
        & \mReadOut\transformer_{\theta}(\vh_1, \ldots, \vh_T) = \frac{1}{\sqrt{|\vertexSet_C|+5}} ( 2\onehot_{\tokenEnd_{i^*}} + \onehot_{\tokenEnd_{3-i^*}} + \onehot_{\tokenAnswer} + \sum_{v\in\vertexSet_{C}\backslash\{\tokenEnd_{i^*}\}} \onehot_v ) \\
        \implies & \widetilde{\transformer}_{\theta, C, \mReadOut}(\vh_{[t_0]}) = \arg\max_{v\in\vocab} \mReadOut\transformer_{\theta}(\vh_1, \ldots, \vh_T) = c_{i^*}.
    \end{align*}
\end{proof}

\subsection{Properties of sinusoidal positional encodings}
\label{app:subsec:positional_encoding}

\begin{lemma}
\label{lem:pos_encoding_rotation}
    For any integer $\ell \geq 1$, there exists a matrix $\mRotation^{(\ell)} \in \mathbb{R}^{d_\pe \times d_\pe}$, such that 
    \begin{align*}
        \bar \vp_{i+\ell} = \mRotation^{(\ell)} \bar \vp_i, \quad \forall i \geq 1.
    \end{align*}
\end{lemma}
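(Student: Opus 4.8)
The plan is to exhibit $\mRotation^{(\ell)}$ explicitly as a block-diagonal matrix built from $2\times 2$ rotations, one for each frequency appearing in the sinusoidal encoding. First I would fix the frequencies $\omega_j = M^{-2j/d_\pe}$ for $j \in [d_\pe/2]$ and observe that, by \Cref{defn:pos_encoding}, the coordinate pair $(\bar p_{i,2j-1}, \bar p_{i,2j}) = (\cos(i\omega_j), \sin(i\omega_j))$ depends on position $i$ only through the angle $i\omega_j$. Thus $\bar\vp_i$ decomposes into $d_\pe/2$ independent two-dimensional blocks indexed by $j$.

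Next I would apply the angle-addition identities $\cos((i+\ell)\omega_j) = \cos(i\omega_j)\cos(\ell\omega_j) - \sin(i\omega_j)\sin(\ell\omega_j)$ and $\sin((i+\ell)\omega_j) = \sin(i\omega_j)\cos(\ell\omega_j) + \cos(i\omega_j)\sin(\ell\omega_j)$ to each block. This shows that for every $i \geq 1$,
\begin{align*}
  \begin{bmatrix} \bar p_{i+\ell,2j-1} \\ \bar p_{i+\ell,2j} \end{bmatrix}
  = \begin{bmatrix} \cos(\ell\omega_j) & -\sin(\ell\omega_j) \\ \sin(\ell\omega_j) & \cos(\ell\omega_j) \end{bmatrix}
    \begin{bmatrix} \bar p_{i,2j-1} \\ \bar p_{i,2j} \end{bmatrix}
  =: \mRotation^{(\ell)}_j \begin{bmatrix} \bar p_{i,2j-1} \\ \bar p_{i,2j} \end{bmatrix}.
\end{align*}
Crucially, the rotation angle $\ell\omega_j$ is independent of $i$, so the matrix $\mRotation^{(\ell)}_j$ does not depend on the position being shifted.

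Finally I would assemble $\mRotation^{(\ell)} \in \mathbb{R}^{d_\pe \times d_\pe}$ as the block-diagonal matrix $\mRotation^{(\ell)} = \diag\bigl(\mRotation^{(\ell)}_1, \mRotation^{(\ell)}_2, \ldots, \mRotation^{(\ell)}_{d_\pe/2}\bigr)$, which is well-defined since $d_\pe$ is even, and verify coordinate-wise that $\bar\vp_{i+\ell} = \mRotation^{(\ell)}\bar\vp_i$ for all $i \geq 1$ by reading off the $j$-th block equation above. I do not anticipate a real obstacle here: the only thing to be careful about is the index bookkeeping that pairs coordinate $2j-1$ (the cosine) with coordinate $2j$ (the sine) within each block, and matching the claimed dimension $d_\pe$. (As a remark one may note $\mRotation^{(\ell)}$ is orthogonal, being a direct sum of planar rotations, and that $\mRotation^{(\ell)} = (\mRotation^{(1)})^\ell$, which is occasionally convenient but not needed for the statement.)
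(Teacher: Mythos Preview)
Your proposal is correct and follows essentially the same approach as the paper: construct the $2\times 2$ rotation block $\mRotation^{(\ell)}_j$ for each frequency via the angle-addition identities, then assemble $\mRotation^{(\ell)}$ as the block-diagonal matrix of these rotations. The paper's proof is identical up to notation (it writes $\mRotation^{(\ell,j)}$ for your $\mRotation^{(\ell)}_j$ and invokes ``by definition of the rotation matrix'' where you spell out the trig identities).
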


\begin{proof}
    For any $j \in [d_\pe/2]$, we construction a two-dimensional rotation matrix 
    \begin{align*}
        \mRotation^{(\ell, j)} = 
        \begin{bmatrix}
            \cos (\ell\cdot \omega^j) & -\sin (\ell\cdot \omega^j) \\
            \sin (\ell\cdot \omega^j) & \cos (\ell\cdot \omega^j)
        \end{bmatrix}.
    \end{align*}
    By definition of the rotation matrix, for any $i \geq 1$, we have
    \begin{align*}
    \mRotation^{(\ell, j)}
        \begin{bmatrix}
            \bar p_{i,2j-1} \\ \bar p_{i,2j}
        \end{bmatrix}
        = \mRotation^{(\ell, j)}
        \begin{bmatrix}
            \cos\left(i \cdot \omega^j \right) \\ \sin\left(i \cdot \omega^j \right)
        \end{bmatrix} = 
        \begin{bmatrix}
            \cos\left((i+\ell) \cdot \omega^j \right) \\ \sin\left((i+\ell) \cdot \omega^j \right)
        \end{bmatrix}
        = \begin{bmatrix}
            \bar p_{i+\ell,2j-1} \\ \bar p_{i+\ell,2j}
        \end{bmatrix}.
    \end{align*}
    Then by setting $\mRotation^{(\ell)} = \diag\{\mRotation^{(\ell,1)}, \mRotation^{(\ell,2)}, \ldots, \mRotation^{(\ell,d_\pe/2)} \} \in \mathbb{R}^{d_\pe\times d_\pe}$ we can obtain the desired result.
\end{proof}

\begin{lemma}
\label{lem:pos_encoding_distinct}
    For any integer $T \geq 1$, there exists $\varepsilon_T > 0$, s.t. $\langle \bar \vp_i, \bar \vp_j \rangle \leq d_\pe/2 - \varepsilon_T$ for any $i, j \in [T]$ where $i \neq j$. Also, $\langle \bar \vp_i, \bar \vp_i \rangle = d_\pe/2$ for all $i \in [T]$.
\end{lemma}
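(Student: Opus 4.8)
The plan is to collapse the inner product to a single sum of cosines and then argue term by term. Writing $\omega_k = M^{-2k/d_\pe}$ for $k\in[d_\pe/2]$, the defining formulas for $\bar\vp_i$ together with the identity $\cos a\cos b+\sin a\sin b=\cos(a-b)$ give
\[
  \langle \bar\vp_i,\bar\vp_j\rangle
  \;=\;\sum_{k=1}^{d_\pe/2}\bigl(\cos(i\omega_k)\cos(j\omega_k)+\sin(i\omega_k)\sin(j\omega_k)\bigr)
  \;=\;\sum_{k=1}^{d_\pe/2}\cos\bigl((i-j)\omega_k\bigr).
\]
Taking $i=j$ makes every summand equal to $1$, which immediately yields $\langle\bar\vp_i,\bar\vp_i\rangle=d_\pe/2$ and proves the second assertion.

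For the first assertion, fix $i\neq j$ in $[T]$ and set $\delta=i-j$, a nonzero integer with $1\le|\delta|\le T-1$. Since $\cos(\delta\omega_k)\le 1$ for every $k$, it is enough to find one index $k$ where the inequality is strict, i.e. $\delta\omega_k\notin 2\pi\mathbb{Z}$. I would use the smallest frequency, $\omega_{d_\pe/2}=M^{-1}=1/M$: then $\delta\omega_{d_\pe/2}=\delta/M$ is a nonzero rational number, and the only rational integer multiple of $2\pi$ is $0$ (because $\pi$ is irrational), so $\cos(\delta/M)<1$ strictly. Using $d_\pe\ge 2$, this gives $\langle\bar\vp_i,\bar\vp_j\rangle\le (d_\pe/2-1)+\cos(\delta/M)<d_\pe/2$. (If one prefers to avoid invoking irrationality, one can instead note that $M$ is a large integer, so for the lengths of interest $0<|\delta|/M<2\pi$, again forcing $\cos(\delta/M)<1$.)

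To obtain a uniform gap I would then set $\varepsilon_T=\min_{1\le|\delta|\le T-1}\bigl(1-\cos(\delta/M)\bigr)$, a minimum of finitely many strictly positive numbers, hence $\varepsilon_T>0$; combined with the bound of the previous paragraph this yields $\langle\bar\vp_i,\bar\vp_j\rangle\le d_\pe/2-\varepsilon_T$ for all distinct $i,j\in[T]$. The proof is otherwise elementary; the only point needing a moment of care is the choice of a frequency that guarantees strictness simultaneously for every admissible difference $\delta$, which is exactly what singling out $\omega_{d_\pe/2}=1/M$ (plus a rationality/finiteness argument) accomplishes.
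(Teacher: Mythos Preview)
Your proof is correct and follows essentially the same route as the paper: collapse the inner product to $\sum_{k}\cos((i-j)\omega_k)$ via the angle-subtraction identity, then show the sum is strictly below $d_\pe/2$ when $i\neq j$ and take a finite minimum to get a uniform $\varepsilon_T$. The only minor difference is that the paper asserts $\cos((i-j)\omega_k)<1$ for \emph{every} $k$ and sets $\varepsilon_T=\sum_k(1-\max_{i\neq j}\cos((i-j)\omega_k))$, whereas you bound all but the last term by $1$ and extract the gap solely from $k=d_\pe/2$ (where $\omega_k=1/M$ is rational); your variant is a slight simplification but not a different argument.
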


\begin{proof}
    For any $i, j \in [T]$, we have
    \begin{align*}
        \langle \bar \vp_i, \bar \vp_j \rangle =& \sum_{k=1}^{d_\pe} p_{i,k} p_{j,k} \\ =& \sum_{k=1}^{d_\pe/2} (\cos (i \cdot \omega^k)  \cos (j \cdot \omega^k) + \sin (i \cdot \omega^k) \sin (j \cdot \omega^k)) \\  =& \sum_{k=1}^{d_\pe/2} \cos ((i-j)\omega^k).
    \end{align*}
    Note that for $i = j$, we can directly obtain that $\langle p_i, p_j \rangle = d_\pe/2$ since $\cos( (i-j) \omega^k) = \cos 0 = 1$ for $i = j$ and any $k \in [d_\pe/2]$.  Also, since $(i-j)\omega^{k}$ = $\frac{i-j}{M^{2k/d_\pe}}$ is not a multiplier of $2\pi$ for $i \neq j, k \in [d_\pe/2]$, we have $\cos((i-j)\omega^{k}) < 1$. Let 
    \begin{align*}
        \varepsilon_{T,k} = 1 - \max_{i,j\in[T], i\neq j} \cos((i-j)\omega^{k})  > 0, \quad \forall k \in [d_\pe/2],
    \end{align*}
    and let $\varepsilon_T = \sum_{k=1}^{d_\pe/2} \varepsilon_{T, k} > 0$.
    Therefore, for $i \neq j$, we have
    \begin{align*}
         \langle \bar \vp_i, \bar \vp_j \rangle =  \sum_{k=1}^{d_\pe/2} \cos ((i-j)\omega^k) \leq \sum_{k=1}^{d_\pe/2} (1-\varepsilon_{T,k}) = d_\pe/2 - \varepsilon_T. 
    \end{align*}
\end{proof}

\subsection{Implementing attention chooser under rotary position embedding}
\label{app:sec:rope}
In this section, we discuss how to extend our constructions under sinusoidal positional encoding, a widely used absolute positional encoding, to the rotary position embedding (\rope{})~\citep{su2024roformer}, a widely used relative positional encoding, to solve graph reachability. Since in our construction, the positional encoding only functions in the first attention layer, and the building blocks of the first attention layers are attention choosers, we mainly focus on how to build the attention chooser block under \rope{}. 

Since \rope{} is a relative positional encoding, we don't use $\vp_i$ in computation and thus don't need the last $d_\pe$ entries in the embedding vectors. Also, since the attention chooser only uses the information in the content space, we omit the two buffer spaces and only keep two dimensions to make our construction clean and assume $d = d_\te + 2$ in this section for the simplicity of the notation. Therefore, we have $\vu_v =  [ \tilde \vu_v^\top, 0, 0]^\top \in \mathbb{R}^d$ for all $v \in \vocab$ in this section, where $\{ \tilde{\vu}_v \}_{v\in\vocab}$ are orthonormal.  Also, assume the input embedding of attention layers satisfies $\vh_i = [\tilde \vh_i^\top, 1, 0]^\top \in \mathbb{R}^d$. This can be achieved easily by adding a bias before the attention layer or modifying the $(d_\te+1)$-th entry of token embeddings.

Recall the definition of \rope{} below:

\begin{definition}[Rotary position embedding~\citep{su2024roformer}]
\label{defn:rope}
Let $d$ be even. For any integer $i$ and any index $k \in [d/2]$, we define 
    \begin{align*}
        \mRotation^{(i, k)} = 
        \begin{bmatrix}
            \cos (i\cdot \omega^k) & -\sin (i\cdot \omega^k) \\
            \sin (i\cdot \omega^k) & \cos (i\cdot \omega^k)
        \end{bmatrix}.
    \end{align*}
Let
\[\mRotation^{(i)}_\te = \diag\{\mRotation^{(i,1)}, \mRotation^{(i,2)}, \ldots, \mRotation^{(i,d_\te/2)} \} \in \mathbb{R}^{d_\te \times d_\te}\] and 
\[\mRotation^{(i)} = \diag\{\mRotation^{(i,1)}, \mRotation^{(i,2)}, \ldots, \mRotation^{(i,d/2)} \} \in \mathbb{R}^{d \times d},\] 
where $\omega = M^{-2/d}$ and $M > 0$ is a large constant integer, e.g., $M = 10^4$ as chosen in \citet{su2024roformer}. We make one additional assumption that $M \geq T$ where $T$ is the maximum length of the input sequence. Then the query vector $\vq_i$ and key vector $\vk_i$ in \Cref{alg:attn_and_mlp} will be calculated as 
\begin{align*}
    \vq_i = \mRotation^{(i)} \mQuery \vh_i, \quad \vk_i = \mRotation^{(i)} \mKey \vh_i.
\end{align*}

\end{definition}

Now we show the counterpart of \Cref{lemma:gated_attn_head} under RoPE below.

\begin{lemma}[Attention chooser under RoPE]
\label{lemma:attn_chooser_rope}
Fix any token $\tokenx \in \vocab$, integer $\ell \geq 0$, and $\varepsilon \in (0, 1)$. Under rotary position embedding as defined in \Cref{defn:rope}, there exists a construction of $\mKey, \mQuery \in \mathbb{R}^{(2d) \times d}$, such that for any input sequence $(\vh_1, \ldots, \vh_T)$ that satisfies 
\begin{equation}
\label{eq:input_embedding_decomposition_rope}
    \tilde \vh_i = \sum_{v\in \vocab} \lambda_{i,v} \tilde  \vu_v , \text{ where } \lambda_{i,v} \geq 0 \quad \forall v \in \vocab, \sum_{v \in \vocab} \lambda_{i,v}^2 = 1, \quad \forall i \in [T],
\end{equation}
and satisfies $\langle \tilde \vu_\tokenx, \tilde \vh_i \rangle \in \{0, 1\}$ (i.e., each input embedding is either equal to the embedding of token $\tokenx$ or orthogonal to it) and $\langle \tilde  \vu_\tokenx,\tilde  \vh_i \rangle = 0$ for $i \leq \ell$ (i.e., the first $\ell$ tokens are not $\tokenx$), it holds that for any $i \in [T]$,
\begin{align*}
\text{if $\langle \tilde  \vh_i, \tilde  \vu_\tokenx \rangle = 1$, then $s_{i,i-l} > 1 - \varepsilon$, otherwise $s_{i,1} > 1 -\varepsilon$},    
\end{align*} 
where $s_{i,j}$ is the attention score from the $i$-th token to the $j$-th token as defined in \Cref{alg:attn_and_mlp} with the modification defined in \Cref{defn:rope} with the input sequence $(\vh_1, \ldots, \vh_T)$.
\end{lemma}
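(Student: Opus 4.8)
The plan is to track the sinusoidal attention chooser of \Cref{lemma:gated_attn_head} as closely as possible. There the relevant inner product was $\langle \vq_i,\vk_j\rangle = \eta\big(\langle \bar\vp_i,\bar\vp_{j+\ell}\rangle + \xi\,\langle \tilde\vu_{\bar\tokenx},\tilde\vh_i\rangle\,\langle \bar\vp_1,\bar\vp_j\rangle\big)$, i.e.\ a relative-offset term peaking at $j=i-\ell$ plus a content-gated ``absolute position'' term peaking at $j=1$. Under \rope{} (\Cref{defn:rope}) there is no absolute encoding left, so I would realize \emph{both} terms inside a single, very slow two-dimensional rotation block. The only fact about \rope{} I would need is the elementary identity that, for unit vectors $\vq,\vk$ lying in a rotation block of frequency $\omega$, $\langle \mRotation^{(i)}\vq,\mRotation^{(j)}\vk\rangle = \cos\!\big(\theta_0+(i-j)\omega\big)$, where $\theta_0$ is the signed angle from $\vk$ to $\vq$; in particular this depends only on $i-j$. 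I would use the block with the \emph{smallest} frequency $\omega^{\star}$, for which $T\omega^{\star}\le T/M\le 1$ by the standing hypothesis $M\ge T$, so that $(i-j)\omega^{\star}\in[0,1]$ for all $1\le j\le i\le T$. I would also reuse verbatim the gating dichotomy from \Cref{lemma:gated_attn_head}: since $\langle \tilde\vu_\tokenx,\tilde\vh_i\rangle = \lambda_{i,\tokenx}\in\{0,1\}$ and $\sum_v \lambda_{i,v}^2=1$, one gets $\langle \tilde\vu_{\bar\tokenx},\tilde\vh_i\rangle = \sum_{v\ne\tokenx}\lambda_{i,v}=0$ when $\vh_i=\vu_\tokenx$ and $\langle \tilde\vu_{\bar\tokenx},\tilde\vh_i\rangle \ge \sum_{v\ne\tokenx}\lambda_{i,v}^2 = 1$ otherwise, so at every position exactly one of the two content gates is nonzero.

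Concretely, I would build the head as follows. Using the constant coordinate present in every input ($\vh_j=[\tilde\vh_j^\top,1,0]^\top$), let $\mKey$ send every $\vh_j$ to one fixed unit vector $\mathbf{a}$ placed in the slow block, so that $\vk_j=\mRotation^{(j)}\mathbf{a}$ for all $j$; let $\mQuery$ send $\vh_i$ to $\eta\,\langle \tilde\vu_\tokenx,\tilde\vh_i\rangle\,\mathbf{b} + \xi\,\langle \tilde\vu_{\bar\tokenx},\tilde\vh_i\rangle\,\mathbf{c}$ in the same block, realized by the outer-product matrix $\eta\,\mathbf{b}\otimes\tilde\vu_\tokenx + \xi\,\mathbf{c}\otimes\tilde\vu_{\bar\tokenx}$, where $\mathbf{b},\mathbf{c}$ are unit vectors whose angles to $\mathbf{a}$ are pinned below and $\eta,\xi>0$ are large constants chosen last. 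By the dichotomy, $\vq_i=\eta\,\mRotation^{(i)}\mathbf{b}$ when $\vh_i=\vu_\tokenx$ and $\vq_i=\xi\,\langle \tilde\vu_{\bar\tokenx},\tilde\vh_i\rangle\,\mRotation^{(i)}\mathbf{c}$ otherwise, so the two regimes never mix. I would choose the angle of $\mathbf{b}$ so that $\langle \vq_i,\vk_j\rangle = \eta\cos\!\big((i-j-\ell)\omega^{\star}\big)$; since $\vh_i=\vu_\tokenx$ forces $i>\ell$ and $|i-j-\ell|\omega^{\star}\le T\omega^{\star}\le 1<\pi$ for $j\in[i]$, this is uniquely maximized at $j=i-\ell$, with a gap $\ge\eta\,\delta_T$ for a constant $\delta_T>0$ depending only on $T,\omega^{\star}$. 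I would choose the angle of $\mathbf{c}$ so that, as $i-j$ runs over $\{0,\dots,i-1\}$, the argument $\theta_0+(i-j)\omega^{\star}$ stays inside one of the half-periods $[-\pi,0]$ or $[0,\pi]$ on which $\cos$ is strictly monotone (possible since this arc has length $<1<\pi$ and $\theta_0$ is free), arranged so that $j\mapsto\langle \vq_i,\vk_j\rangle$ is strictly monotone on $[i]$ and attains its \emph{maximum} at $j=1$.

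Finishing is then a routine softmax estimate. When $\langle \tilde\vh_i,\tilde\vu_\tokenx\rangle=1$, the maximizer $j=i-\ell$ beats every other $j\le i$ by at least $\eta\,\delta_T$, so $s_{i,i-\ell}\ge \exp(\eta\delta_T)/(\exp(\eta\delta_T)+(i-1))>1-\varepsilon$ once $\eta$ is large. When $\langle \tilde\vh_i,\tilde\vu_\tokenx\rangle=0$, the score equals $\xi\,\langle \tilde\vu_{\bar\tokenx},\tilde\vh_i\rangle$ times a quantity strictly monotone in $i-j$ with maximum at $j=1$; using $\langle \tilde\vu_{\bar\tokenx},\tilde\vh_i\rangle\ge 1$, the gap between $j=1$ and every other $j\le i$ is at least $\xi\,\delta_T'$ for some $\delta_T'>0$, so $s_{i,1}>1-\varepsilon$ once $\xi$ is large; picking $\eta,\xi$ large enough for both bounds completes the proof. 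The one genuine obstacle I anticipate is the ``otherwise attend to position $1$'' clause: position $1$ is an \emph{absolute} landmark, whereas \rope{} exposes only relative offsets, so the construction is forced to convert ``attend to absolute position $1$'' into ``attend to whichever key has the largest relative offset,'' and the slow-frequency block — keeping $\cos$ strictly monotone across the entire window of offsets, which is exactly where $M\ge T$ is used — is what makes this conversion legitimate; everything else is a re-run of the sinusoidal argument.
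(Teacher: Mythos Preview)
Your proposal is correct, and it takes a route that is genuinely different from the paper's. The paper keeps two separate \rope{} blocks active: a $d_\te$-dimensional block (many frequencies) carrying an \emph{always-on} offset term $\langle \vp_i^\te,\vp_{j+\ell}^\te\rangle$ read from the constant coordinate, plus the slowest $2$-dimensional block carrying a sink term $\cos((i-j-T)\omega^{d/2})$ gated only by $\tilde\vu_{\bar\tokenx}$; when $\vh_i\neq\vu_\tokenx$ both terms are present and $\xi$ is made large so the sink dominates. You instead place \emph{everything} in the single slow $2$D block and gate the query into two mutually exclusive directions $\mathbf{b},\mathbf{c}$ via $\tilde\vu_\tokenx$ and $\tilde\vu_{\bar\tokenx}$, exploiting the $\{0,1\}$ hypothesis so the two regimes never coexist. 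Both constructions hinge on the same idea you correctly flag as the crux: under \rope{} ``attend to absolute position $1$'' must be realized as ``attend to the maximal relative offset,'' and the assumption $M\ge T$ is exactly what keeps $\cos$ strictly monotone across the full window of offsets (the paper achieves this via the $-T$ phase shift $\vp_{-T}^{\widetilde\te}$, you via a free choice of $\theta_0$). What each buys: the paper's version inherits the larger gap $\varepsilon_T$ from \Cref{lem:pos_encoding_distinct} for the $j=i-\ell$ term and mirrors the sinusoidal proof almost line by line; your version is more parsimonious (one $2$D block, no domination argument) at the price of a tiny gap $\delta_T\sim 1-\cos(\omega^\star)$, which is harmless since $\eta,\xi$ are free.
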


\begin{proof}
Note that \eqref{eq:input_embedding_decomposition_rope} implies that each input embedding is a normalized superposition of token embeddings in the vocabulary (ignoring the last two entries).
We aim to construct an attention head such that when the $i$-th token is $\tokenx$, it will pay almost all attention to the position $i-\ell$, otherwise it will pay almost all attention to the BOS token $\tokenBOS$ (known as the attention sink).
We define vector $\Tilde \vu_{\bar \tokenx} =  \sum_{v\in\vocab\backslash\{\tokenx\}} \Tilde{\vu}_v \in \mathbb{R}^{d}$, which is the superposition of all token embeddings in the vocabulary except for $\tokenx$. Moreover, for any integer $i$, we define $\vp_{i}^{\te} = (p_{i, 1}, \ldots, p_{i,d_\te})^\top \in \mathbb{R}^{d_\te}$, and define $\vp_{i}^{\widetilde{\te}} = (p_{i, d-1}, p_{i,d})^\top \in \mathbb{R}^2$, where for any $j \in [d/2]$, we have
\begin{align*}
    p_{i,2j-1} = \cos\left(i \cdot \omega^j \right),  \quad p_{i,2j} = \sin\left( i\cdot \omega^j \right),
\end{align*}
and $\omega = M^{-2/d_\pe}$ where $M$ is defined in \Cref{defn:rope}.

Now we construct query and key matrices to be 
\begin{align*}
   \mQuery =& \begin{bmatrix}
       \mZero_{d_\te \times d_\te} &  \vp_{0}^{\te} \otimes \mOne_2  \\
      \xi \vp_{-T}^{\widetilde{\te}} \otimes \tilde \vu_{\bar \tokenx} & \mZero_{2 \times 2} 
      \end{bmatrix} \in \mathbb{R}^{d \times d}, \\
   \mKey =& \begin{bmatrix}
        \mZero_{d_\te \times d_\te} & \eta \mRotation^{(\ell)}_\te (\vp_{0}^{\te} \otimes \mOne_2)  \\
      \mZero_{2 \times d_\te}  & \eta \vp_{0}^{\widetilde{\te}} \otimes \mOne_2
      \end{bmatrix} \in \mathbb{R}^{d \times d},
\end{align*}
where $\xi, \eta > 0$ will be specified later and $\mRotation^{(\ell)}_\te \in \mathbb{R}^{d_\te \times d_\te}, \mRotation^{(-T,d/2)} \in \mathbb{R}^{2\times 2}$ are defined as in \Cref{defn:rope} and $\mOne_m$ denotes the all-one vector of dimension $m$. By \Cref{lem:pos_encoding_rotation}, one can calculate that 
\begin{align*}
    \mRotation^{(j)}_\te \vp_{i}^{\te} =  \vp_{i+j}^{\te}, \quad \mRotation^{(j,d/2)}\vp_{i}^{\widetilde{\te}} = \vp_{i+j}^{\widetilde{\te}}, \quad  \text{for any integers } i, j.
\end{align*}

Therefore,
\begin{align*}
    \vq_i =& \mRotation^{(i)} \mQuery \vh_i = \mRotation^{(i)}
    \begin{bmatrix}
          \vp_{0}^{\te}  \\ \xi \langle \tilde \vu_{\bar \tokenx}, \tilde \vh_i \rangle \vp_{-T}^{\widetilde{\te}} \end{bmatrix}, \\ 
        \vk_i =& \mRotation^{(i)}\mKey \vh_i  = \mRotation^{(i)}\begin{bmatrix}
        \eta \mRotation^{(\ell)}_\te \vp_0^{\te} \\  \eta  \vp_{0}^{\widetilde{\te}}  
        \end{bmatrix} = 
        \mRotation^{(i)}\begin{bmatrix}
        \eta  \vp_\ell^{\te} \\  \eta  \vp_{0}^{\widetilde{\te}} 
        \end{bmatrix}.
\end{align*}
Then for any $1 \leq j \leq i \leq T$, we have 
\begin{align*}
    \langle \vq_i, \vk_j \rangle =& \eta \left( \langle \mRotation^{(i)}_\te \vp_{0}^{\te}, \mRotation^{(j)}_\te \vp_{\ell}^{\te}  \rangle +  \xi  \langle \tilde \vu_{\bar \tokenx}, \tilde \vh_i \rangle \langle \mRotation^{(i)} \vp_{-T}^{\widetilde{\te}} , \mRotation^{(j)} \vp_{0}^{\widetilde{\te}}  \rangle \right) 
    \\ 
    =& \eta \left( \langle \vp_{i}^{\te}, \vp_{j+\ell}^{\te}  \rangle +  \xi  \langle \tilde \vu_{\bar \tokenx}, \tilde \vh_i \rangle \cdot \langle \vp_{i-j-T}^{\widetilde{\te}}, \vp_{0}^{\widetilde{\te}}  \rangle\right).
\end{align*}

Now we fix $i \in [T]$. We first consider the case where $\langle \tilde \vh_i, \tilde \vu_\tokenx \rangle = 1$ (which also implies $i > \ell$). By \eqref{eq:input_embedding_decomposition_rope} and the assumption that token embeddings are orthonormal, we have
\begin{align*}
    \langle \tilde \vh_i, \tilde \vu_v \rangle  = 0, \ \forall v \in \vocab\backslash\{\tokenx\} \implies   \langle \tilde \vh_i, \Tilde \vu_{\bar \tokenx}  \rangle = 0.
\end{align*}
Therefore, we have $\langle \vq_i, \vk_j \rangle = \eta \langle \vp_{i}^{\te}, \vp_{j+\ell}^{\te}  \rangle$. By \Cref{lem:pos_encoding_distinct}, we have 
\begin{align*}
    \langle \vq_i, \vk_{i-\ell} \rangle =  \eta \langle  \vp_i^\te,  \vp_{(i-\ell)+\ell}^\te  \rangle =  \eta \langle \vp_i^\te, \vp_{i}^\te  \rangle = \eta d_\te/2
\end{align*}
and 
\begin{align*}
    \langle \vq_i, \vk_{j} \rangle = \eta \langle  \vp_i^\te, \vp_{j+\ell}^\te \rangle \leq \eta d_\te/2 - \eta \varepsilon_T , \ \forall j \neq i-\ell. 
\end{align*}
where $\varepsilon_T > 0$. This implies $\langle \vq_i, \vk_j \rangle$ is maximized when $j = i-\ell$ with a non-zero gap $\eta \varepsilon_T$, and therefore, we have
\begin{equation}
\label{eq:attn_first_layer_pos_rope}
    s_{i,i-\ell} = \frac{ \exp(\langle \vq_i, \vk_{i-\ell} \rangle)}{\sum_{j \in [i]} \exp(\langle \vq_i, \vk_{j} \rangle)} \geq \frac{\exp(\eta\varepsilon_T)}{\exp(\eta\varepsilon_T) + (i-1)}.
\end{equation}

Now we consider the case where $\langle \tilde \vh_i,  \tilde \vu_\tokenx \rangle = 0$. Again, by \eqref{eq:input_embedding_decomposition_rope} and the orthonormal assumption of token embeddings, 
we have 
\begin{align*}
    \langle \Tilde \vh_i, \Tilde \vu_{\bar \tokenx} \rangle = \sum_{v \in \vocab\backslash \{ \tokenx \}} \langle \Tilde \vh_i, \Tilde \vu_v \rangle =  \sum_{v \in \vocab\backslash \{ \tokenx \}} \lambda_{i,v} \geq \sum_{v \in \vocab\backslash \{ \tokenx \}} \lambda_{i,v}^2 = 1.
\end{align*}

Note that for any $j \in \{2, \ldots, i\}$,
\begin{align*}
    \langle \vp_{i-j-T}^{\widetilde{\te}}, \vp_{0}^{\widetilde{\te}}  \rangle  =& \cos ((i-j-T)\omega^{d/2}) \\ =&  \cos \left(\frac{T-(i-j)}{M}\right) \\ <& \cos \left(\frac{T-(i-1)}{M}\right) \\ =& \langle \vp_{i-1-T}^{\widetilde{\te}}, \vp_{0}^{\widetilde{\te}}  \rangle,
\end{align*}
where the inequality holds due to $M \geq T$ and the cosine function decreases strictly in $[0,1]$.

Then, we can define 
$\xi = \max_{1 \leq j < i \leq T} \frac{2 d_\pe}{\langle \vp_{i-1-T}^{\widetilde{\te}}, \vp_{0}^{\widetilde{\te}}  \rangle - \langle \vp_{i-j-T}^{\widetilde{\te}}, \vp_{0}^{\widetilde{\te}}  \rangle }$, and thus
\begin{align*}
    \xi \left( \langle \vp_{i-1-T}^{\widetilde{\te}}, \vp_{0}^{\widetilde{\te}}  \rangle - \langle \vp_{i-j-T}^{\widetilde{\te}}, \vp_{0}^{\widetilde{\te}}  \rangle\right) \geq   2d_\pe, \quad \forall 1 \leq j < i \leq T.
\end{align*}
Note that for any $j \in \{2, \ldots, T\}$, we have
\begin{align*}
     &\langle \vq_i, \vk_{1} \rangle - \langle \vq_i, \vk_{j} \rangle \\ =& \eta \left(  \xi  \langle \tilde \vu_{\bar \tokenx}, \tilde \vh_i \rangle \left(\langle \vp_{i-1-T}^{\widetilde{\te}}, \vp_{0}^{\widetilde{\te}}  \rangle - \langle \vp_{i-j-T}^{\widetilde{\te}}, \vp_{0}^{\widetilde{\te}}  \rangle\right)  - \left( \langle \bar \vp_i, \bar \vp_{j+\ell}  \rangle - \langle \bar \vp_i, \bar \vp_{1+\ell}  \rangle \right) \right) 
     \\ \geq&  \eta \left(  2d_\pe  - d_\pe  \right) \\ =& \eta d_\pe.
\end{align*}
Therefore, 
\begin{equation}
\label{eq:attn_first_layer_sink_rope}
    s_{i,1} = \frac{ \exp(\langle \vq_i, \vk_{1} \rangle)}{\sum_{j \in [i]} \exp(\langle \vq_i, \vk_{j} \rangle)} \geq \frac{\exp(\eta d_\pe)}{\exp(\eta d_\pe) + (i-1)}.
\end{equation}
By choosing a sufficiently large $\eta$, the lower bound of \eqref{eq:attn_first_layer_pos_rope} and \eqref{eq:attn_first_layer_sink_rope} will both exceed $1-\varepsilon$ and thus the proof is complete.
\end{proof}
\section{Experiment Details}

\subsection{Dataset}

\begin{table}[ht]
  \centering
  \caption{ProsQA statistics. Numbers are averaged over problem instances.}
  \label{tab:data‑stats}
  \begin{tabular}{@{}lcccc@{}}
    \toprule
    & \#Problems & $\lvert V \rvert$ & $\lvert E \rvert$  & Sol. Len.\\
    \midrule
    Train &  14785 & 22.8 & 36.5 & 3.5 \\
    Val   &  257 & 22.7 & 36.3 & 3.5 \\
    Test  &  419 & 22.7 & 36.0 & 3.5 \\
    \bottomrule
  \end{tabular}
\end{table}

The statistics of the ProsQA dataset is shown in \autoref{tab:data‑stats}.

\subsection{Stability of the Representation}
\label{sec:stability}

\begin{table}[ht]
  \vspace{-1em}
  \centering
  \small
  \caption{The inner products between $i$-th continuous thoughts and nodes (3 runs with different random seeds).}
    
  \label{tab:inner_product_seed}
  \begin{tabular}{@{}lcccc@{}}
    \toprule
                     & Step 1 & Step 2 & Step 3 & Step 4 \\ \midrule
    Not Reachable    & $-0.37$,$-0.25$,$-0.33$ & $-0.26$,$-0.04$,$-0.14$ & $-0.09$,$-0.01$,$0.02$ & $-0.25$,$-0.23$,$-0.27$ \\
    Reachable        & $3.59$,$3.62$,$3.71$ & $1.55$,$1.42$,$1.37$ & $0.80$,$0.77$,$0.62$ & $0.61$,$0.66$,$0.53$ \\
    \quad–Frontier   & $5.09$,$5.13$,$5.38$ & $2.69$,$2.45$,$2.63$ & $2.11$,$1.95$,$2.01$ & $2.27$,$2.12$,$2.29$ \\
    \quad–Optimal    & $6.41$,$6.52$,$6.84$ & $4.78$,$4.67$,$5.11$ & $6.00$,$5.44$,$6.43$ & $9.48$,$8.98$,$9.58$ \\
    \bottomrule
  \end{tabular}
\end{table}

To test whether \coconut{} can always learn the desired superpositional search behavior, we conduct multiple experiments with 3 random seeds. We report the mean inner products between continuous thoughts and each node group in \Cref{tab:inner_product_seed}, following the setting in \Cref{fig:inner_product}. The results are consistent across multiple runs.

\subsection{Computing Resources}
Each run of \coconut{} takes about 24 hours on two Nvidia A100 80GB GPUs.

\end{document}